\def\BibTeX{{\rm B\kern-.05em{\sc i\kern-.025em b}\kern-.08em
    T\kern-.1667em\lower.7ex\hbox{E}\kern-.125emX}}
\pgfplotsset{compat=newest}
\def\wt{\widetilde}
\def\diag{\hbox{diag}}
\def\wh{\widehat}
\def\AIC{\hbox{AIC}}
\def\BIC{\hbox{BIC}}
\newcommand{\Appendix}
{
\def\thesection{Appendix~\Alph{section}}
\def\thesubsection{A.\arabic{subsection}}
}
\def\diag{\hbox{diag}}
\def\bias{\hbox{bias}}
\def\Siuu{\boldSigma_{i,uu}}
\def\dfrac#1#2{{\displaystyle{#1\over#2}}}
\def\VS{{\vskip 3mm\noindent}}
\def\boxit#1{\vbox{\hrule\hbox{\vrule\kern6pt
          \vbox{\kern6pt#1\kern6pt}\kern6pt\vrule}\hrule}}
\def\refhg{\hangindent=20pt\hangafter=1}
\def\refmark{\par\vskip 2mm\noindent\refhg}
\def\naive{\hbox{naive}}
\def\itemitem{\par\indent \hangindent2\pahttprindent \textindent}
\def\var{\hbox{var}}
\def\cov{\hbox{cov}}
\def\corr{\hbox{corr}}
\def\trace{\hbox{trace}}
\def\refhg{\hangindent=20pt\hangafter=1}
\def\refmark{\par\vskip 2mm\noindent\refhg}
\def\Normal{\hbox{Normal}}
\def\povr{\buildrel p\over\longrightarrow}
\def\ccdot{{\bullet}}
\def\bse{\begin{eqnarray*}}
\def\ese{\end{eqnarray*}}
\def\be{\begin{eqnarray}}
\def\ee{\end{eqnarray}}
\def\bq{\begin{equation}}
\def\eq{\end{equation}}
\def\bse{\begin{eqnarray*}}
\def\ese{\end{eqnarray*}}
\def\pr{\hbox{pr}}
\def\CV{\hbox{CV}}
\def\wh{\widehat}
\def\T{^{\rm T}}
\def\myalpha{{\cal A}}
\def\th{^{th}}
\newcommand{\corb}[1]{\textcolor{blue}{#1}}
\newcommand{\corred}[1]{\textcolor{black}{#1}}
\newcommand{\corblue}[1]{\textcolor{black}{#1}}
\newcommand{\jc}[1]{\textcolor{black}{#1}}
\newcommand{\hg}[1]{\textcolor{black}{#1}}
\newcommand{\cm}[1]{\textcolor{black}{#1}}
\newcommand{\jct}[1]{\textcolor{black}{#1}}
\newcommand{\hgt}[1]{\textcolor{black}{#1}}
\newcommand{\cmt}[1]{\textcolor{black}{#1}}
\newcommand{\po}[1]{\textcolor{black}{#1}} 
\renewcommand{\baselinestretch}{1} 
\newcommand{\bbR}{\mathbb{R}}
\newcommand{\bbX}{\mathbb{X}}
\newcommand{\bbN}{\mathbb{N}}
\newcommand{\bbE}{\mathbb{E}}
\newcommand{\bbF}{\mathbb{F}}
\newcommand{\bbS}{\mathbb{S}}
\newcommand{\bbK}{\mathbb{K}}
\newcommand{\bbC}{\mathbb{C}}
\newcommand{\bbJ}{\mathbb{J}}
\newcommand{\bbI}{\mathbb{I}}
\newcommand{\bbP}{\mathbb{P}}
\newcommand{\bR}{\mathbf{R}}
\newcommand{\bD}{\mathbf{D}}
\newcommand{\bI}{\mathbf{I}}
\newcommand{\bL}{\mathbf{L}}
\newcommand{\bG}{\mathbf{G}}
\newcommand{\bW}{\mathbf{W}}
\newcommand{\bP}{\mathbf{P}}
\newcommand{\bU}{\mathbf{U}}
\newcommand{\bC}{\mathbf{C}}
\newcommand{\bA}{\mathbf{A}}
\newcommand{\bB}{\mathbf{B}}
\newcommand{\bE}{\mathbf{E}}
\newcommand{\bF}{\mathbf{F}}
\newcommand{\bK}{\mathbf{K}}
\newcommand{\bM}{\mathbf{M}}
\newcommand{\bN}{\mathbf{N}}
\newcommand{\bJ}{\mathbf{J}}
\newcommand{\bH}{\mathbf{H}}
\newcommand{\bQ}{\mathbf{Q}}
\newcommand{\bS}{\mathbf{S}}
\newcommand{\bV}{\mathbf{V}}
\newcommand{\bX}{\mathbf{X}}
\newcommand{\bY}{\mathbf{Y}}
\newcommand{\bZ}{\mathbf{Z}}
\newcommand{\bh}{\mathbf{h}}
\newcommand{\bx}{\mathbf{x}}
\newcommand{\by}{\mathbf{y}}
\newcommand{\bv}{\mathbf{v}}
\newcommand{\bz}{\mathbf{z}}
\newcommand{\bs}{\mathbf{s}}
\newcommand{\ba}{\mathbf{a}}
\newcommand{\bb}{\mathbf{b}}
\newcommand{\bo}{\mathbf{o}}
\newcommand{\bc}{\mathbf{c}}
\newcommand{\bd}{\mathbf{d}}
\newcommand{\bbe}{\mathbf{e}}
\newcommand{\bff}{\mathbf{f}}
\newcommand{\bqq}{\mathbf{q}}
\newcommand{\bve}{\mathbf{e}}
\newcommand{\br}{\mathbf{r}}
\newcommand{\bu}{\mathbf{u}}
\newcommand{\bw}{\mathbf{w}}
\newcommand{\bg}{\mathbf{g}}
\newcommand{\bn}{\mathbf{n}}
\newcommand{\bk}{\mathbf{k}}
\newcommand{\bt}{\mathbf{t}}
\newcommand{\bbf}{\mathbf{f}}
\newcommand{\cS}{\cal{S}}
\newcommand{\bmu}{\boldsymbol{\mu}}
\newcommand{\bxi}{\boldsymbol{\xi}}
\newcommand{\bsigma}{\boldsymbol{\sigma}}
\newcommand{\bgamma}{\boldsymbol{\gamma}}
\newcommand{\btau}{\boldsymbol{\tau}}
\newcommand{\bneta}{\boldsymbol{\eta}}
\newcommand{\brho}{\boldsymbol{\rho}}
\newcommand{\blambda}{\boldsymbol{\lambda}}
\newcommand{\bdelta}{\mathbf{\delta}}
\newcommand{\btheta}{\boldsymbol{\theta}}
\newcommand{\bvartheta}{\boldsymbol{\vartheta}}
\newcommand{\bpsi}{\boldsymbol{\psi}}
\newcommand{\bphi}{\boldsymbol{\phi}}
\newcommand{\bepsilon}{\boldsymbol{\epsilon}}
\newcommand{\bvarepsilon}{\boldsymbol{\varepsilon}}
\newcommand{\balpha}{\boldsymbol{\alpha}}
\newcommand{\bbeta}{\boldsymbol{\beta}}
\newcommand{\bSigma}{\boldsymbol{\Sigma}}
\newcommand{\bLambda}{\boldsymbol{\Lambda}}
\newcommand{\bOmega}{\boldsymbol{\Omega}}
\newcommand{\0}{\mathbf{0}}
\newcommand{\1}{\mathbf{1}}
\newcommand{\binfty}{\boldsymbol{\infty}}
\newcommand{\E}{\mbox{E}}
\newcommand{\tc}[2]{\textcolor{#1}{#2}}
\def\scrU{\mathscr{U}}
\newcommand{\PP}{\mathbb{P}}
\newcommand{\idxset}{\Lambda}
\newcommand{\mcA}{{\mathcal A}}
\newcommand{\mcB}{{\mathcal B}}
\newcommand{\mcC}{{\mathcal C}}
\newcommand{\mcD}{{\mathcal D}}
\newcommand{\mcE}{{\mathcal E}}
\newcommand{\mcF}{\mathcal{F}}
\newcommand{\mcI}{{\mathcal I}}
\newcommand{\mcJ}{{\mathcal J}}
\newcommand{\mcK}{{\mathcal K}}
\newcommand{\mcL}{{\mathcal L}}
\newcommand{\mcM}{{\mathcal M}}
\newcommand{\mcN}{{\mathcal N}}
\newcommand{\mcO}{{\mathcal O}}
\newcommand{\mcP}{{\mathcal P}}
\newcommand{\mcR}{{\mathcal R}}
\newcommand{\mcQ}{{\mathcal Q}}
\newcommand{\mcS}{\mathcal{S}}
\newcommand{\mcT}{{\mathcal T}}
\newcommand{\mcU}{\mathcal{U}}
\newcommand{\ii}{\mathbf{i}}
\newcommand{\jj}{\mathbf{j}}
\newcommand{\pp}{\mathbf{p}}
\newcommand{\rr}{\mathbf{r}}
\newcommand{\mm}{\mathbf{m}}
\newcommand{\qq}{\mathbf{q}}
\newcommand{\UU}{\mathbf{U}}
\newcommand{\FF}{\mathbf{F}}
\newcommand{\aalpha}{\boldsymbol{\alpha}}
\newcommand{\rrho}{\boldsymbol{\rho}}
\newcommand{\ttheta}{\boldsymbol{y}}
\newcommand{\oone}{\boldsymbol{1}}
\newcommand{\Nset}{\mathbb{N}_0}
\newcommand{\cset}{{\mathbb C}}
\newcommand{\rset}{{\mathbb R}}
\newcommand{\nset}{{\mathbb N}}
\newcommand{\bbNset}{{\mathbb N}}
\newcommand{\qset}{{\mathbb Q}}
\newcommand{\pset}{{\mathbb P}}
\newcommand{\Pol}{\mathbb{P}}
\newcommand{\eset}[1]{{\mathbb E} \left[ #1 \right] }
\newcommand{\Grad}{\nabla}
\newcommand{\ssy}{\scriptscriptstyle}
\newcommand{\dist}{\operatorname{dist}}
\newcommand{\KL}{Karhunen--\Loeve }
\newcommand{\lv}{w}
\def\scrG{\mathscr{G}}
\newcommand{\Real}{\mathop{\text{\rm Re}}}
\newcommand{\Imag}{\mathop{\text{\rm Im}}}
\newcommand{\bno}{n}
\newcommand{\func}{u}
\DeclareMathOperator*{\esssup}{ess\,sup}
\DeclareMathOperator*{\essinf}{ess\,inf}
\newcommand{\sJ}[1]{
\begin{bmatrix*}[r]
  \bJ^{#1}_R   & -\bJ^{#1}_I  \\
  \bJ^{#1}_{I}  & \bJ^{#1}_R   \\
\end{bmatrix*}
}
\newcommand{\szv}[1]{
\begin{bmatrix}
  \bz^{#1}_{R} \\
  \bz^{#1}_{I} \\
\end{bmatrix}
}
\newcommand{\sfv}[1]{
\begin{bmatrix}
  \bbf^{#1}_{R} \\
  \bbf^{#1}_{I} \\
\end{bmatrix}
}
\newcommand{\sJflip}[1]{
\begin{bmatrix*}[r]
  -\bJ^{#1}_R   & \bJ^{#1}_I  \\
  \bJ^{#1}_{I}  & \bJ^{#1}_R   \\
\end{bmatrix*}
}
\newcommand{\szvflip}[1]{
\begin{bmatrix*}[r]
  -\bz^{#1}_{I} \\
  \bz^{#1}_{R} \\
\end{bmatrix*}
}
\newcommand{\sfvflip}[1]{
\begin{bmatrix*}[r]
  -\bbf^{#1}_{I} \\
  \bbf^{#1}_{I} \\
\end{bmatrix*}
}
\newcommand{\BallTaylor}{
\left(\bx_{0},
  \left[
    \begin{array}{c}
  \bqq \\
  \0
  \end{array}
  \right]
+ t
\left[\begin{array}{c}
  \bv_{R} \\
  \bv_{I}
  \end{array}
  \right]
  \right)
}
\def\sD{\mathcal{D}}
\def\sN{\mathcal{N}}
\def\sC{\mathcal{C}}
\def\R{\Bbb{R}}
\newcommand{\verteq}[0]{\begin{turn}{90} $=$\end{turn}}
\definecolor{darkgreen}{rgb}{0, 0.6, 0}
\definecolor{airforceblue}{rgb}{0.36, 0.54, 0.66}
\definecolor{applegreen}{rgb}{0.55, 0.71, 0.0}
\definecolor{asparagus}{rgb}{0.53, 0.66, 0.42}
\definecolor{cadetblue}{rgb}{0.37, 0.62, 0.63}
\definecolor{cambridgeblue}{rgb}{0.64, 0.76, 0.68}
\definecolor{olivine}{rgb}{0.6, 0.73, 0.45}
\definecolor{rufous}{rgb}{0.66, 0.11, 0.03}
\definecolor{sangria}{rgb}{0.57, 0.0, 0.04}
\definecolor{neworange}{rgb}{1, 0.64, 0}
\definecolor{flowblue}{rgb}{0.4471,    0.6235,    0.8118}
\definecolor{darkorange}{RGB}{255,140,0}
\definecolor{blueish}{rgb}{0.1176, 0.5647, 1.0000}
\definecolor{babyblue}{RGB}{153,204,255}
\definecolor{maroon}{RGB}{128,0,0}
\newcommand{\supess}{\mbox{ess} \operatornamewithlimits{sup}}
\newcommand*{\ackname}{Acknowledgements}
\newtheorem{theorem}{Theorem}[section]
\theoremstyle{remark}
\newtheorem{rem}{Remark}
\tikzstyle{block} = [draw, shade, drop shadow, rounded corners=1ex,
\tikzstyle{newblock} = [draw, shade, drop shadow,rounded
\tikzstyle{obs} =
          \tikzstyle{proc} = [draw,rectangle,rounded
          \tikzstyle{emptyblock} = [draw,minimum width=2em]
          \def\radius{.7mm}
          \tikzstyle{branch}=[fill,shape=circle,minimum size=3pt,inner
            \tikzstyle{vecArrow} = [thick, blue,
\def\thm@space@setup{%
  \thm@preskip=0\topsep \thm@postskip=\thm@preskip
}
\newcommand{\argmax}{\operatornamewithlimits{argmax}}
\newcommand{\argmin}{\operatornamewithlimits{argmin}}
\DeclareMathOperator{\spn}{span}
\begin{document}
\title{deFOREST: Fusing Optical and Radar satellite data for Enhanced Sensing of Tree-cover loss
  \thanks{This article has been accepted for publication in IEEE Transactions on Geoscience and Remote Sensing. This is the author's version which has not been fully edited and
content may change prior to final publication. Citation information: 
J. E. Castrill\'{o}n-Cand\'{a}s, H. Gu, C. Meredith, Y. Li, X. Tang, P. Olofsson, and M. Kon,
\emph{deFOREST: Fusing Optical and Radar Satellite Data for Enhanced Sensing of Tree-Cover Loss},
\emph{IEEE Transactions on Geoscience and Remote Sensing}, vol. 64, Art. no. 4409213, 2026,
DOI: 10.1109/TGRS.2026.3689741.}
\thanks{© 2026 IEEE.  Personal use of this material is permitted.  Permission from IEEE must be obtained for all other uses, in any current or future media, including reprinting/republishing this material for advertising or promotional purposes, creating new collective works, for resale or redistribution to servers or lists, or reuse of any copyrighted component of this work in other works.}}

\author{Julio Enrique Castrill\'on-Cand\'as$^{1}$, Hanfeng Gu$^{2}$, Caleb Meredith$^{1}$, Yulin Li$^{1}$,  Xiaojing Tang$^{3}$, Pontus Olofsson$^{4}$, Mark Kon$^{1}$ 

\thanks{$^{1}$Department of Mathematics and Statistics, Boston University, Boston, USA. Emails: jcandas@bu,
cjmath@bu.edu, yulinli@bu.edu, mkon@bu.edu.}

\thanks{$^{2}$Department of Earth and Environment, Boston University, Boston, USA. Email: hanfengu@bu.edu.}

\thanks{$^{3}$College of Integrated Science \& Engineering, James Madison University,  Harrisonburg, VA, USA.  Email: tang3xx@jmu.edu  }
\thanks{$^{4}$ NASA Marshall Space Flight Center, Huntsville, AL, USA. Email: pontus.olofsson@nasa.gov}
}

\markboth{IEEE Transactions on Geoscience and Remote Sensing}{}




\maketitle


\begin{abstract} In this paper we develop a deforestation detection pipeline that incorporates optical and Synthetic Aperture Radar (SAR) data. A crucial component of the pipeline is the construction of anomaly maps of the optical data, which is done using the residual space of a discrete Karhunen-Lo\'{e}ve (KL) expansion. Anomalies are quantified using a concentration bound on the distribution of the residual components for the nominal state of the forest.  This bound does not require prior knowledge on the distribution of the data.  This is in contrast to statistical parametric methods that assume knowledge of the data distribution, an impractical assumption that is especially infeasible for high dimensional data such as ours. Once the optical anomaly maps are computed they are combined with SAR data, and the state of the forest is classified by using a Hidden Markov Model (HMM). We test our approach with Sentinel-1 (SAR) and Sentinel-2 (Optical) data on a $92\,km \times 92\,km$ region in the Amazon forest. The results show that both the hybrid optical-radar and optical only methods achieve high accuracy that is superior to the recent state-of-the-art hybrid method. Moreover, the hybrid method is significantly more robust in the case of sparse optical data that are common in highly cloudy regions.
\end{abstract}

\begin{IEEEkeywords}
Fusion, Discrete Karhunen-Lo\`{e}ve Expansions, Hidden Markov Models
\end{IEEEkeywords}


 \section{Introduction}

\IEEEPARstart{L}{and} use
and land cover changes caused by both natural and human
drivers have transformed the landscape globally \cite{winkler2021global}, 
and have significant impact on the surface energy balance,
hydrological cycle, and ecosystem services. Timely and accurate
monitoring of land use and land cover change provides crucial
information for the modeling of the Earth’s systems. Remote sensing
has been commonly used to map and monitor land use and land cover
change over large areas \cite{zhu2022remote}. Most of the past efforts
are retrospective, focusing on constructing a complete history of
changes during the past several decades (e.g. \cite{song2018global}).
While important, such products are often not updated frequently
enough to provide information on the most recent dynamics of land use
and land cover change. Certain events, such as illegal logging,
encroachment in protected areas, flooding, and other natural
disasters, require much faster responses. Analysis of massive data
sets and associated advances in Artificial Intelligence (AI) are
producing transformations in many aspects of society. Thanks to
the availability of vast remote sensing satellite datasets,
detection of tropical forest loss, in near
real-time, is now possible (e.g. \cite{tang2023near}). \po{Note that we use \textit{tree-cover loss} and \textit{forest loss} interchangeably throughout the paper; it is defined as complete canopy loss within a 10-m pixel with no immediate recovery. Further note that the lack of immediate recovery does not entail deforestation as the algorithm does not track the post-disturbance land cover. }

The density of cloud-free observations directly impacts the quality
and timeliness of a near real-time monitoring system
\cite{bullock2022timeliness}. This is problematic for regions
where the monitoring capability of optical sensors is hampered by the presence of clouds \cite{zhang2022global}. The amount of cloud
and cloud shadow missed by masking algorithms often results
in errors and so negatively affects the accuracy of monitoring. To
compensate for \po{such errors}, a monitoring algorithm would have to adapt to the
noise by increasing the number of consecutive observations of the
change signal for confirmation or adjusting the thresholds for change
detection -- \po{but} this would \po{in turn} affect the timeliness and accuracy of the
system. The use of Synthetic Aperture Radar (SAR) data (e.g.,
Sentinel-1 \cite{Torres2012}) can mitigate the data availability issue in cloudy
regions, as the SAR signal is not affected by clouds. Bullock et
al. \cite{bullock2022timeliness} and Reichie et al
\cite{reiche2021forest} have demonstrated the usefulness of Sentinel-1
data in monitoring \po{forest loss} in cloudy regions such as tropical
dry forests. However, SAR data is inherently noisy and is only useful
in tracking certain types of disturbances.

Combining data from optical and radar sensors is a logical way to
increase data density and improve the capacity for monitoring \po{forest loss} in near real-time. The abundance of freely-available
high-quality data collected by multiple remote sensing \po{missions} (e.g.,
Landsat, Sentinel-1, Sentinel-2 \cite{Drusch2012}, and NISAR), coupled
with advances in cloud computing technology and infrastructure,
offer a unique opportunity for monitoring 
using multi-sensor data fusion. \po{However, fusing optical and radar data is inherently complicated as the two sensing systems measure different signals.} Current data fusion approaches
for monitoring \po{the land surface} are often limited in
terms of geographic region, types of disturbance, and operational
readiness \cite{tang2023near,shang2022near,reiche2018improving}. 
\po{Hence, there is a need to develop multi-sensor approaches for monitoring land cover. While the approach presented in this paper is experimental and tested in a relatively small region, we aim to contribute to the development by introducing new advances in computational applied mathematics in remote sensing approaches.  The novel direction of the presented research is in the detection of anomalies in the combined optical and radar signal by interpreting the data as realizations of random vectors (or random fields) in a Bochner space \cite{Castrillon2025}  and constructing information function subspaces that are adapted to the nominal behavior. This approach involves tensor product representations, such as the Karhunen-Loève (KL)
expansion.} 


The KL expansion is strongly related to Principal
Component Analysis (PCA). PCA is widely used for building ML features by employing the principal components. However, most applications of PCA tend to ignore the probabilistic interpretation. In contrast, by using the KL
expansion of random fields (or random vectors for the discrete case), we conclude that it is not the principal components, but rather the residual eigenspace, that is important for detection and classification. This theory has been used to construct features for the classification of Alzheimer's disease with results that surpass state-of-the-art machine learning methods \cite{Castrillon2025}.


This approach is very different from the previous ones; KL expansions are in many senses the right tool for representing stochastic processes
and random fields, forming optimal tensor product representations. From its generality, large classes of processes and fields over complex geometrical domains can be represented with high accuracy \cite{Castrillon2025}. In contrast to current statistical approaches, from its core in functional analysis of tensor product expansions, our approach has many useful properties well suited to detection of hidden phenomena on complex domains.  In particular:
\begin{inparaenum}[i)]
\item Principled detection of anomalous global and local signals described as scalar
  or vector data \cite{Castrillon2022b}
\item Construction of non-parametric reliable hypothesis tests using strong concentration inequalities  conditioned only on covariance structure, with no other assumptions on distributions of data (important)
\item Filters that can process massive quantities of data with near-optimal performance.
\end{inparaenum}
Note that in \cite{Lakhina2004} a similar approach was developed using
the residual subspace of the principal components of PCA for the detection of network traffic anomalies. However, that was done in the context of PCA and not KL, thus a mathematical probabilistic rationale was not fully developed.

We tested our approach \hgt{for detecting forest loss} in the Amazon forest for a region of approximately $92\,km \times 91\,km$  and compared it to
the recent Fusion Near Real-Time (FNRT) algorithm \cite{Tang2023}, \hgt{the Global Land Analysis and Discovery (GLAD) Forest Alerts, and the RADD Forest Disturbance Alert. In this implementation, we define forest loss as a complete transfer of forest to non-forest within a 10-meter pixel with no immediate recovery. Sub-pixel canopy loss, such as selective logging, while still detectable by our algorithm after parameter tuning, is not a target for detection within the scope of this study.} 
In contrast to FNRT, our approach is highly robust and accurate for time frames that have sparse optical data, making it suitable for regions with persistent cloudy areas.

\section{Technical Approach and Methodology} \label{sec:tech}


We introduce a new approach for detecting subtle phenomena
in general datasets, including remote sensing data, by introducing a
novel mathematical framework. This is essential because current
advanced statistical methods often depend on assumptions about data
distributions that are either unrealistic or difficult to
validate. Our approach recognizes that, even when observations are
high-dimensional or diffuse, clear distinctions can emerge within the
appropriate stochastic function space. By constructing stochastic
tensor product maps, we can uncover
differences between phenomena. This is significant because previous
multimodality methods either assumed independence or imposed
artificial covariance structures. Our new theory leverages stochastic
functional analysis of tensor product representations,
utilizing the KL expansion. However, the mathematical 
presentation is simplified in this paper to the discrete case.



In \textbf{Figure} \ref{PR:Fig1} the pipeline for
detection of deforestation and cloud cover is shown. This pipeline
consists of the following modules:

\begin{itemize}
    
    \item \textbf{Training Dataset:} This data is used to build the anomaly 
    filter and machine learning features. It is the input into the KL Module.  
    The training data consist of the nominal state of the land cover,
    such as Enhanced Vegetation Index (EVI) measurements of the initial
    state of a forest.
    
    \item \textbf{Covariance Eigenstructure:} 
    A covariance matrix and the corresponding eigenpairs are
    constructed from the training dataset measurements. The
    training dataset is assumed to correspond to a number $M_T$ of time samples of the nominal
    state of the land cover.
    
    \item \textbf{Kahunen-Lo\`{e}ve:} A truncated KL expansion is constructed from the eigenpairs of the covariance matrix. However, only the eigenvalues and eigenvectors are needed.
    
    \item \textbf{Novel Optical Data:} From the novel testing dataset we can 
    now use the KL expansion to construct an anomaly map of the EVI data.

 \item \textbf{Anomaly Map:} From the eigenvectors the optical novel data is projected onto the truncated eigenspace and the residual map is constructed. The residual map effectively describes the anomaly intensity that can be used to detect changes in land cover. The anomaly corresponds to deviations from the initial state of the forest (training data). For example, if a particular pixel is sampled from an initial state, which is a forest, then the anomaly would be non-forest, while a reversal of these two roles would also be sought in the same way.

\item \textbf{Novel SAR Data:} \jc{The SAR dataset is filtered using a Bayesian approach both in space and time. This is described in detail in the supplementary material.}

\item \textbf{Finite State Machine} \jc{From the input data the state of the land cover can be detected. For the optical anomaly map  and the SAR data a Hidden Markov Model (HMM) with the Viterbi algorithm (See Chapter 12 for details in \cite{Ewens2005} and \cite{Kon2025slides}) is used to track the land cover changes in the forest. Note that the fusion of the optical and radar bands is performed by choosing an appropriate HMM model that incorporates the transition and emission probabilities of the optical and radar data.}

\end{itemize}

\begin{figure*}[htbp]
\centering
\begin{tikzpicture}[scale = 0.85, every node/.style={scale=0.65},>=latex']

    \node at (3,-3.8) {\includegraphics[scale = 0.17]{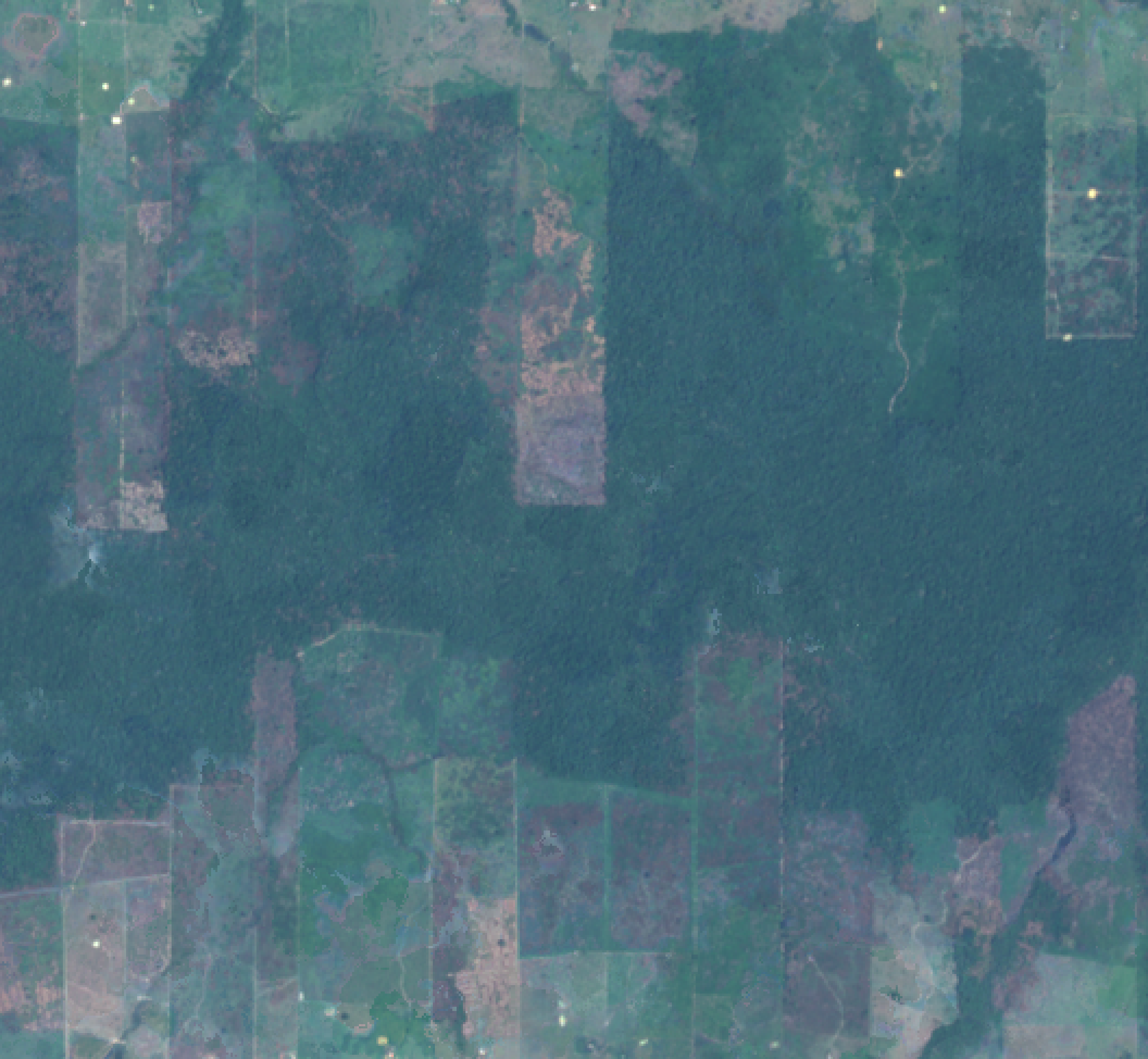}};     
    \node at (13,-2.4) {\includegraphics[scale = 0.25]{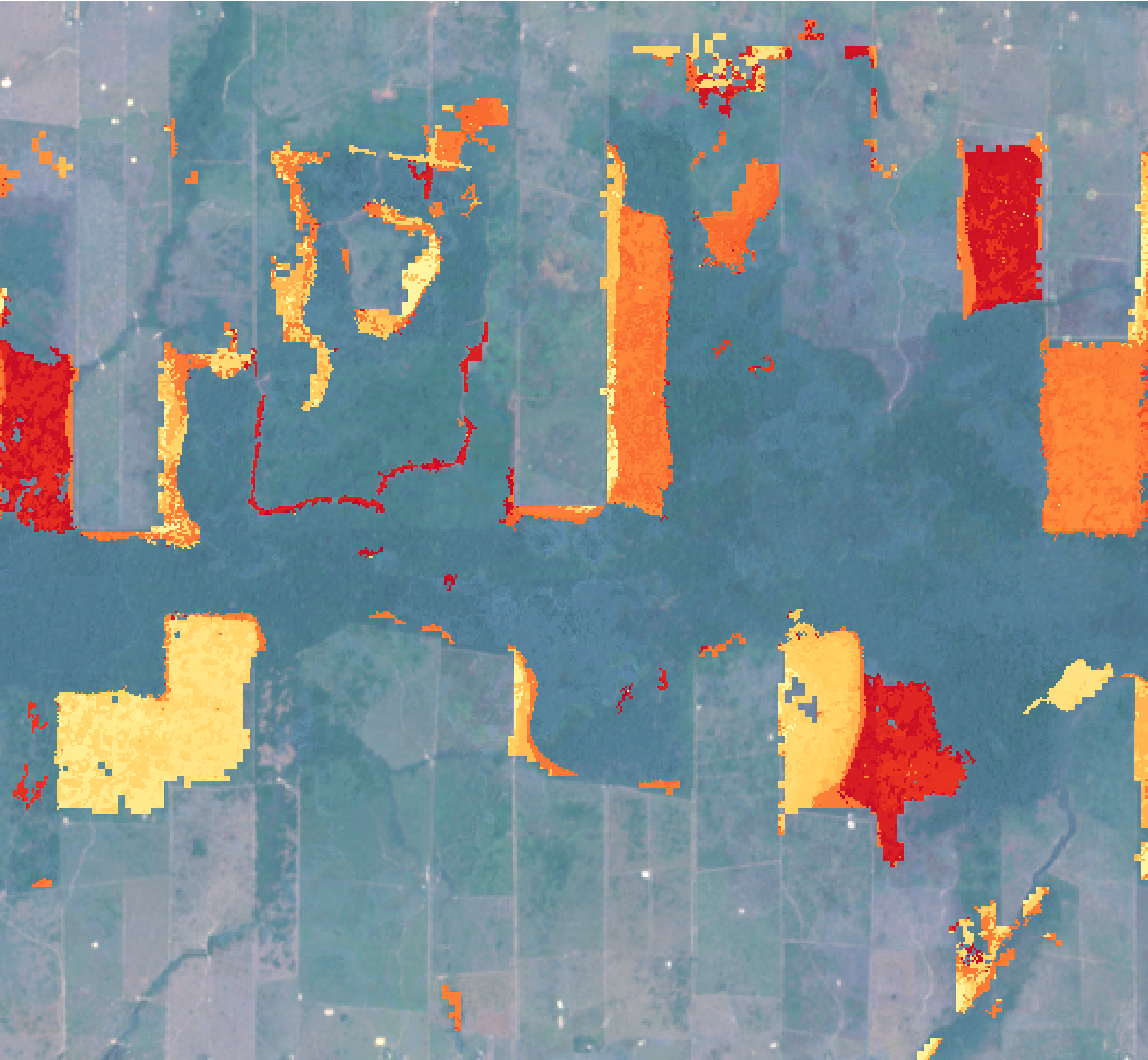}};

    \node at (3,-0.43) {\includegraphics[scale = 0.093]{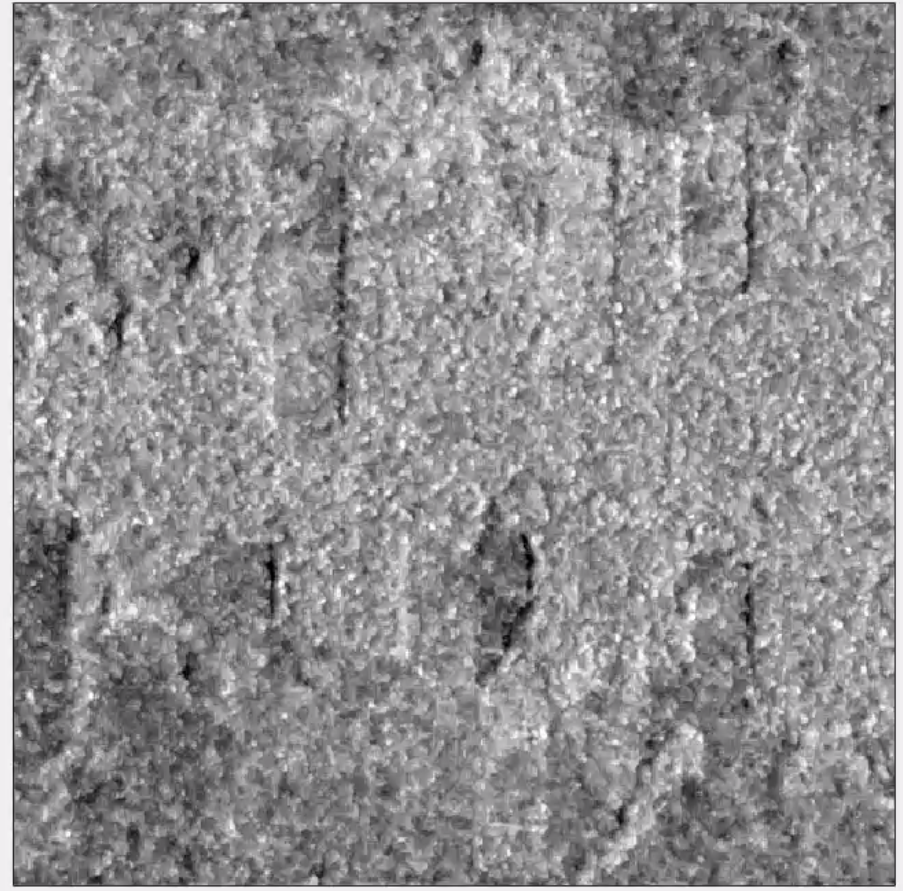}};

    \node at (3,2) {\Large Data};
    \node at (8,2) {\Large Filter};
    \node at (13,2) {\Large Classification};
           
    \draw[darkgreen!50,line width=0.5mm,dashed] (5.5,2) -- (5.5,-6);
    \draw[darkgreen!50,line width=0.5mm,dashed] (10.5,2) -- (10.5,-6);
     
    \node[block] at (3,-2.1) (sourcenew)
    { \begin{tabular}{c} 
     Novel Sentinel-2 \\
     Optical Data
       \end{tabular}};

        \node[block] at (3,1.25) (radar)
    { \begin{tabular}{c} 
     Novel Sentinel-1 \\
      SAR Data
    
       \end{tabular}};

       \node[block] at (3,-5.5) (source)
    { \begin{tabular}{c} 
     Sentinel-2 Optical \\ Training  Dataset
       \end{tabular}};
       
    \node[block] at (13,1.25) (measures)        
         {\begin{tabular}{c}
           Land Cover \\ State 
           \end{tabular}};
            
    \node[block] at (8,1.25) (filter)            
           {
           \begin{tabular}{c}
           Finite State \\ Machine
           \end{tabular} 
           };
           \draw[vecArrow] (filter.east) -- (measures.west);
           \draw[vecArrow] (radar.east) -- (filter.west);

            \node[block] at (8,-2.1) (spaces)            
           {\begin{tabular}{c} 
           Anomaly \\
           Map 
           \end{tabular}} ;
           \draw[vecArrow] (spaces.north) -- (filter.south);
         
            \node[block] at (8,-3.8) (expansion)            
           {\begin{tabular}{c} 
           Karhunen Lo\`{e}ve \\
           Expansion
            \end{tabular}} ;
           \draw[vecArrow] (expansion.north) -- (spaces.south);
            \node[block] at (8,-5.5) (spatio)            
           {\begin{tabular}{c} 
           Covariance 
           Eigenstructure \\
           $(\lambda_k,\phi_k)$           
           \end{tabular}};
           \draw[vecArrow] (spatio.north) -- (expansion.south);
           \draw[vecArrow] (source.east) -- (spatio.west);
           \draw[vecArrow] (sourcenew.east) -- (spaces.west);

\end{tikzpicture}
\caption{Schematic of monitoring land cover fusion pipeline for remote sensing data. This may include optical and radar data.}
\label{PR:Fig1}
\end{figure*}

\subsection{Discrete KL expansions}

The KL expansion is a popular method for representing stochastic
processes and random fields. The KL expansion can be
used for a statistical approach to the detection of anomalies with the
interesting characteristic that the particular distribution of the data is not assumed or needed beforehand.  Due to its
simplicity, we shall describe the discrete KL expansion instead of the
continuous version. The
theorems contained in this paper can be proved from simplified
arguments in our publication
\cite{Castrillon2022b}.

Suppose that $\bv$ is a random vector in $\R^n$,  and $\bC:= \eset{(\bv
- \eset{\bv}) (\bv - \eset{\bv})^{T}}$.  The $i^{th}$ component of
$\bv$ corresponds to a sensor value (this can be extended to multiple
sensor values such as multispectral and radar
data \cite{Castrillon2022b}) in the spatial map. The theory developed
in this paper will be strongly based on the following result.

\begin{theorem}
Let $\bv(\omega) = [v_1(\omega), \dots, v_n(\omega)] \in L^{2}(\Omega;\R^{n})$
be a random vector and covariance matrix
$\bC := \eset{(\bv - \eset{\bv})(\bv - \eset{\bv})^{T}}$. Suppose that $\bC$
is a positive definite matrix with eigenpairs $(\lambda_{k},\bphi_{k})$ such
that for $k = 1,\dots,n$ 
\begin{equation}
\bC\bphi_k = \lambda_{k} \bphi_k,
\end{equation}
and $\lambda_1 \geq \dots \geq \lambda_n$
then there exists a set of zero-mean random variables $Y_1(\omega), \dots Y_{n}(\omega)$
such that 
\begin{equation}
\bv(\omega) = \eset{\bv(\omega)} +  \sum_{k = 1}^{n} \sqrt{\lambda_k} \bphi_k Y_k(\omega),
\label{KLE}
\end{equation}
where $\eset{Y_k(\omega)Y_l(\omega)} = \delta[l-k]$.
\end{theorem}

\begin{rem}
Note that the eigenvectors of the discrete KL expansion in equation
\eqref{KLE} exactly correspond to the principal components. Furthermore, the
eigenvalues indicate the level of variability of the signal.
\end{rem}

A crucial characteristic of the KL expansion is the optimality properties.  
Suppose that we form the truncated KL expansion i.e. for any $m \leq n$
\begin{equation}
\bv_{m} = \eset{\bv} +  \sum_{k = 1}^{m} \sqrt{\lambda_k} \bphi_k Y_k.
\end{equation}
It can be shown that such representation is optimal.
\begin{theorem}
Suppose $\eset{\bv(\omega)} = \0$,  $\bpsi_{1},\dots,\bpsi_{n}$ is an orthonormal basis of
$\R^{n}$ and let $\bQ^m$ be a projection of $\bv(\omega)$ onto
$\bpsi_{1},\dots,\bpsi_{m}$, then
\begin{equation}
    \begin{split}
    \eset{\|\bv(\omega) -  \bv_{m}(\omega)\|^{2}} 
    &= \sum_{k=m+1}^{n} \lambda_{k} \\
    &\leq \eset{
    \|\bv(\omega) 
    - \bQ^m \bv(\omega) \|^{2}
    }
    \end{split}.
\end{equation}
\end{theorem}

\subsection{Discrete Karhunen-Lo\`{e}ve expansions application to anomaly detection}
Due to the optimality properties of the KL expansion, a strong 
hypothesis test for presence of the anomaly can be formed.  Suppose that
$\bv \in \R^{n}$ is a random vector that describes the nominal state
of the land cover. Now, let $\bu \in \R^{n}$ be a realization of the
optical data. We want to form the hypothesis test for the observation
$\bu$ to test if it is from the nominal state of the land cover or
from the anomalous:
\medskip
\begin{center}
\begin{tabular}{l l l l}
     $\mbox{H}_0$  :& $\bu = \bv$ (No anomaly)  &$\mbox{H}_{A}$:& $\bu \neq \bv$ (Anomaly). 
\end{tabular}
\end{center}
\medskip
Suppose that $\bP^m$ is the projection of $\bv$ onto the eigenvectors $\phi_1,\dots,\phi_m$. We can 
then form the residual vector
\begin{equation}
\br  = \bv - \bv_{m} = \bv - \eset{\bv} -  \bP^m(\bv - \eset{\bv}) 
= \sum_{k=m+1}^{n}  \sqrt{\lambda_k} \bphi_k Y_k.
\end{equation}
Let $\alpha$ be the significance level then it can be shown (See Theorem
\ref{appendix:thm3} in the supplementary material) that the distribution of the null hypothesis $\mbox{H}_0$ satisfies the following bound
\begin{equation}
\bbP\left(|\br[i]| \geq \alpha^{-\frac{1}{2}}
\left(\sum_{k = m + 1}^{n}\lambda_k \bphi_k[i]^2 \right)^{\frac{1}{2}} 
\right) \leq \alpha.
\label{residual}
\end{equation}
From this concentration bound the probability for the null Hypothesis
can be computed. If for a given observation the null hypothesis $H_0$ is true
then $\bu = \bv$ and we can form the vector $\bneta := (\bu - \eset{\bv})
- \bP^m(\bu - \eset{\bv}) = \bv - \bv_m = \br$. From
equation \eqref{residual} the distribution of $|\bneta|$ will be
concentrated around zero if the eigenvalues decay sufficiently rapidly and $m$ is
sufficiently large. In contrast, if $H_{A}$ is true then $\bneta
:= (\bu - \eset{\bv}) - \bP^m(\bu - \eset{\bv}) \neq \bv - \bv_m
= \br$. Thus the distribution of $|\bneta|$ will in general not be
controlled by the bound in equation \eqref{residual} 
(See \textbf{Figure} \ref{Separation}).

By forming the vector $\bneta := (\bu - \eset{\bv})
- \bP^m(\bu - \eset{\bv})$ the class distinctions between nominal and
anomalous data are more clearly distinguished. This makes it easier to
train classifiers such as Hidden Markov Models (HMM) and Support
Vector Machines (SVM) \cite{Scholkopf1997}.


\begin{figure}[htbp]

\begin{center}
\begin{tikzpicture}
\begin{scope}[scale = 0.5]
    \tikzset{
  my ball/.style={
    ball color=#1,
    circle,
    minimum size=5.5pt,
    inner sep=0pt,
    outer sep=0pt,
    shading=ball,
  }
}
\pgfmathsetseed{2025}
\foreach \i in {1,...,150} {
  \pgfmathsetmacro\r{rand*1.5}
  \pgfmathsetmacro\a{rand*360}
  \pgfmathsetmacro\x{\r*cos(\a)}
  \pgfmathsetmacro\y{\r*sin(\a)}
  \node[my ball=blue!40!white,scale = 1] at (\x,\y) {};
}
\pgfmathsetseed{100}
\foreach \j in {0,...,9} {
  \pgfmathsetmacro\R{2.8 + 0.08*\j}
  \pgfmathsetmacro\N{int(20 + 5*\j)}
  \foreach \i in {0,...,99} {
    \ifnum\i<\N
      \pgfmathsetmacro\angle{360/\N*\i}
      \pgfmathsetmacro\x{\R  * cos(\angle)}
      \pgfmathsetmacro\y{\R  * sin(\angle)}
      \node[my ball=orange!80!yellow,scale = 1] at (\x,\y) {};
    \fi
  }
}
\node at (0,-4.5) {(b)};
\end{scope}

\begin{scope}[scale = 0.5]
    \tikzset{
  my ball/.style={
    ball color=#1,
    circle,
    minimum size=5.5pt,
    inner sep=0pt,
    outer sep=0pt,
    shading=ball,
  }
}
\pgfmathsetseed{2025}
\foreach \i in {1,...,150} {

  \pgfmathsetmacro\r{rand*3}
  \pgfmathsetmacro\a{rand*360}
  \pgfmathsetmacro\x{\r*cos(\a)}
  \pgfmathsetmacro\y{\r*sin(\a)}
  \node[my ball=blue!40!white,scale = 1] at (\x-9,\y) {};
  \pgfmathsetmacro\r{rand*3}
  \pgfmathsetmacro\a{rand*360}
  \pgfmathsetmacro\x{(\r *cos(\a)}
  \pgfmathsetmacro\y{(\r *sin(\a)}
  \node[my ball=orange!80!yellow,scale = 1] at (\x-9,\y) {};
}
\node at (-9,-4.5) {(a)};
   \coordinate (O) at (-5.5,0);
    \coordinate (P) at (-4,0);
    \draw[->, >=latex, gray, line width=4 pt] (O) -- (P);
\end{scope}

\end{tikzpicture}

\end{center}
\caption{Illustrative example of the separation capabilities of the KL
expansion by applying the transformation to the nominal and anomalous
data.  (a) The blue balls represent the nominal behavior such as the
starting state of the land cover and orange balls the signal anomaly
(changes in the land cover state).  These observations points are
mixed with each other, which makes it hard to build a decision
surface. (b) After forming the residual $\bneta := (\bu - \eset{\bv})
- \bP^m(\bu - \eset{\bv})$, the blue balls correspond to coefficients
$r_k$ that are subject to the null hypothesis $H_0$ (nominal
class). Thus from equation \eqref{residual} the coefficients are
centered around the origin with high probability. Conversely, under
the alternative hypothesis $H_A$ (signal anomaly) the coefficients
$\tilde r_k$ (orange balls) are likely not to concentrate around
zero. This makes it easier to build a separation surface for the two
classes.}
\label{Separation}
\end{figure}

\begin{rem}
It is important to note that for this hypothesis test no assumptions
are made about the distribution of the data, which is practically impossible to
estimate for high
dimensional and/or complex problems. One of the key weaknesses of many modern parametric
statistical methods is the assumption that the distribution is known
(i.e. Normal, Poisson, etc). For high dimensional complex data this
assumption is not reasonable. Furthermore estimating the distribution
for high dimensional data is also intractable since this problem suffers
from the curse of dimensionality, meaning that the amount of data needed explodes
exponentially with respect to the dimension. In contrast, the approach we introduce
here only requires knowledge of the covariance function, which is a
significantly easier problem.
\end{rem}

\begin{rem}
\cm{An implicit assumption here is that our random vectors contain no missing data, which is not the case for cloudy regions.  Section \ref{Supp:missing} in the supplemental material explains how the contributions of this missing data are left out, as well as ways in which the missing data can be filled in instead to improve performance.}
\end{rem}



\subsection{Anomaly detection and land cover classification using optical data}

\jc{For simplicity, we show the construction of the HMM for scalar optical data. The HMM model can be easily extended to the multi-band case by appropriately defining the emission probabilities of the observations.  Many of the details of the HMM and the Viterbi algorithm in this section can be found in \cite{Ewens2005}. Furthermore, see the lecture slides in \cite{Kon2025slides} for an excellent exposition.}

Suppose we have a set of discrete time points $\tau_0,\tau_1, \dots \tau_s \in
[0,S]$, and the corresponding observations of the optical and SAR 
sensors $\bv(\tau_0), \dots,$ $\bv(\tau_s)$. The time interval
$[0,S]$ corresponds to the nominal behavior of the land cover. For
example, this would correspond to a time period where the state of the
forest does not change much. From these samples the covariance matrix
$\bC$ is obtained. Now, suppose we have a set of discrete time points
$t_{0},\dots,t_f \in [S, T_{\textrm{final}}]$ and the corresponding
observations of the optical sensor
$\bu(t_{0}), \dots,$ $\bu(t_f)$. From the eigenvectors
$\phi_1,\dots,\phi_M$ of the matrix $\bC$, the projection matrix
$\bP^{m}$ is formed for a fixed truncation parameter $m$. The
observation vectors $\bu(t_{0}), \dots,$ $\bu(t_f)$ can now be
converted to the new features $\bneta(t_k) := (\bu(t_k) - \eset{\bv})
- \bP^m(\bu(t_k) - \eset{\bv})$ for $k = 0,\dots,f$.  

We now present an example of the behavior of the features
$\bneta(t_k)$ on a series of EVI calculated based on Sentinel-2 data. Each EVI
image consists of $150 \times 150$ pixels at the resolution of $10 m$.
In \textbf{Figure} \ref{PR:Fig2} we show the evolution of the forest
with respect to time. Notice that scattered trees were removed from
the forest but grew back over time. The last image corresponds to a
cloudy day, where the cloud removal algorithm has trouble detecting
the clouds, with only a subset of them removed (black areas).

\textbf{Scalar anomaly detection:}
From 71 Sentinel-2 EVI images starting from day 1 up to day 3200, the
covariance matrix is computed and the projection
operators $\bP^m$ are constructed from the eigenvectors. The projection operator $\bP^m$ is then
applied to each of the test frames starting from day 3300
(corresponding to frame number 1 on \textbf{Figure} \ref{PR:Fig2}) and
an anomaly features $\bneta$ are
constructed. In \textbf{Figure} \ref{PR:Fig3} the anomaly sequence for
the pixel corresponding to the red square in
\textbf{Figure} \ref{PR:Fig2} is shown. From the anomaly sequence in
\textbf{Figure} \ref{PR:Fig3} we see that
the \po{forest loss} occurs around day 3484 but reduces to the nominal
level by day 3704. This is due to the regrowth of leaves from adjacent
trees.

\begin{figure*}[ht]
\centering
\begin{tikzpicture}[scale = 0.98, every node/.style={scale=0.98},>=latex']
     \node at (0,0) {\includegraphics[scale = 0.36, trim = 5cm 8cm 5cm 6.5cm, 
         clip]{./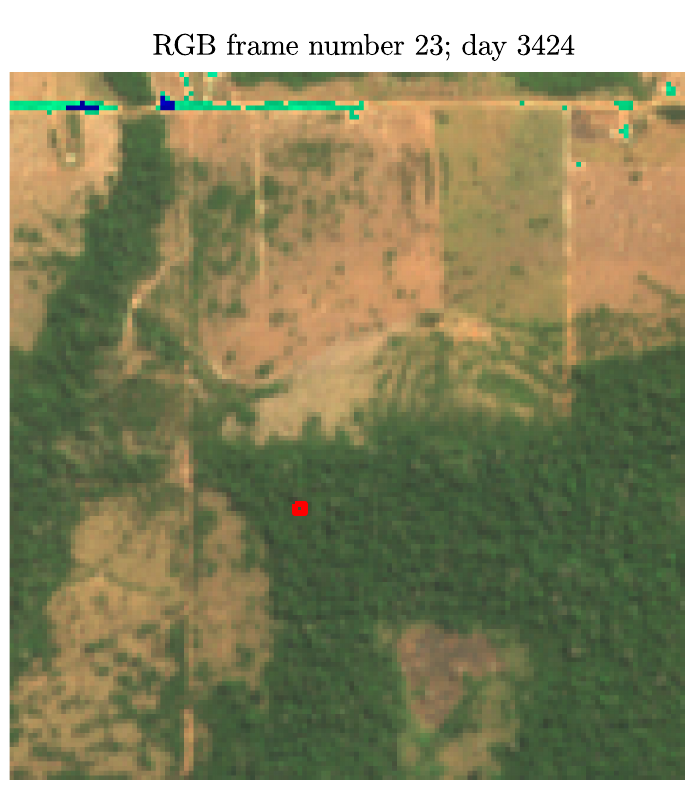}};
     \node at (4.1,0) {\includegraphics[scale = 0.36, trim = 5cm 8cm 5cm 6.5cm, 
         clip]{./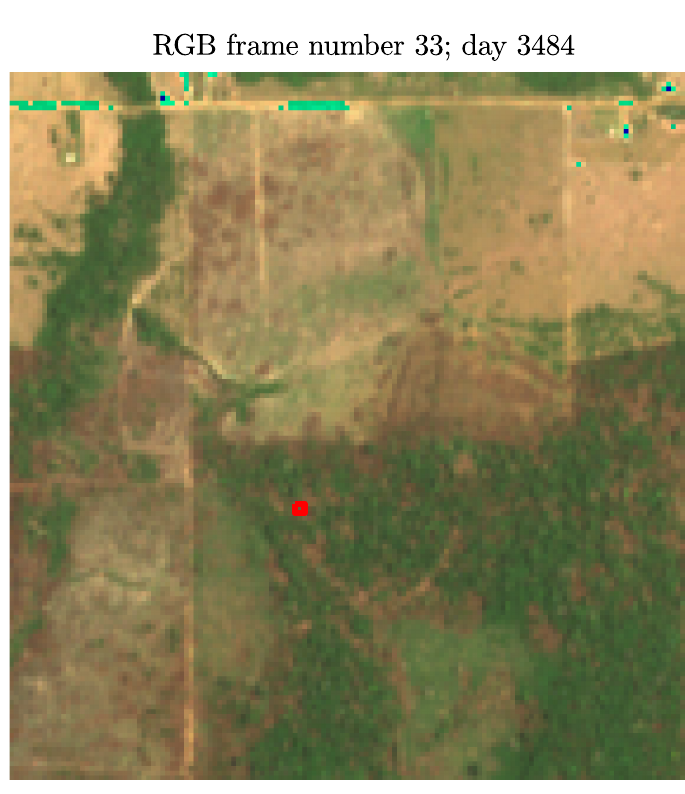}};
     \node at (8.25,0) {\includegraphics[scale = 0.35, trim = 5cm 8cm 5cm 6.5cm, 
         clip]{./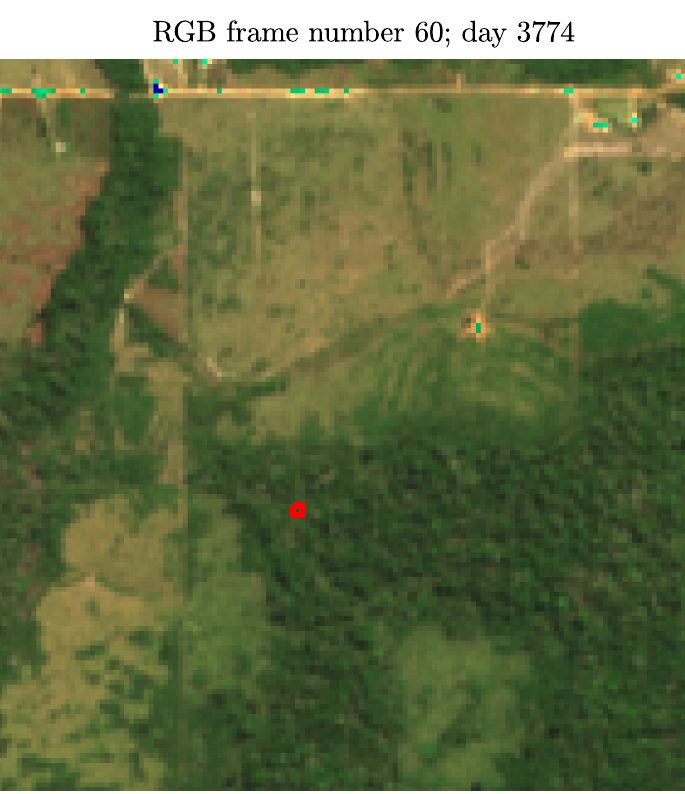}};
     \node at (12.35,0) {\includegraphics[scale = 0.35, trim = 5cm 8cm 5cm 6.5cm, 
         clip]{./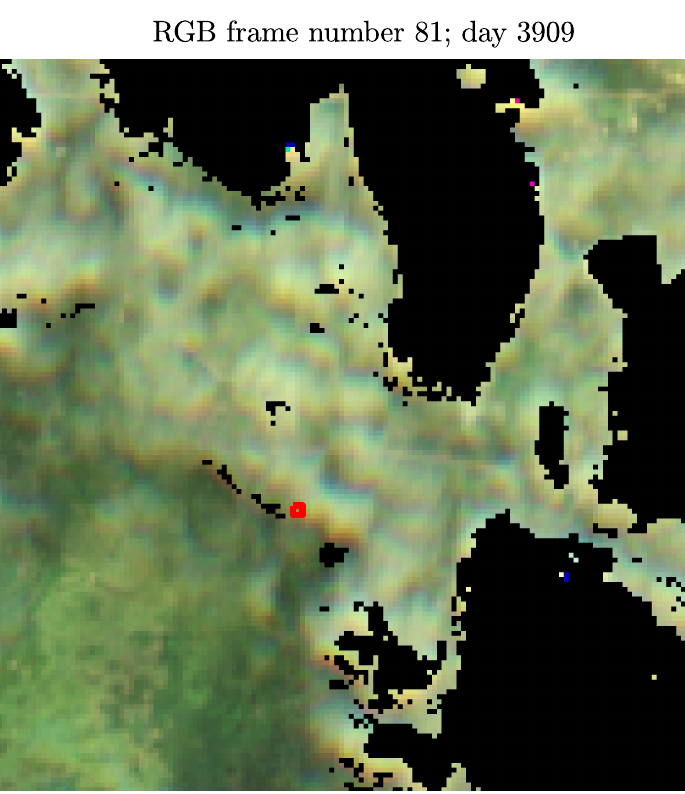}};
\end{tikzpicture}
\caption{\po{Forest loss in} Sentinel-2 data of
  the Brazilian Amazon. The data show a logging event and subsequent
  recovery of the forest.  The \po{anomaly detection} is applied with the goal of detecting
  the timing and location of the change. The feature information will
  be used to track the state of the forest. \po{The red box is the pixel for which the anomaly sequence is depicted in \textbf{Figure} \ref{PR:Fig3}.}}
\label{PR:Fig2}
\end{figure*}

\begin{figure}[htbp]
\centering
\begin{tikzpicture}[scale = 1.14,every node/.style={scale=1.14}]>=latex']
  \node at (0,0) {\includegraphics[scale = 0.45, trim = 2.20cm 14cm 2.25cm 7cm, clip]
                 {./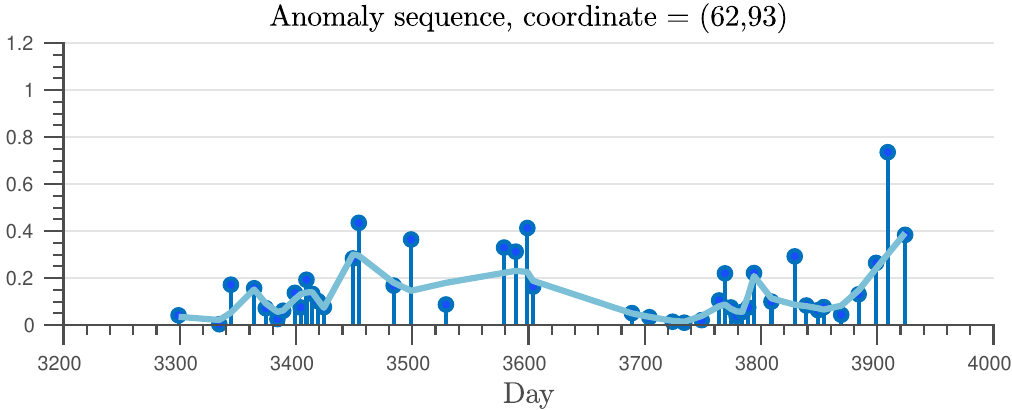}};
  \draw[color = red, line width = 1pt, fill=red] (2.90,0.4) circle (0.05cm);
  \draw[color = olive, line width = 1pt,fill=olive] (-0.72,-0.25) circle (0.05cm);
\end{tikzpicture}
\caption{Anomaly sequence for the red pixel
  in \textbf{Figure} \ref{PR:Fig2}. The projection operator $\bP^{m}$
  is applied spatially to each frame, with the anomaly quantified and
  plotted against time. A robust LOESS is performed on the sequence
  (blue line).  Forest loss is detected on day 3484, with
  the anomaly level increasing. The forest 
  recovers after the loss event as determined on day 3704. On day 3909 (red
  marker), a localized anomaly is caused by cloud screening (image
  for day 3909 in \textbf{Figure} \ref{PR:Fig2}).}
\label{PR:Fig3}
\end{figure}

\begin{rem}
It is important to note that although the anomaly sequence corresponds
to a single patch of land, the information contained in each anomaly
pixel has information of the surrounding land. This is due to the
covariance matrix in general being non-diagonal, containing
correlation terms among the pixels.
\end{rem}

\textbf{Finite state machine anomaly classification:}
From the temporal anomaly feature $\bneta(t_k)$ 
we can use these features for the detection of
evolving phenomena. For example, in \textbf{Figure}
\ref{PR:Fig3} on day 3909 we observe a sudden change in the anomaly sequence.
This implies that this is a spurious anomaly probably caused
by a cloud or a cloud shadow. Using information on the behavior of
the anomaly, we can classify the state of the land cover.

Let $\bgamma(t_{0}), \dots, \bgamma(t_f)$ be the underlying state of
the land cover of a single pixel at the discrete time sample $t_0,\dots,t_f$.  Using the observation features $\bneta(t_k)$ and possibly the data $\bu(t_k)$ for $t = t_0,\dots,t_f$ we can detect and classify the current state of the land cover using a Hidden Markov Model. We now show how the time-evolving features and can be used for detecting anomalies. Let $\Sigma:= \{ \gamma_1, \dots, \gamma_N \}$ be underlying state of the land cover e.g. \{ forest + no cloud, forest + cloud, bare ground + no cloud, bare ground + cloud\} and $P = \{p_{11}, \dots, p_{ij}, \dots, p_{NN}$ $\}$ a transition probability matrix. These are the probabilities that the land cover will change from one state to
another.

\jc{Given the anomaly sequence, we can form the visible state $\zeta(t_k) =
f(\bneta(t_k))$, where $f$ is the emission function. This function usually consists of a binary vector signal $\{0,1\}$ reflecting if $\bneta(t_k)$ are below or above a predefined threshold level.  Let $\pi(t_0)=\bbP(\gamma(t_0))$ be the initial probability distribution over the states. We have the Markov assumption that the probability depends only on the previous state: $\bbP(\gamma(t_k) \mid \gamma(t_{0}), \dots, \gamma(t_{k}))
= \bbP(\gamma(t_{k}) \mid \gamma(t_{k-1}))$, and the observations only
depend on the current state i.e. $\bbP(\zeta(t_{k}) \mid \gamma(t_{0}), \dots, \gamma(t_{f}), \zeta(t_{0}), \dots, \zeta(t_{f})) \bbP(\zeta(t_{k}) \mid \gamma(t_{k}))$. Now, given the observations, we want to estimate the most likely
sequence of the state of the forest}
\begin{equation}
\begin{split}
&(\bgamma^{*}(t_{0}), \dots, \bgamma^{*}(t_f)) 
= \argmax_{ \bgamma(t_{0}), \dots, \bgamma(t_f)}
\bbP(\zeta(t_{0}), \dots, \zeta(t_f) \\
&
\mid \bgamma(t_{0}), \dots, \bgamma(t_f) ) \\
&
=
\argmax_{ \bgamma(t_{0}), \dots, \bgamma(t_f)} \bbP(\bgamma(t_{0}), \dots, \bgamma(t_f) \mid \zeta(t_{0}), \dots, \zeta(t_f)) \\
&\bbP(\zeta(t_{0}), \dots, \zeta(t_f)).
\end{split}
\end{equation}











\begin{figure}[hbtp]
  \centering
\begin{tikzpicture}[xshift=-8cm,scale = 0.3,every node/.style={scale=0.3}]>=latex']
 \begin{scope}
    \path [
    mindmap,
    text = black,
    level 1 concept/.append style =
      {font=\Huge\bfseries, sibling angle=90, level distance=5.5cm, minimum size=4cm},
    level 2 concept/.append style =
      {font=\Huge\bfseries, sibling angle =90, level distance=5.5cm, minimum size=4cm}, 
    level 3 concept/.append style = 
    {font=\Huge\bfseries, sibling angle = 90,level distance=5.5cm, minimum size=4cm},
    level 4 concept/.append style = 
    {font=\Huge\bfseries, sibling angle = 90,level distance=5.5cm, minimum size=4cm},
    tex/.style     = {concept, ball color=orange!50!white,
      font=\Huge\bfseries},
    editors/.style = {concept, ball color=orange!50!white},
    systems/.style = {concept, ball color=babyblue!100!white}
    ]
        node [tex, minimum size=3cm, concept color=lightgray] at (0,0) {$\bgamma(t_{0})$} [clockwise from=90]
    child[concept color=lightgray, nodes={systems}]{
      node [text width=3cm,text centered]{$\zeta(t_{0})$}}
    child[concept color=lightgray, nodes={editors}]{
      node [text width=3cm,text centered]{$\bgamma(t_{1})$} 
           child[concept color=lightgray, nodes={systems}]{
           node [text width=3cm,text centered]{$\zeta(t_{1})$}}
           child[concept color=lightgray, nodes={editors}]{
             node [text width=3cm,text centered]{$\dots$} [clockwise from=90]
             child[concept color=lightgray, nodes={systems}]{
               node [text width=3cm,text centered]{$\dots$}}
               child[concept color=lightgray, nodes={editors}]{
             node [text width=3cm,text centered]{$\bgamma(t_k)$} [clockwise from=90]
             child[concept color=lightgray, nodes={systems}]{
               node [text width=3cm,text centered]{$\zeta(t_k)$}}
             }
           }
      };
    \end{scope}
 \end{tikzpicture}
 
  \begin{tikzpicture}
  \node at (0,0) {\includegraphics[scale = 1, trim = 0cm 0cm 5.1cm 0cm,
      clip]{./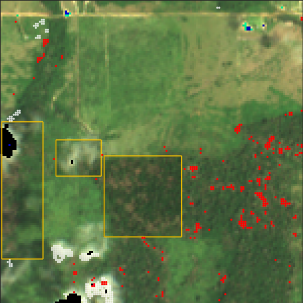}};

  \node at (4,0.5) {\includegraphics[scale = 1, trim = 5.5cm 1.23cm 2.2cm 0.18cm,
       clip]{./Images/plot_version7_knn_compressed.pdf}};

  \node at (4.85,-0.3) {\includegraphics[scale = 1, trim = 8.2cm 1.85cm 0cm 1.2cm,
      clip]{./Images/plot_version7_knn_compressed.pdf}};

  \node at (4.3,-2) {\includegraphics[scale = 1, trim = 6.5cm 0cm 1cm 4.25cm,
      clip]{./Images/plot_version7_knn_compressed.pdf}};
 \end{tikzpicture}

\caption{
  The state of the forest is tracked with optical observations.  This is achieved by
  applying a Hidden Markov Model to the $i^{th}$ pixel with the
  observation sequence $\zeta(t_k)=\bneta(i,t_k)$. The state of the
  land cover for the $i^{th}$ pixel is \{forest, cloud or shadow,
  loss\}.  A Hidden Markov Model and the Viterbi algorithm are
  used to classify the state of the forest $\gamma(t_k)$ at time
  $t_k$. The red pixels classify trees that have been cleared. The
  white pixels correspond to clouds or dark shadows. The black points
  correspond to a well-known cloud masking algorithm, which could not
  detect the light clouds and dark shadows. These are particularly
  difficult to detect.}  
  \label{Fig:Evolving} 
  \end{figure}
  
This optimization is, however, too expensive, since we would need to consider
all the possible state trajectories. In practice we use the Viterbi
algorithm to reduce the computational complexity. Let
$\bgamma^{\#}(t_{0}), \dots, \bgamma^{\#}(t_f)$ be likely sequence
given by the Viterbi algorithm. We can now classify the 
anomaly of the sequence. Given that we assume that the initial state
is a forest we are looking for persistent anomalies (bare ground + no
cloud). Thus from the sequence
$\bgamma^{\#}(t_{0}), \dots, \bgamma^{\#}(t_f)$ we are looking for
subsequences of bare ground + no cloud that are persistent. A
persistent parameter is defined in the code: Frames To Classify (FTC). The \po{forest loss}
(bare ground if the pixel started as forest)
is classified as positive at the location of the first
subsequence of length FTC. We classify this as \po{forest loss} at the end of the first
subsequence.

In \textbf{Figure} \ref{Fig:Evolving} we show how the time-evolving
 features can be used for classification of the land cover with
 $\Sigma$ := \{forest, cloud or shadow, loss\} in the Amazon
 forest using the HMM on each pixel separately. Suppose we only have
 observational data $\bu(t_0),\dots,\bu(t_f)$  consisting of EVI
 optical measurements from Sentinel 2. From the training data
 $\bv(\tau_0), \dots, \bv(\tau_s)$ we construct the projection
 operator $\bP^{m}$ and construct the anomaly sequence
 $\bneta(t_0),\dots,\bneta(t_f)$.

Now, let the emission function $f$ be a function such
 that output is 1 if the value of $\bneta(t)_k$ is greater than a
 threshold value.  Using the HMM and the Viterbi
 algorithm we obtain the likely sequence
 $\bgamma^{\#}(t_{0}), \dots, \bgamma^{\#}(t_f)$ and classify it.  The
 red pixels are classified as trees that have been cleared. The white pixels
 correspond to clouds or shadow. The black points correspond to results from an existing well-known cloud masking algorithm \cite{Skakun2022}. 
 Notice that the
 cloud masking algorithm was not able to detect light clouds or
 shadows. Our approach detected \po{forest loss} and simultaneously
 distinguished it from clouds and shadows.  

  \cm{The above-described method requires choosing application specific parameters, namely the transition and emission probabilities as well as the thresholds for mapping the data to binary vectors.  For the transition probabilities we first use a cloud detection algorithm to get the approximate percent of pixels covered by clouds.  Since observations are days apart we assume that cloud occurrences are independent in time, which means that, for example, a 5 percent average cloud coverage for the entire area gives us a roughly 5 percent probability of transitioning to a cloudy state from any state.  Combining this with the fact that transitions from forest to deforestation or vice versa are rare, happening 0-1 times for almost all pixels, we have that the probabilities of forest to forest, cloudy forest to forest, bare ground to bare ground, and cloudy bare ground to bare ground depend on the average cloud cover and based on our data should be close to 1.  For example, the probability of forest to forest is approximately 1 minus the average cloud cover minus the probability of forest to bare ground, with the latter a very small value that must be estimated.  After that is selected, forest to cloudy forest vs forest to cloudy bare ground can be split up based on average forest cover for the region using a forest mask.  Transition probabilities from the other states are similar.  Emission probabilities are harder to estimate and are thus calibrated by hand using the small region shown in \textbf{Figure} \ref{results:zoom}, which is about 0.3 percent of the entire region shown in \textbf{Figure} \ref{results:hybridmap}. Once these probabilities are selected, the HMM is run on this small region using all combinations of reasonable values for the threshold(s) and FTC values, and the datemaps are compared by hand to pick the best parameters.}

\subsection{Hybrid optical and SAR fusion land cover tracking}

\jc{The HMM model is now applied to the optical data from Sentinel-2 (EVI) and the SAR data from Sentinel-1. The sequence $\bu(t_0),\bu(t_1), \dots$ now consists of optical and radar data.} 
Three separate scenarios are tested: a) Optical-only using the anomaly sequence, b) SAR-only, and c) Hybrid method with optical (anomaly sequence) and  SAR data. The results will show that the hybrid approach is significantly better than the single-sensor approaches.

We test the tracking algorithm for detecting deforestation from March 26, 2020 to December 31, 2022, with data 
from both sensors. 
 This data will be split into two groups:

\begin{itemize}          
\item The training data $\bv(\tau_0), \dots, \bv(\tau_s)$ will consist of 71 Sentinel-2 EVI measurements from Sentinel-2 between  December 17, 2018 and March 21, 2020. These
measurements are used to construct the projection matrix $\bP^{m}$ and are then applied to the optical EVI sequence $\bu(t^o_0),\dots,\bu(t^o_f)$, and the anomaly sequence
$\bneta(t^o_0),\dots,\bneta(t^o_f)$ is obtained. The test data $\bu(t^o_0),$ $\dots,\bu(t^o_f)$ consist of 161 time samples between March 26, 2020 and December 26, 2022. Note that there is a changed
notation from $t_k$ to $t^o_k$ to indicate that this time sample
consists only of optical data.
\item The second group consists only of Sentinel-1 SAR measurements
$\bu(t^r_0),\dots,\bu(t^r_g)$. The full set of SAR observations consists of 234 samples 
between January 4, 2017 and December 28, 2022.
\cm{However, since Sentinel-1 was launched earlier than Sentinel-2 we will always have SAR data for the time span corresponding to the optical training data as well as the time span before that going back to the start of Sentinel-1 observation.  If we assume that the latter set of SAR data has a nontrivial impact on performance, possibly positive or negative, then the results from including this set of data would not account for how performance may differ for earlier/later validation data sets with different amounts of pre optical SAR data.  Because of this, we start the use of SAR data on December 25, 2018, the first day after the optical data is available, and assume others truncate their SAR data likewise.  After this truncation the second group contains 178 SAR observations.}
Since these measurements are noisy, 
a spatio-temporal Bayesian filter is applied. For simplicity we will refer to $\bu(t^r_0),\dots,\bu(t^r_g)$ as the
filtered data from the Bayesian method \cm{truncated with the start date of December 28, 2018}. Data measurements that are numerically low
indicate presence of bare ground (possibly with small amounts of grass). If the measurements are high this indicates
backscattering, and a structure such as a tree or human
construction is located at that pixel.
\end{itemize}
Given the optical anomaly sequence and the radar measurement data
$\bu(t^r_k)$ we can form the optical-radar state $\zeta(t_k) =
f(\bneta(t^o_k), \bu(t^r_k))$, where $f$ is the emission function. 

    In \textbf{Figure} \ref{results:zoom} the tracking of \po{forest loss} is shown for all three methods.  Due to clouds, such tracking is difficult. \jct{However, the hybrid method, which combines optical and SAR data, captures many forest-loss events.} 
The hybrid approach, which combines optical and SAR data, proved to be effective. In the supplement video, we demonstrate time-evolution tracking of the forest from SAR and optical satellite data as trees are removed. However, these results are for a small area ($5120 \, m \times 5120 \, m$, $512 \times 512$ pixels).  Notice that there is a delay of about 10 frames before the detection is confirmed. In the results section, we perform validation tests for \po{the study} area and compare the optical-only, SAR, and hybrid methods. There are significant advantages in using the hybrid method.





\section{Experiment and Discussion}
\subsection{Study Area }
\hg{To evaluate the performance of \po{deFOREST}, the optical, SAR and fusion algorithms were applied to detect \po{forest loss} in the Amazon rainforest. The study area is $92 \,km$ by $92 \,km$ over the Jacund\'{a} National Forest in Brazil, at the southern boundary of the Amazon forest (\textbf{Figure} \ref{Fig:Amazons}). Previous studies have shown that the humid tropics, such as in West Africa and Southeast Asia, have very few optical satellite observations because these areas have persistent cloud cover and shadows \cite{zhang2022global}. In addition to heavy precipitation, satellite passes in the tropics do not overlap, whereas most high latitudes are in the overlapping zone of multiple orbits. This study area is selected for two reasons: 1) it represents the climate and land cover of the humid tropics where the availability of optical remote sensing data is limited, and 2) as it is located at the southern edge of the humid tropics, it has more clear Sentinel-2 and Landsat observations than the area closer to the equator. While the fusion algorithm works \po{in regions with low data availability} like Gabon or Indonesia, the limited amount of clear optical remote sensing images in \po{such regions} makes it very difficult to collect reference data and validate the performance of the algorithm. \po{Our study area, on the other hand, has sufficient  Sentinel-2 data} to validate the result. In the following analysis, optical images are randomly removed from the test dataset to simulate  regions with fewer available clear observations. 
}

\begin{figure*}
    \centering
    \includegraphics[width=0.75\linewidth]{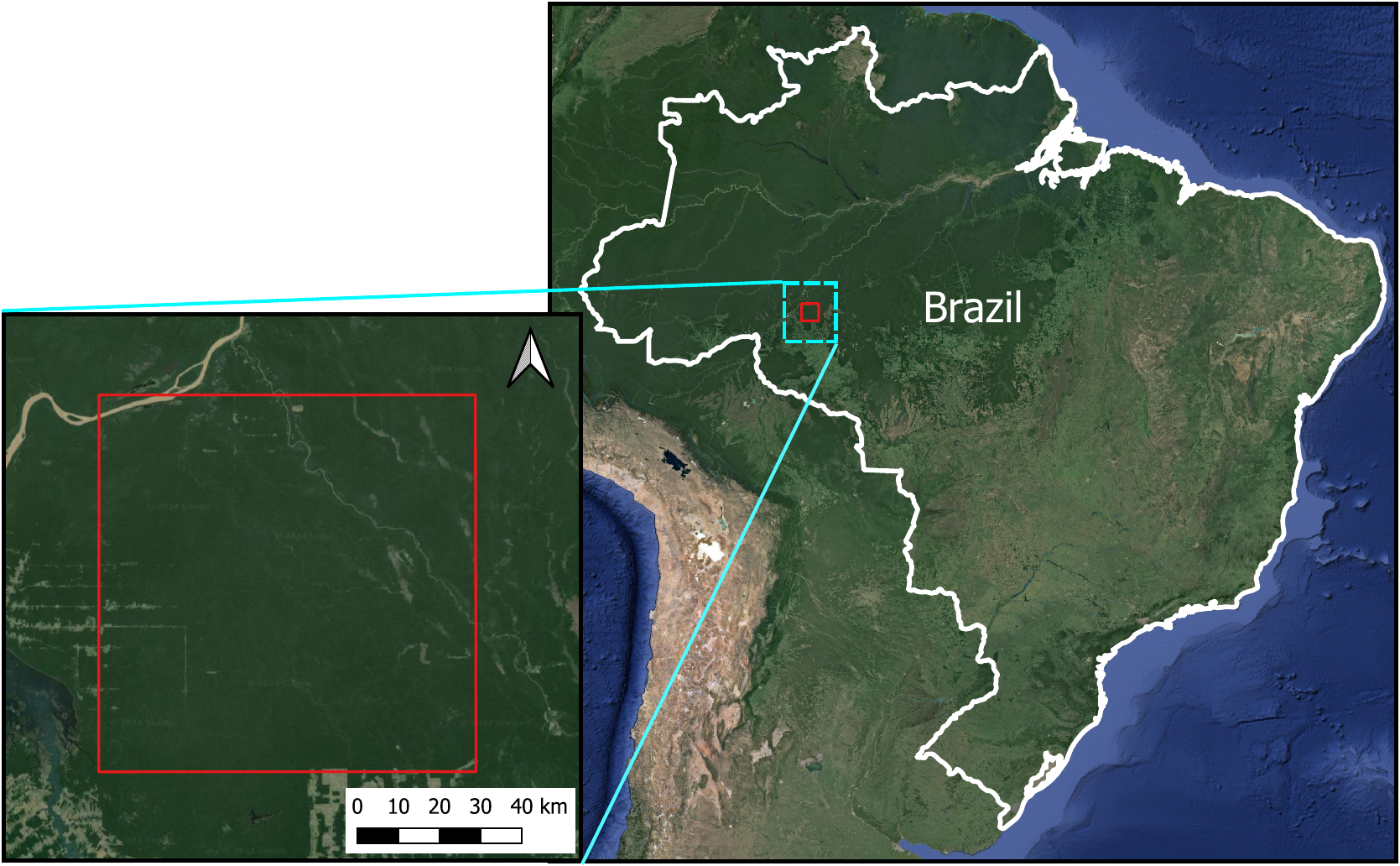}
    \caption{Amazonian forest in Brazil test area.}
    \label{Fig:Amazons}
\end{figure*}

\subsection{Implementation}
\hg{
As explained in the method section, the EVI time series computed from the Sentinel-2 surface reflectance data and the VV and VH time series of the Sentinel-1 SAR observations are used to detect \po{forest loss} in the study area between March 26, 2020 and December 31, 2022. The Sentinel-2 QA band and the Sentinel-2 cloud probability layer created by LightGBM are applied to prescreen the clouds and shadows from the Sentinel-2 data \cite{Skakun2022}. Radiometric slope correction and lee-sigma speckle filtering were applied to preprocess Sentinel-1 images \cite{Lee2009} \cite{Vollrath2020}. After preprocessing, the Sentinel-2 only algorithm, the Sentinel-1 only algorithm, and the hybrid algorithm using both data streams are applied to generate three separate maps of deforestation in the study area using all the available observations. } For the hybrid method the FTC parameter is set to 10, the optical anomaly threshold to 0.9 and the SAR threshold to -5.5, for the optical-only method the FTC is set to 6 and the optical anomaly threshold to 0.9, and for the SAR method the FTC is set to 5 and the SAR threshold to -5.5.

\hg{
To simulate the regions with fewer available optical observations, we ran the Sentinel-2 and hybrid algorithms with images randomly removed from the monitoring period. 
}\cm{The training data was left the same, meaning that the quantity of the optical data is changing, but not the quality of the anomaly data.  This makes the results favorable for optical only, which is more sensitive to the quality of the anomaly data.}
 
\cm{Results from this can be seen in \textbf{Figure} \ref{results:accuracy} and \ref{results:producers}, as well as in the Supplemental section in \textbf{Figure} \ref{results:BA} and \ref{results:F1}.  Along the x-axis we have the number of optical days included, from 1 to the 161 days left after 71 were used for training.  For each number of optical days, 100 sets of optical days of that length were randomly selected, and the hybrid and optical-only algorithms were run using those same sets to make the results comparable.  Additionally, the SAR-only algorithm was run once to give the horizontal dashed line.  Across all numbers of optical days the hybrid and optical-only algorithms used the previously described full set of SAR data starting on December 28, 2018, since cloud cover or other constraints on Sentinel-2 data should not affect Sentinel-1 data.}

\cm{The amount of available data does not affect the choices of SAR and optical thresholds; however, it does affect the FTC, since a fixed FTC represents a longer time span to confirm a detection as the amount of available data decreases.  Because of this, a variable FTC is needed.  For the hybrid method we linearly interpolate between the manually selected SAR FTC of 5 and full data hybrid FTC of 10 using $\lceil10(\frac{n}{161})+4(1-\frac{n}{161})\rceil$ for $n$ optical days.  For the optical-only algorithm, the initial assumption was that the FTC should be directly proportional to the number of optical days.  Based on the results in \textbf{Figure} \ref{SS:ftc} this appears to be reasonable, so we used an FTC of $\lceil 6(\frac{n}{161})\rceil$. 
The performance difference between variable and fixed FTC can be seen in \ref{SS:fixed}.  Since the FTC is an integer we cannot adjust it continuously, which is the reason for jumps in the various metrics.}

\hg{
The accuracy assessment of the Sentinel-2 only, Sentinel-1 only, and the hybrid algorithms followed the Good Practices introduced by Olofsson et al. \cite{OLOFSSON2014}.    \po{Because the mapped area of forest loss is less than 10 percent of the study area, we used a stratified random sampling approach to ensure sufficient sample representation in areas of forest loss. A total of 1000 sample units were selected; 700 units were allocated in areas mapped as stable in all three change maps; 130 units were allocated in the areas mapped as forest loss by all three maps; and 100 units were allocated in areas of forest loss mapped by either the optical-only or radar-only. Finally, 70 units were allocated in the stable area of the hybrid map that shows up as deforestation in either the optical-only or radar-only map. Eight trained researchers interpreted the land surface at sample locations using Sentinel-2 images, Landsat time series, and high-resolution images in Google Earth. The sample units labeled by one researcher were verified by another researcher to ensure the quality of the validation data. 
The overall accuracy, and user’s and producer’s metrics of the maps were estimated from the sample data.}  These measures are shown in detail in Section \ref{deForest:Results}. In addition, balanced accuracy, F1 score, and user's and producer's stable metrics can be found in Section \ref{Supp:additional} of the supplement material.
}

\begin{rem}
 \jct{deFOREST is implemented in MATLAB \cite{Matlab2025}. 
 The implementation is extensive; a public version can be downloaded from GitHub  \cite{GitHub2025} (https://github.com/jcandas/deFOREST).}   
\end{rem}

\subsection{Results}
\label{deForest:Results}
\hg{
The result of each algorithm is a map of deforestation in the study area. The  map generated by the hybrid algorithm is shown in \textbf{Figure} \ref{results:hybridmap}. A zoomed-in comparison of the SAR-only, optical-only and hybrid results is shown in \textbf{Figure} \ref{results:zoom}. The SAR-only map captures most of the \po{loss events without many false positives (commission errors). However, it omits some forest loss sites, such as the logging trail on the left of the figure. In addition, the SAR-only algorithm omits parts of the loss event at the top right corner. The optical-only map  captures all the loss events in this subregion but also introduces scattered false positive detections (commission errors) }in the middle of the forest. The hybrid algorithm  captures the logging trail on the left, and it looks cleaner than the SAR-only map without many of the scattered false positives. \hgt{We compared our result with two established forest alert datasets, GLAD-S2 (optical only) and RADD (SAR-only) alert, in \textbf{Figure} \ref{results:zoom} as well \cite{GLAD_2016}\cite{RADD_2021}. GLAD-S2 (\textbf{Figure} \ref{results:zoom}(f)) is similar to the optical-only and hybrid deFOREST result, but it misses the forest loss over the logging trail in the middle-left of the area while detecting the subtle disturbances around it. GLAD-S2 also does not fully cover some of the forest-loss in the top-right and bottom-right sections. RADD alert (\textbf{Figure} \ref{results:zoom}(g)) is close to our SAR-only result (\textbf{Figure} \ref{results:zoom}(c)). Both of them missed some areas of forest loss, including the disturbances at the left side of the area and at the top-right corner. }
}

\begin{figure*}[htpb]
    \centering
    \begin{tikzpicture}[scale = 1.2, every node/.style={scale=1.2}]



        \node at (-2.7,0) {\includegraphics[scale = 0.20]{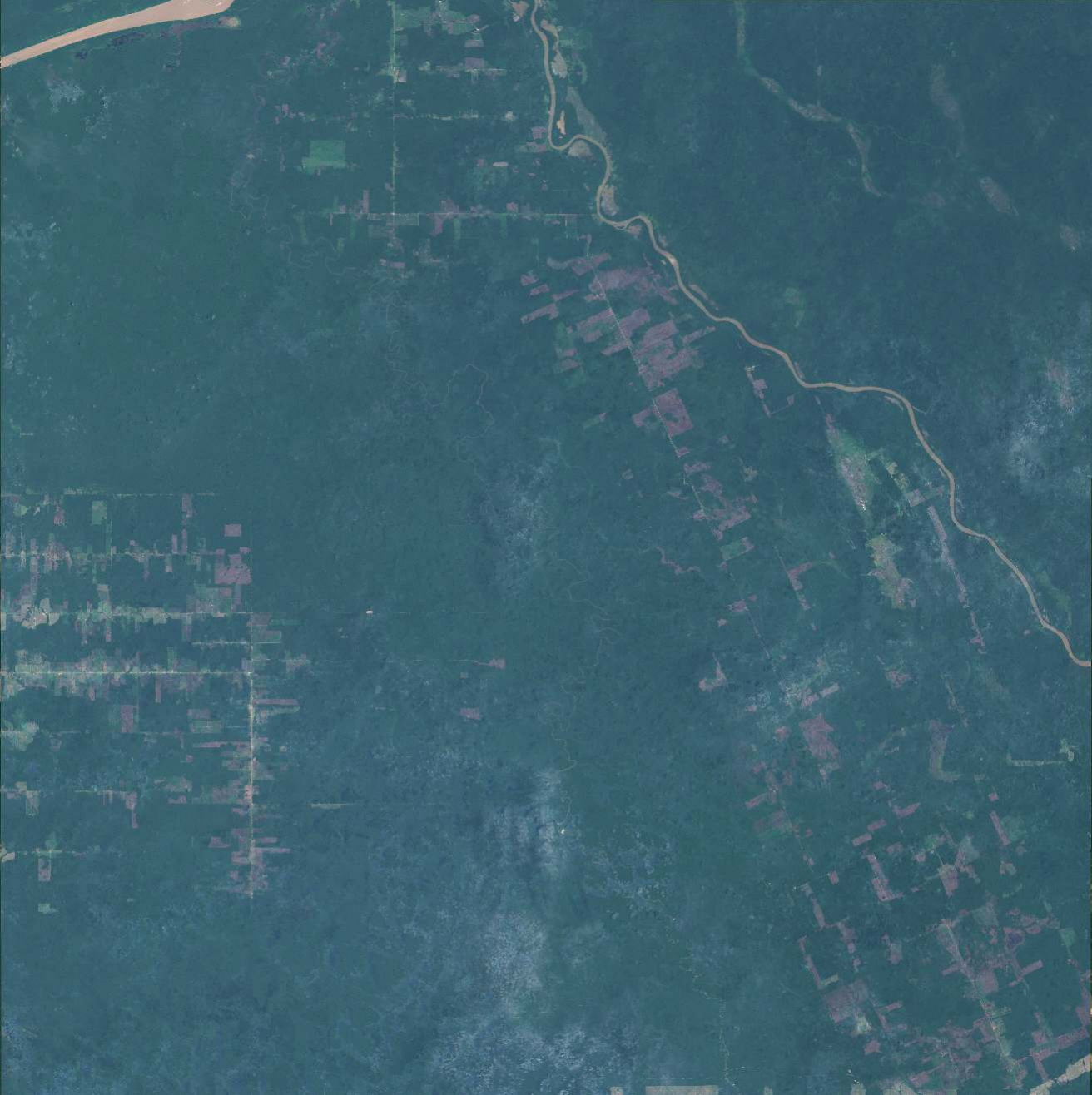}};

        \node at (-2.7,-5.5) {\includegraphics[scale = 0.20]{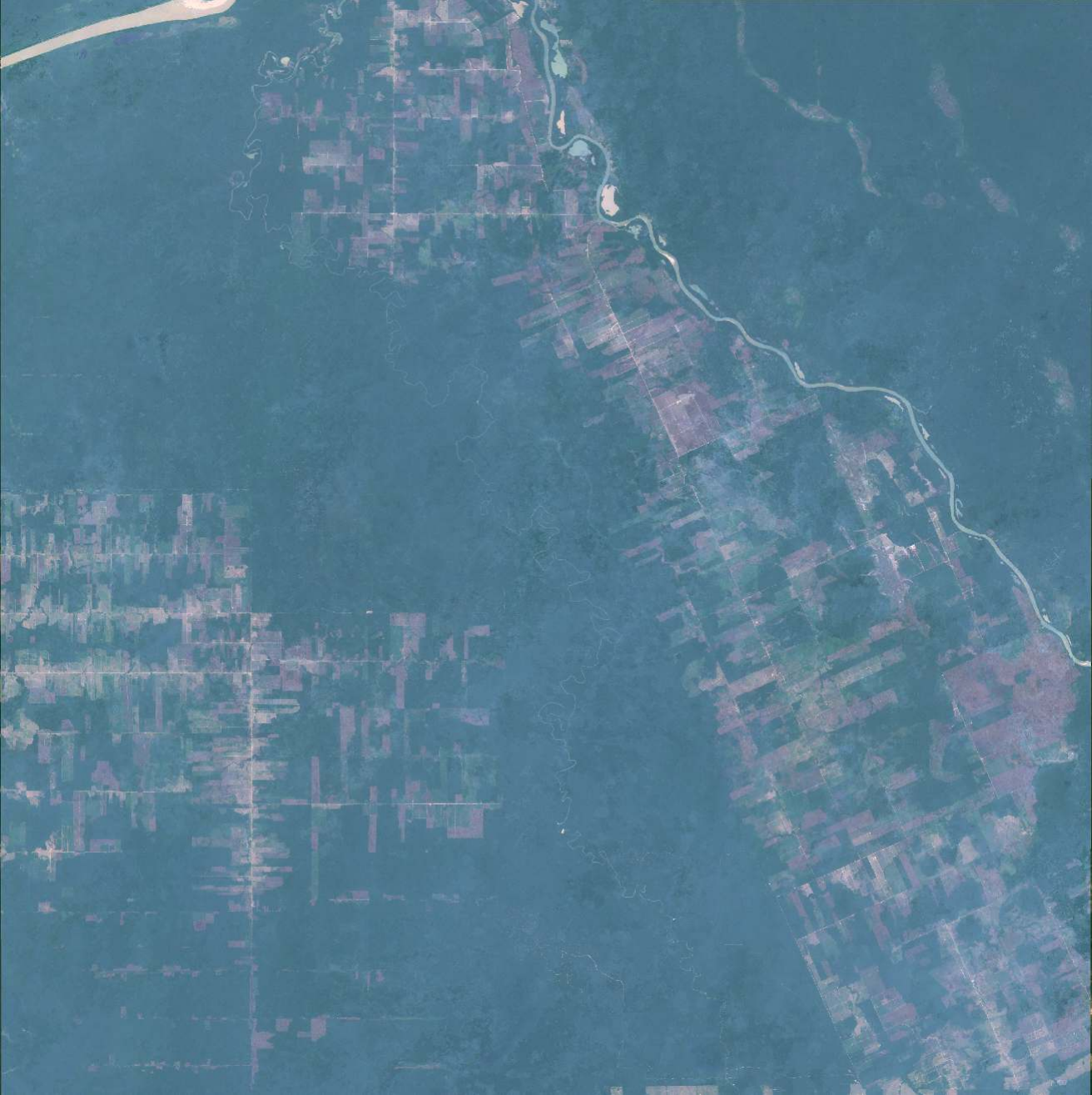}};

        \node at (5,-2.75) {\includegraphics[scale = 0.583]{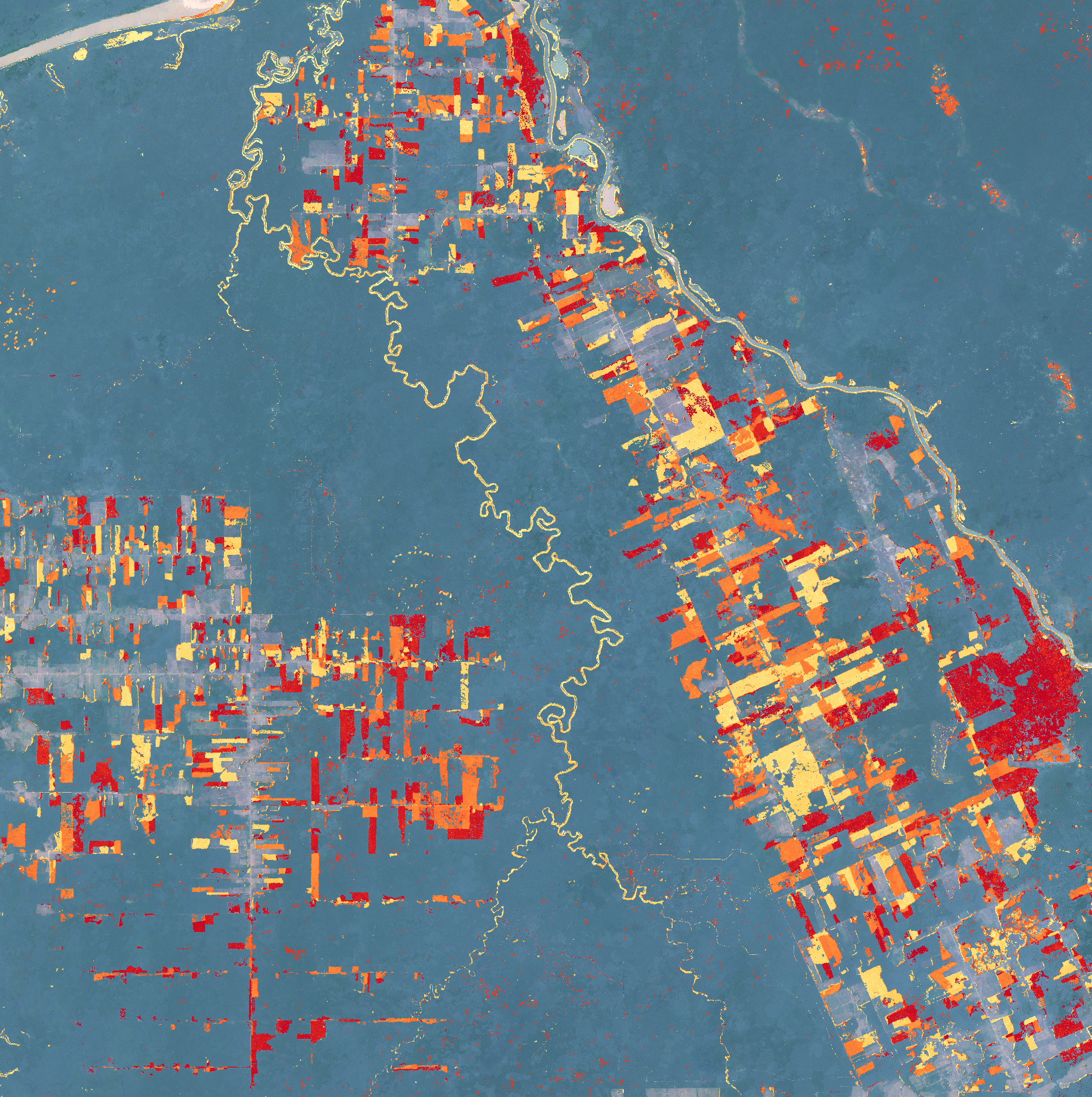}};


        \node at (-2.7,2.6) {\small Start (2020, March 3)};
        \node at (-2.7,-2.9) {\small End (2022, December 31)};
        \node at (5,2.6) {\small Hybrid deforestation map};

        \node at (1.25,0.2) {\includegraphics[scale = 0.3]{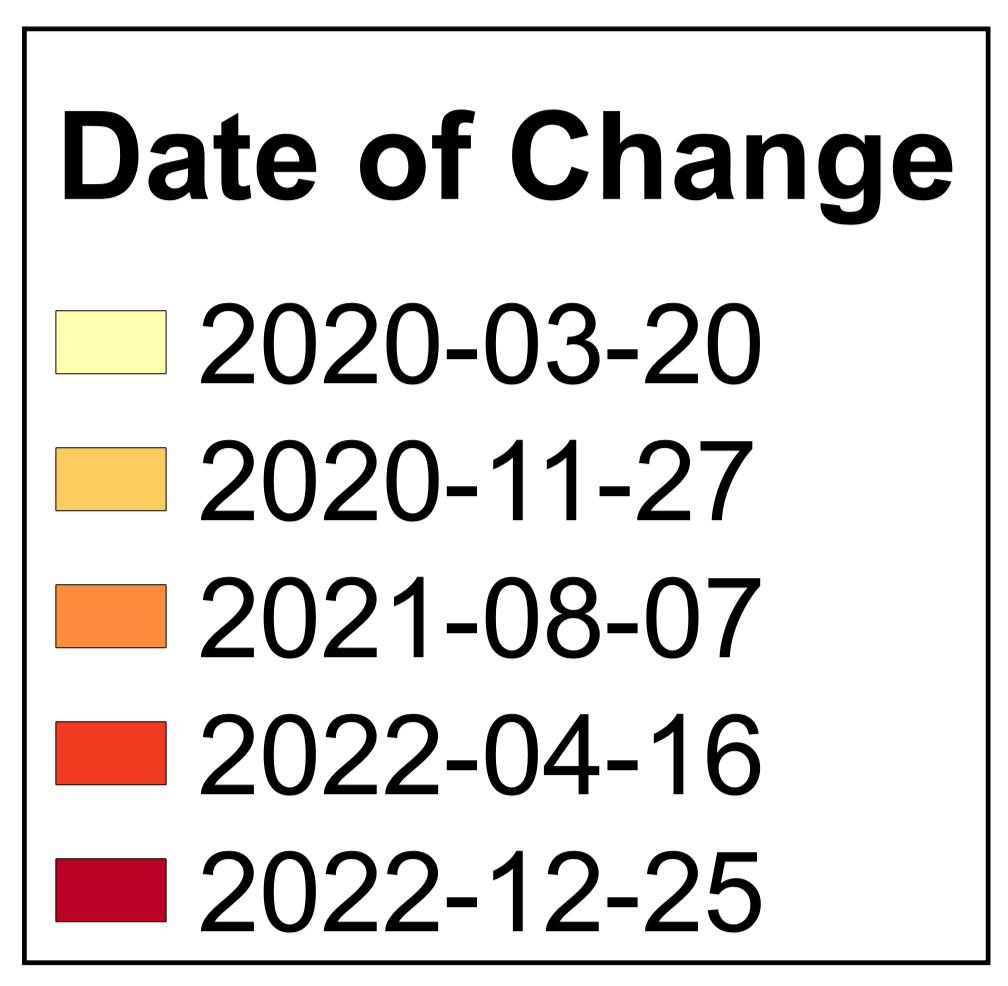}};

    \end{tikzpicture}  
    \caption{a) The median Sentinel-2 image composite for the beginning of the study period. b) The median Sentinel-2 image composite for the end of the study period. c) The map of deforestation from the hybrid of SAR and optical data of the entire study area. Note that the legend covers part of the area. However, there is no \po{forest loss} detection in that area. See \textbf{Figure} \ref{results:pd} in the supplement.}
    \label{results:hybridmap}
\end{figure*}

\begin{figure*}[t]
    \centering
\begin{tabular}{c c}
(a) & (b) \\
\includegraphics[scale = 0.29]{Images/Hybrid/Figure8-1clear.png}     
& 
\includegraphics[scale = 0.29]{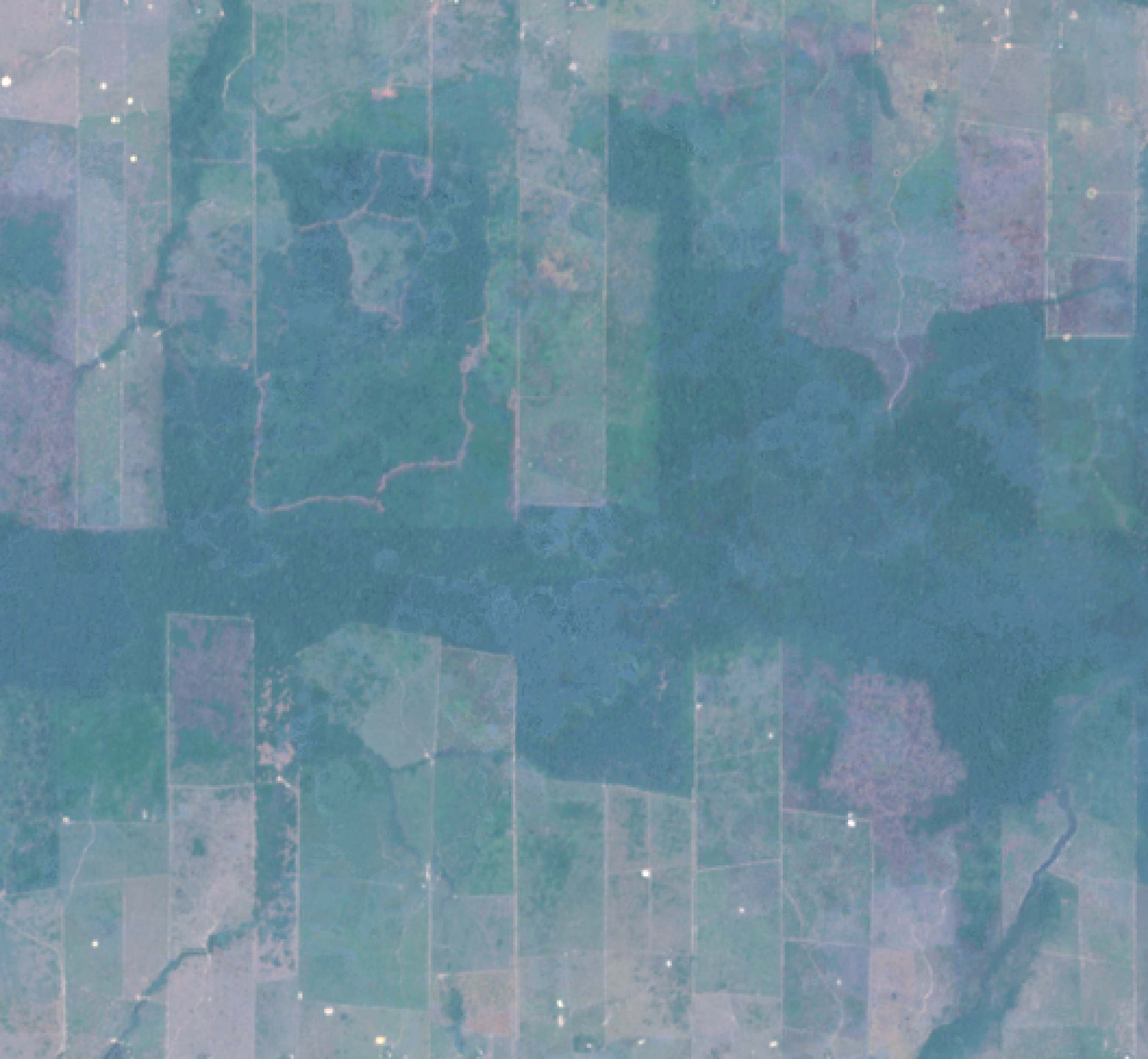} \end{tabular}
\bigskip
\begin{tabular}{c c c}
(c) & (d) & (e) \\
\includegraphics[scale = 0.26]{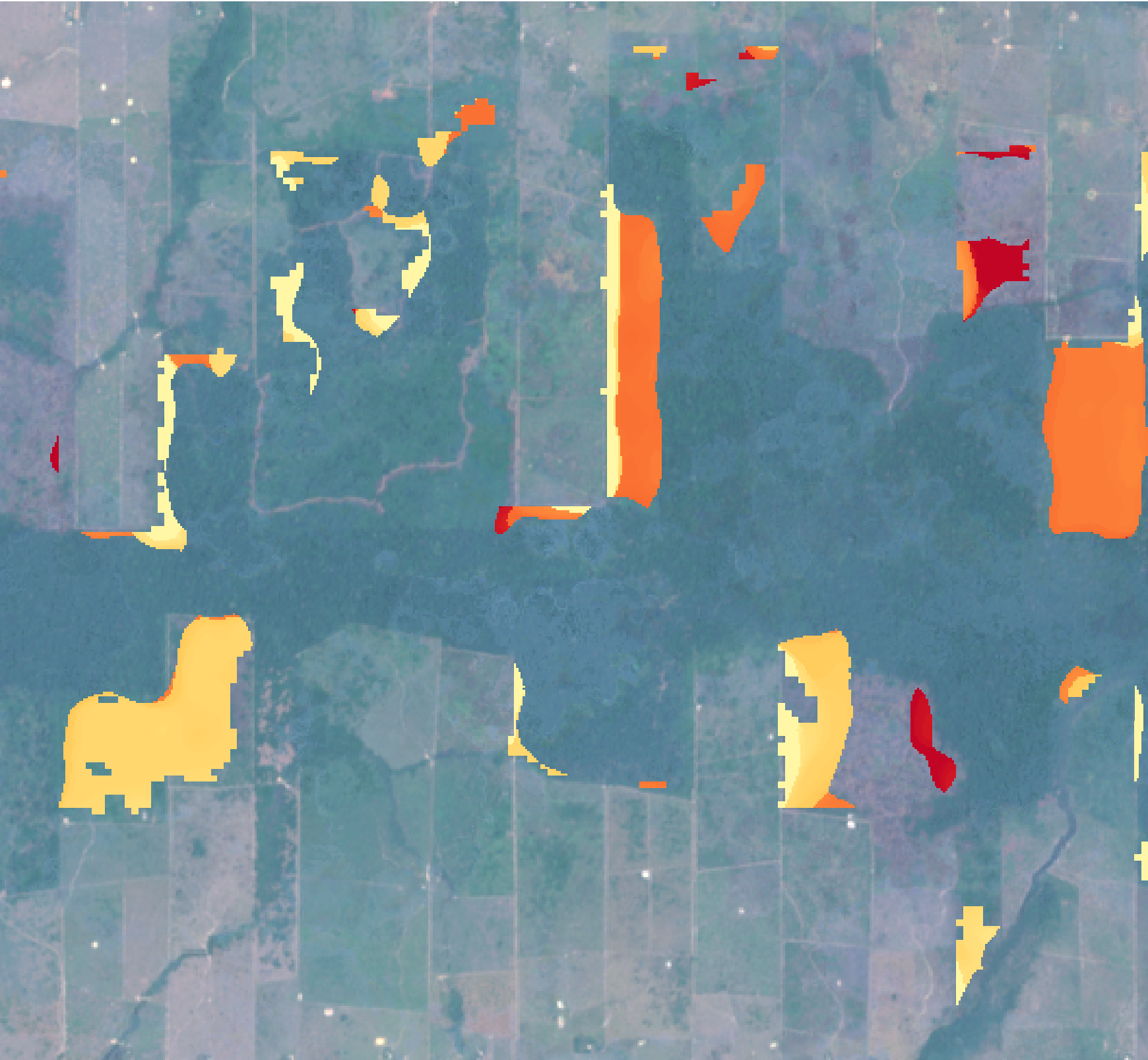}     &
\includegraphics[scale = 0.26]{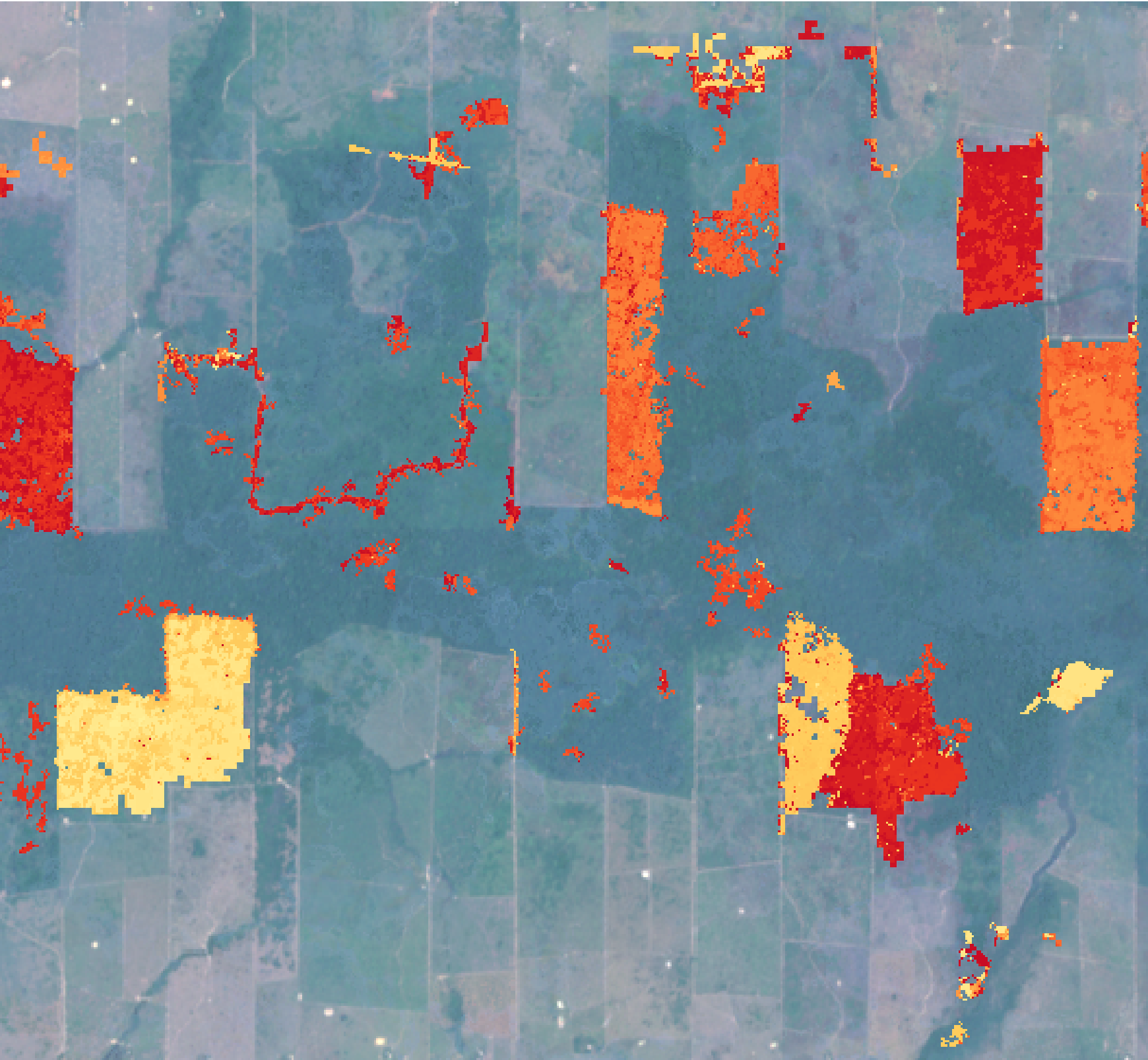}   &
\includegraphics[scale = 0.26]{Images/Hybrid-New/Figure8e.png}    
\end{tabular}
\bigskip
\begin{tabular}{c c}
(f) & (g) \\
\includegraphics[scale = 0.349]{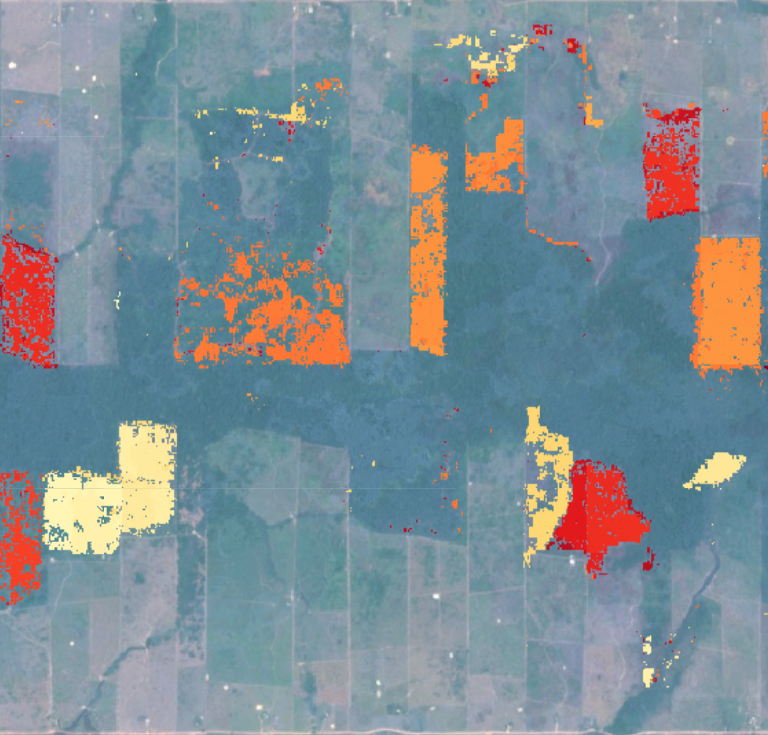}     
& 
\includegraphics[scale = 0.349]{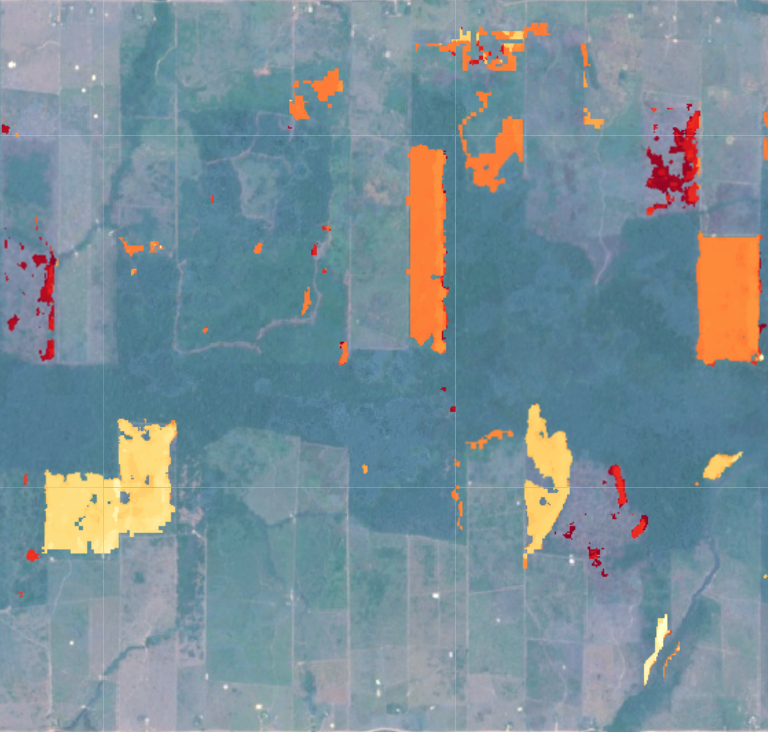}    
\end{tabular}
    \caption{Comparison of the map of \po{forest loss} from SAR-only, optical-only, the hybrid of SAR and optical data \hgt{of deFOREST, GLAD-S2 optical-only, and RADD radar-only }in a subset of the study area. a) The median image composite of the Sentinel-2 data at the beginning of the study period. b) The median image composite of the Sentinel-2 data at the end of the study period. c) The loss map from deFOREST Sentinel-1 (SAR) only. d) The loss map from deFOREST Sentinel-2 (optical) only. e) The loss map from the hybrid deFOREST algorithm. \hgt{f) The loss map from high confidence GLAD-S2 alert g) The loss map from RADD alert.}}
    \label{results:zoom}
\end{figure*}

\hg{
The overall accuracy and the user’s and producer’s accuracies of each map are presented in \textbf{Table} \ref{results:table}. The results of the optical-only and the hybrid algorithm have higher overall accuracy than the SAR-only result since the producer’s accuracy of the radar-only result is lower than the other two results. In other words, the SAR \po{map omits more forest loss} than the other two outputs. The overall accuracy of the optical-only result is almost as high as the hybrid result. However, the producer’s accuracy of the optical-only algorithm is slightly lower than the hybrid algorithm, which means the optical-only result omits more forest loss.
}

\hg{
\po{In an attempt to simulate conditions in data-scarce cloudy regions, accuracies were estimated for the maps constructed using the }optical-only and hybrid algorithms after removing some of the optical images\po{; the accuracies} are presented in \textbf{Figure} \ref{results:accuracy} and \textbf{Figure} \ref{results:producers}. \po{As expected,} the overall accuracies of both maps decrease with more optical images being removed from the dataset\po{, especially the} optical-only map. As the number of available optical images approaches zero, the accuracy of the hybrid result is increasingly closer to the SAR-only result. The producer’s accuracies of both hybrid and optical-only results decrease \po{at the same rate} until fewer than 50 optical images are available within the monitoring period. The user’s accuracy of the optical-only map decreases steadily with fewer optical images, while the user’s accuracy of the hybrid algorithm remains stable.}  

\cm{We tested the performance of deFOREST with fewer training data. 
\textbf{Table} \ref{results:table} shows that reducing  the number of training days from 71 to 35 by leaving out every other time slice does not significantly drop the overall accuracies of the optical-only and hybrid results. This shows that deFOREST can perform well with fewer cloud-free observations in the training period, so the algorithm can monitor forest loss in the area with persistent cloud cover.}

\jc{We compared our results with the recently developed FNRT algorithm. In \textbf{Table} \ref{results:table} it is shown that for the same training period of 71 days\po{, deFOREST yields substantially higher accuracies}. The training period must be increased to 130 days to obtain comparable results. However, this requires significantly more data and can present a problem in highly cloudy regions. In contrast, the training period for the deFOREST method can be reduced to 35 days and comparable results are obtained. \hgt{We also tested the performance of GLAD-S2 and RADD alerts in this area. The overall accuracy of GLAD-S2 is slightly lower than FNRT and deFOREST. However, GLAD-S2 has a higher user's accuracy, which implies fewer false positive detections, and a much lower producer's accuracy caused by the higher omission error. The RADD alert has a slightly better performance compared to our SAR-only result while having lower overall accuracy than the optical-only and hybrid deFOREST results. The advantage of our SAR-only algorithm is that our HMM-based algorithm does not require training data. In contrast, the RADD alert requires two years of Sentinel-1 imagery for the training period.}}

\cmt{The second plot in \textbf{Figure} \ref{results:accuracy} shows the variance in the overall accuracy for Hybrid and Optical only, given different numbers of optical days used.  This demonstrates the stabilizing effect of the radar data in the Hybrid method.  When the number of optical days is close to 161, all sets of optical days have a high degree of overlap, since we only have access to 161 days' worth of data, so the inputs and thus the performance vary little for both methods.  As fewer optical days are used there is less overlap in the sets of optical data, so the variance for Optical only increases.  While the variance for the Hybrid method also starts to increase, it increases more slowly since the inclusion of the radar data makes it less dependent on the optical data.  Additionally, because the radar data are fixed, the optical data become a progressively smaller portion of the joint input for the Hybrid method.  Eventually the dilution of the optical data by the radar data overcomes the additional variance from the optical data, causing the Hybrid variance to peak and then decrease.}

\cmt{To estimate the impact of  the KL expansion, we ran the hybrid and optical-only algorithms using the unprocessed EVI data in place of the optical anomaly data; the results can be seen in Section \ref{unprocessed}
of the supplemental material.  According to our metrics (see \textbf{Table} \ref{results:utable}) the results appear better than when using the anomaly data. However, a visual inspection of the results reveals large regions of false positives corresponding to bodies of water (see \textbf{Figure} \ref{results:hybrid_comp}).  These regions are not well represented in our reference sample data (see \textbf{Figure} \ref{results:valp}). Since the reference data were sampled based on the anomaly-data results, which produced far fewer detections in these regions.
Because of this, the accuracy metrics may not properly reflect the performance when using the unprocessed optical data.}

\begin{table*}
    \caption{The overall accuracy, user's and producer's accuracy of forest loss, and computation time of each result. Note that the timings for FNRT ($*$) are 368 Google Earth Engine EECU-hours, which corresponds $\approx$ 3 or 4 wall-clock hours. \hgt{As both GLAD-S2 and RADD alert are established datasets, we do not know the computational time for these two datasets ($\ddagger$).} \hgt{The RADD alert uses all Sentinel-1 observations in 2017 and 2018 as training data, which corresponds to 60 training days in our study region ($\dagger$).} deFOREST with optical and radar performs best and is robust to decreases in anomaly data quality, as can be seen from the results using 35 instead of 71 training days. FNRT requires 130 training days to get comparable results to deFOREST using 35, and performs poorly using 71 training days. \hgt{The algorithm of GLAD-S2 requires two years of Sentinel-2 data as training, which correspond to 82 training days in our study area ($\dagger$).} This suggests that deFOREST is well suited for regions where optical data are sparse.}
    \label{results:table}
        \begin{tabular}{>{\centering\arraybackslash}p{4.5cm}
    c c c c c}
    \toprule
\rowcolor{olive!40}  Algorithm (Data) & Training Days & Overall Acc. & User Acc.  & Producer Acc. & Computational Time (h) \\
\midrule
\rowcolor{blue!20}  FNRT (Optical + Radar) & 71 & 0.260 & 0.260 &  1.000 &  $368^{*}$ \\
FNRT (Optical + Radar) & 130 & 0.935 & 0.892 &  0.707 & $368^{*}$ \\
\rowcolor{blue!20}  GLAD-S2 (Optical) & $82^{\dagger}$ & 0.916 & 0.938 &  0.629 &  NA$^{\ddagger}$ \\
RADD (Radar) & $60^{\dagger}$ & 0.881 & 0.972 &  0.538 & NA$^{\ddagger}$ \\
\rowcolor{blue!20} HMM  (Radar)              & NA   & 0.875 & 0.867 & 0.528 & 8.86\\
deFOREST (Optical) & 35     & 0.949 & 0.827 & 0.805 & 5.35\\
\rowcolor{blue!20} deFOREST (Optical) & 71  & 0.949 & 0.821 & 0.810 &  6.11 \\
deFOREST (Optical + Radar) & 35     & 0.954 & 0.813 & 0.844 & 12.84\\
\rowcolor{blue!20}  deFOREST (Optical + Radar) & 71     & 0.958 & 0.818 & 0.868 & 13.06\\
    \end{tabular}
\end{table*}

\subsection{Discussion}

\hg{
The lower user’s accuracy of the radar-only \po{map shows that using radar data alone tends to underestimate the area of forest loss, which is likely due to the presence of dense herbaceous understory vegetation with radar backscattering coefficients in C-band similar to that of  the tree canopy. This implies that unless the understory is removed following logging,  the decrease in C-band backscattering  might not be sufficient for detection of the forest loss. In addition, the higher level of noise present in the radar data complicates the detection of a change point in the monitoring period. However, as evidenced by the user’s accuracy of the radar-only result of 93 percent, the radar-based analysis is unlikely to introduce false detections of forest loss (i.e., unlikely to introduce commission errors). Given the 88 percent overall accuracy, the result demonstrates that the SAR-only algorithm is reliable for detecting tropical forest loss }if optical data are unavailable. 
}

\hg{
With 71 training days, the optical-only and hybrid approaches achieve overall accuracies of 94.9\% and 95.8\%, respectively, demonstrating that the anomaly detection algorithm developed in this study effectively detects \po{forest loss}. 
The optical-only algorithm has a user’s accuracy of 80 percent, which is lower than the \po{hybrid and radar-only }results. The \po{main reason for the lower user's accuracy is false detections introduced by }missed clouds and cloud shadows in consecutive observations in the time series. \po{While the} anomaly detection algorithm has a tolerance for clouds or shadows missed by the data pre-processing, \po{the presence of consecutive contaminated observations complicates the algorithm's ability to distinguish the contamination  from a true anomaly on the land surface}. Therefore, because consecutive cloudy observations is not uncommon in the humid tropics, the final result of the optical-only algorithm has more false positive detections than the other two algorithms.
}
\begin{figure*}[!htb]
\centering 
  \includegraphics[scale = 0.46]{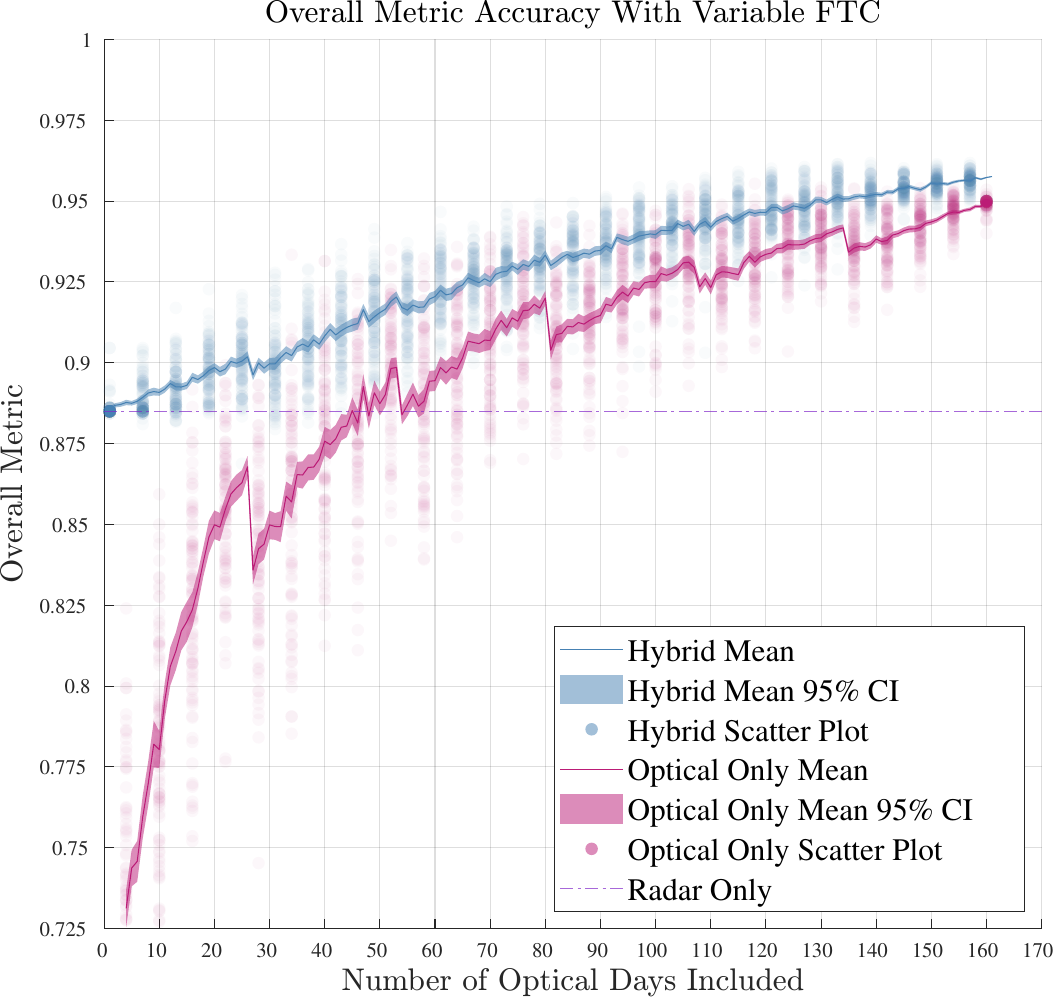}
  \includegraphics[scale = 0.46]{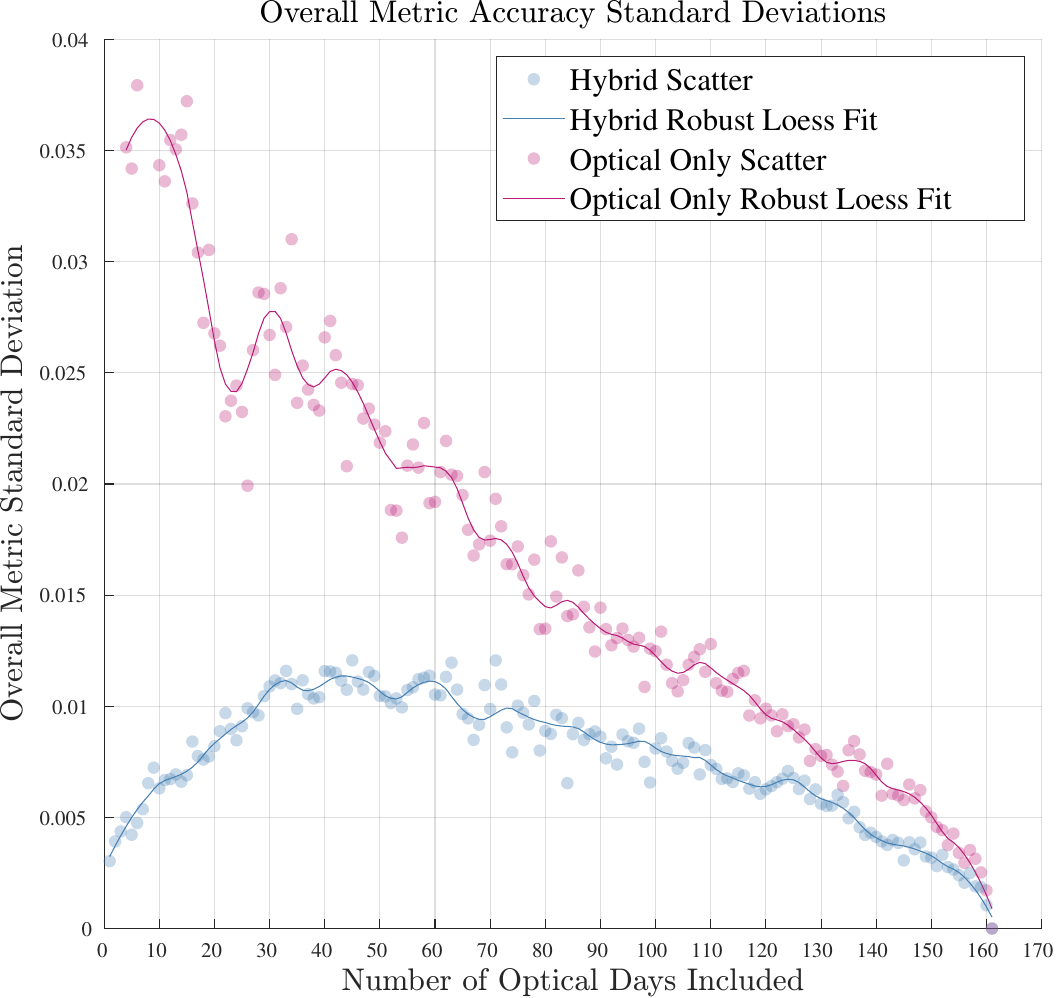}
    \caption{Overall metric accuracy and variance using variable FTC for hybrid and optical-only.  Hybrid is far more robust against decreases in the quantity of optical data, limiting to the Radar only performance as the number of optical days goes to zero.}
    \label{results:accuracy}
\end{figure*}

\begin{figure*}[!t]
\centering 
  \includegraphics[scale = 0.46]{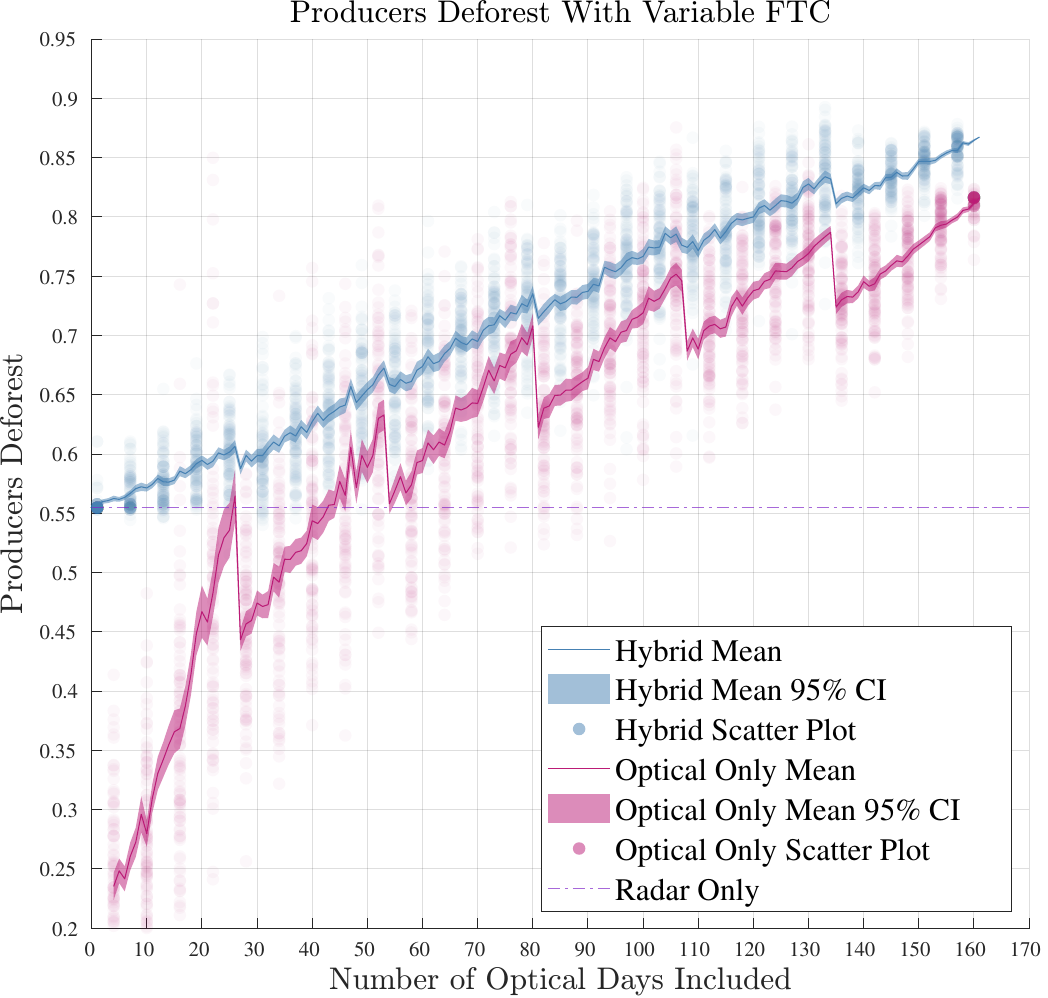}
     \includegraphics[scale = 0.46]{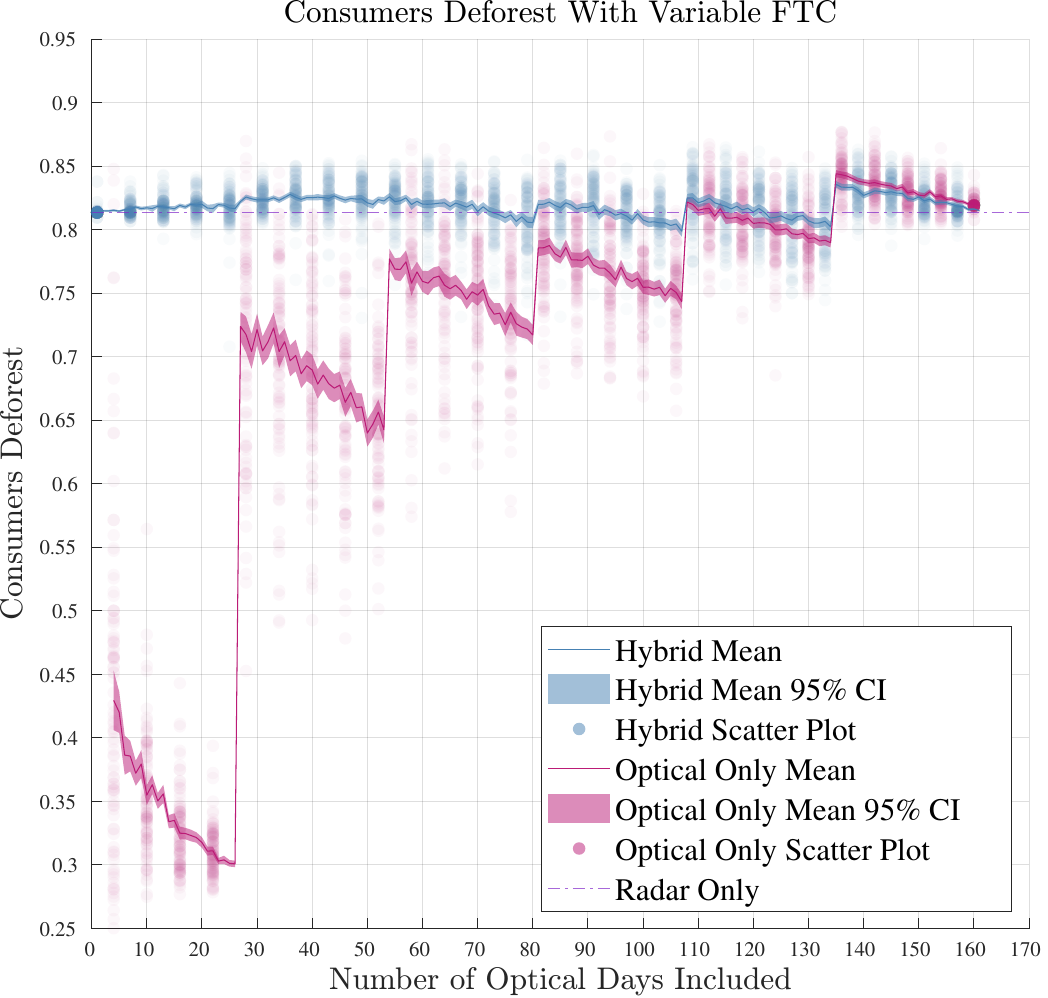}
    \caption{Left: Producers Deforest.  Right: Consumers Deforest.  Variable FTC used for hybrid and optical-only.}
    \label{results:producers}
\end{figure*}

\hg{
While multi-sensor fusion has been attempted to solve multiple types of land change problems, very few studies have analyzed the necessity of using a multi-sensor fusion method over single-sensor or optical-only algorithms. Using multiple data streams to solve a problem that can also be solved by just using one data source is a waste of computational resources. The similar accuracy of the optical-only and the hybrid results demonstrates that with enough clear observations, optical data alone can accurately detect \po{tropical forest loss}. However, our experiment that removes parts of the optical dataset also shows that the hybrid method is superior to the optical-only algorithm with fewer optical images available. The experiment here is based on the assumption that we can establish a solid benchmark state before the monitoring period. Only images from the monitoring period are removed in the experiment, while the normal state of the original land cover is still built from all the optical images in the training period. Therefore, the experiment is set as an optimal scenario for the performance of the optical-only method. The accuracy of the optical-only result will be lower if fewer observations are available to establish the benchmark state of the time series. As shown in \textbf{Figure} \ref{results:accuracy}, when more than 70 images are used in the 33-month monitoring period, or about 25 images a year, the performance of the optical-only result is close to the hybrid result. When fewer than 50 Sentinel-2 images are available \jc{during the monitoring period}, the accuracy of the optical-only algorithm is significantly lower than the hybrid algorithm. The sudden drop in both producer’s and user’s accuracy of the optical-only result when fewer than 20 images are included in the dataset shows that the algorithm based on Sentinel-2 only cannot effectively detect forest loss if fewer than 6 images are available each year. Previous studies have shown that fewer Sentinel-2 clear observations are available in central Africa, most of the Amazon Basin, and Southeast Asia than in the test area of this study because of the more frequent presence of cloud cover \cite{Sudmanns2020}. Therefore, the hybrid algorithm has the potential to better detect \po{forest loss}, or generally land-cover changes, in these regions than the current algorithms based on Landsat or Sentinel-2 data. 
}

\hg{
The high accuracy of the hybrid results demonstrate that the proposed methods can effectively detect forest disturbances in cloudy tropical rainforests with Sentinel-1 and Sentinel-2 data. \po{The presented method has potential to make contributions to various applications including early detection of illegal logging and to estimates of terrestrial carbon emissions.} Current estimates of the carbon emissions from land use/land cover change are uncertain compared to other carbon fluxes, \po{in part because of a lack of precise estimates from the forest sector }\cite{Friedlingstein2022}. Restrained by the low optical data density, the detection of \po{forest loss} in the humid tropics is more difficult than in other parts of the world \cite{Hansen_2016}. At the same time, the carbon densities of the tropical forests are some of the highest among all forest biomes \cite{Pan_science}. Therefore, lack of accurate maps \po{ and precise estimates} of forest \po{disturbance} in the tropics greatly influences the global estimation of terrestrial carbon fluxes. \po{Compared to the methods tested in this study,} the data fusion algorithm \po{presented} in this paper detects forest loss in data-scarce environments \po{faster and more accurately}. Future applications of deFOREST in tropical Africa and Southeast Asia will hopefully improve the estimates of \po{forest loss and associated carbon emissions.  Further, with the reliance on open and free data in combination with efficient computing,} the algorithm could help tropical countries to improve the mapping and estimation of \po{their forest resources}. 
}


\section*{Acknowledgments} 
This work has been funded in part by the National Science Foundation under grant number 2319011 \po{and by the National Aeronautics and Space Administration (NASA) through the NASA Carbon Monitoring System (grant number 80NSSC20K0151)}. We thank the reviewers for their insightful comments.  We are also grateful to Russell Goebel for his valuable contributions to this project.

\begin{IEEEbiography}
[{\includegraphics[width=1in,clip,keepaspectratio]{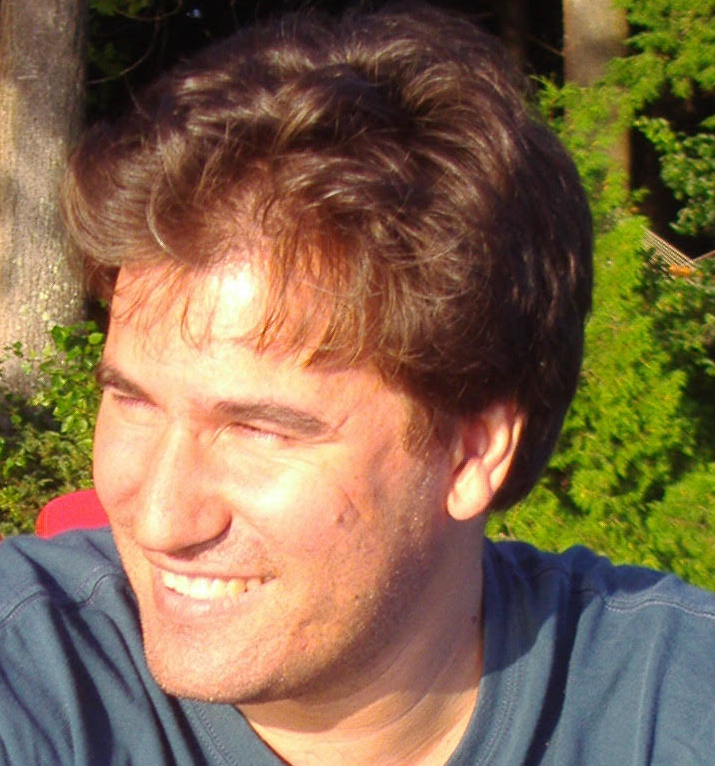}}]{Julio Enrique Castrill\'on Cand\'as} received the MS and Ph.D. degrees in electrical engineering and computer science from the Massachusetts Institute of Technology (MIT), Cambridge. He is currently a faculty member in the department of Mathematics and Statistics at Boston University. His area of expertise is in Uncertainty Quantification (PDEs, non-linear stochastic networks), large scale computational statistics, functional data analysis and statistical machine learning.
\end{IEEEbiography} 

\begin{IEEEbiography}[{\includegraphics[trim=0cm 0cm 0cm 4cm,clip,width=1in,keepaspectratio]{./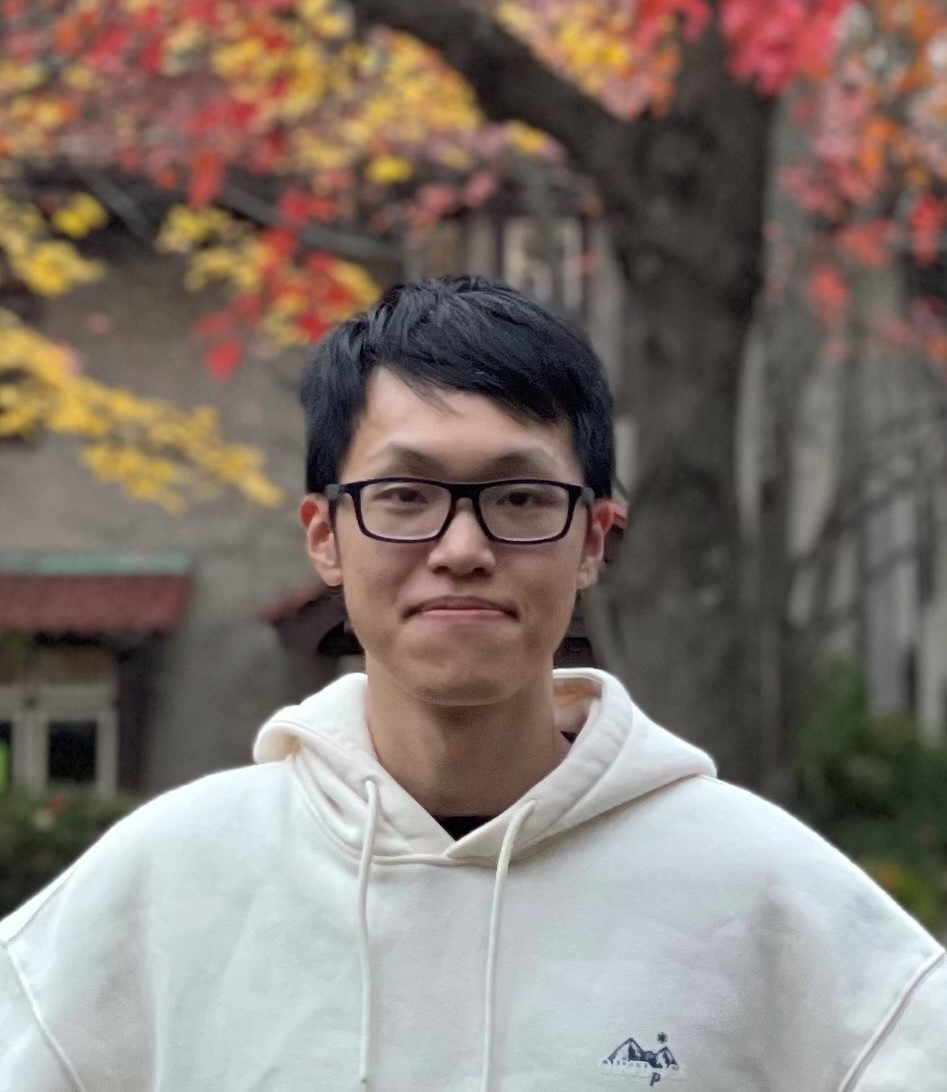}}]{Hanfeng Gu} received his Ph.D. degree in Earth and Environment from Boston University. His research focuses on monitoring land-use and land cover changes in tropical forests, coastal wetlands and urban environments using time series of multispectral and Radar remote sensing data.
\end{IEEEbiography} 

\begin{IEEEbiography}[{\includegraphics[width=1in, clip,keepaspectratio]{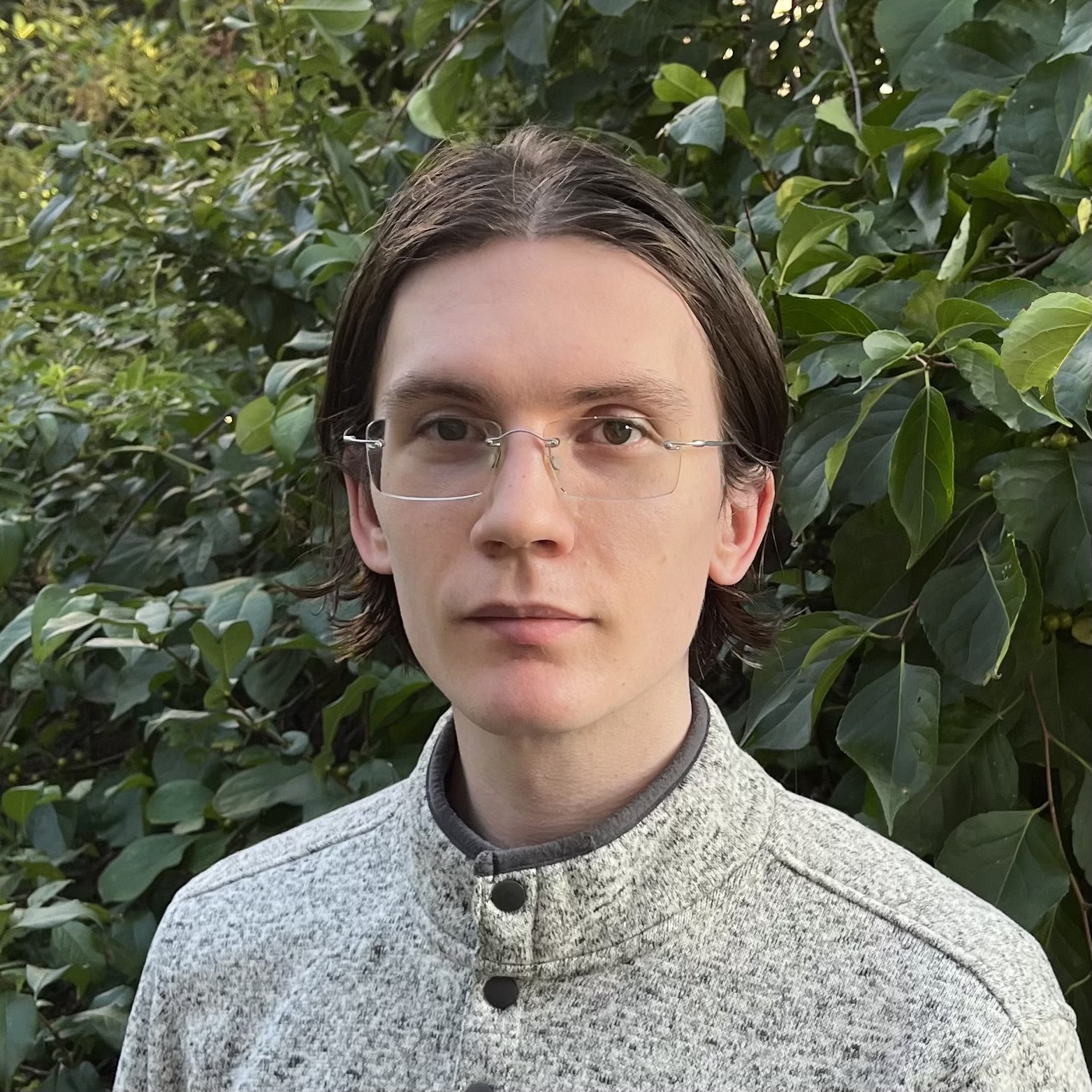}}]{Caleb Julius Meredith}received his BS degree in Mathematics from the University of Massachusetts Amherst (UMass Amherst).  He is currently a Math PhD student in the department of Mathematics and Statistics at Boston University.  He is most experienced with PDEs, numerical analysis, and deep learning.  
\end{IEEEbiography} 


\begin{IEEEbiography}[{
\includegraphics[width=1in, trim=1cm 0cm 1.5cm 0cm, clip,keepaspectratio]{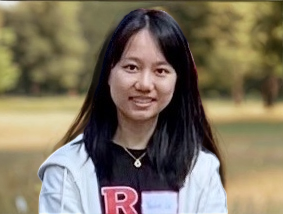}}]{Yulin Li} received the M.A. degree in Statistics from Boston University, USA, and a dual B.S. \textbackslash B.Eng. degree in Electrical Engineering from the University of Illinois at Urbana-Champaign, USA, and Zhejiang University, China. She is currently pursuing the Ph.D. degree in Statistics at Rutgers University, New Brunswick, USA. Her research interests include machine learning fairness, transfer learning theory, high-dimensional and functional data analysis, and modeling of multi-modal and non-i.i.d. data.
\end{IEEEbiography}

\begin{IEEEbiography}[{\includegraphics[width=1in,clip,keepaspectratio]{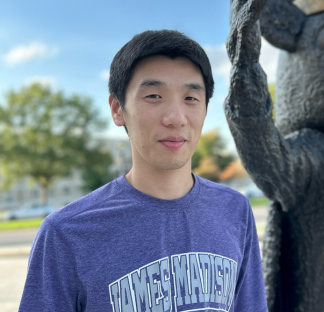}}]{Xiaojing Tang}  received his M.A. in Environmental Remote Sensing and GIS and Ph.D. in Geography from Boston University. He is currently an Assistant Professor in the School of Integrated Sciences at James Madison University. His research focuses on monitoring land changes using time series analysis of remote sensing data. He is a member of the NASA SERVIR Applied Science Team and the NASA LCLUC Science Team.
\end{IEEEbiography}

\begin{IEEEbiography}[{\includegraphics[width=1in,clip,keepaspectratio]{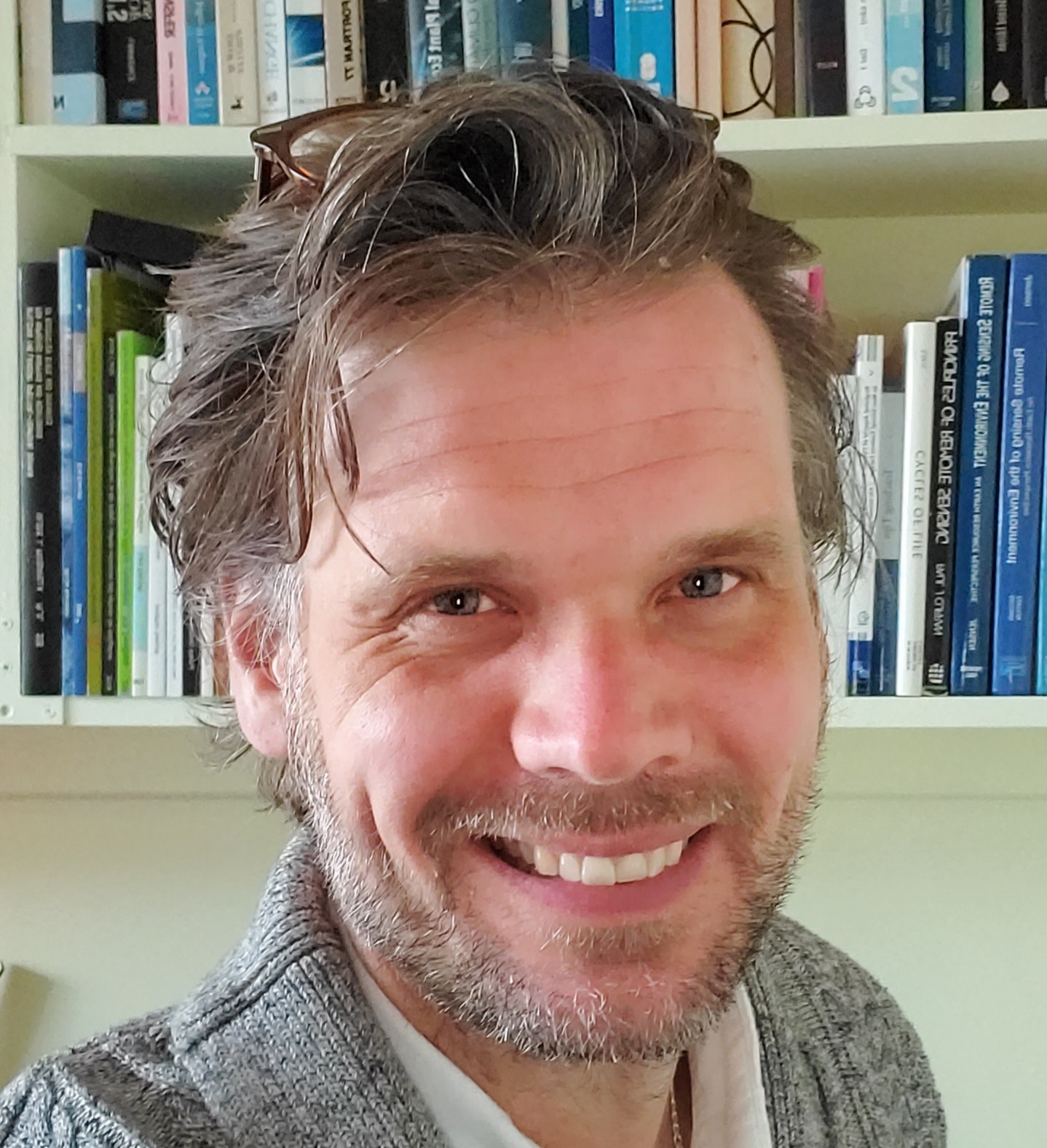}}]{Pontus Olofsson} is a research scientist at NASA Marshall Space Flight Center. Before joining NASA, he was in the department of Earth and Environment at Boston University. Olofsson has degrees in physical geography and mathematical statistics from Lund University, Sweden. He received his PhD in Physical Geography from Lund University in 2007. Olofsson’s primary research interests revolve around the mining of archives of satellite data to further the understanding of how Earth is changing and the impact of change on people and environment. He has a special interest in how to combine remote sensing data and traditional sampling techniques to gain knowledge of environmental change. 
\end{IEEEbiography}

\begin{IEEEbiography}[{\includegraphics[width=1in,clip,keepaspectratio]{./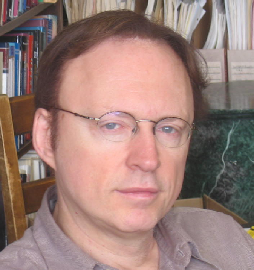}}]{Mark Kon} obtained Bachelor’s degrees in Mathematics,
Physics, and Psychology from Cornell
University, and a PhD in Mathematics from MIT. He
is a professor of Mathematics and Statistics
at Boston University. He is affiliated with the
Quantum Information Group, the Bioinformatics
Program and the Computational Neuroscience
Program. He has had appointments at Columbia
University as Assistant and Associate Professor
(Computer Science, Mathematics), as well as at
Harvard and at MIT. He has published approximately
100 articles in mathematical physics, mathematics and statistics, 
computational biology, and computational neuroscience, including two
books. His recent research and applications interests involve quantum
probability and information, statistics, machine learning, computational
biology, computational neuroscience, and complexity.
\end{IEEEbiography}

 \bibliographystyle{IEEEtran}
\bibliography{References/juliorefs,References/KonRefs,References/SuchiRefs,References/PontusRefs,References/citations,References/multilevel,References/changedetectionreferences,References/Xiaojing-Refs}

\clearpage
\section*{Supplementary Material}


\renewcommand{\thesection}{S.\Roman{section}} 
\renewcommand{\thesubsection}{\thesection.\Alph{subsection}}

\renewcommand{\thefigure}{S\arabic{figure}}
\renewcommand{\thetable}{S\arabic{table}}

\renewcommand{\theequation}{S\arabic{equation}}

\setcounter{section}{0}
\setcounter{figure}{0}

\bigskip


\section{Discrete KL Expansion}
\label{discrete}
The following discrete KL expansion  was developed by Trajan Murphy during our 
discussions.  This exposition is mathematically rigorous.
Let $(\Omega, \mcF, \bbP)$ let be a complete probability space, with the set of
events $\Omega$, the associated sigma algebra $\mcF$ and the probability measure
$\bbP$.
\begin{theorem}
Let $\bv(\omega) = [v_1(\omega), \dots, v_n(\omega)] \in L^{2}(\Omega;\R^{n})$
be a random vector and covariance matrix
$\bC := \eset{(\bv - \eset{\bv})(\bv - \eset{\bv})^{T}}$. Suppose that $\bC$
is a positive definite matrix with eigenpairs $(\lambda_{k},\bphi_{k})$ such
that for $k = 1,\dots,n$ 
\[
\bC\bphi_k = \lambda_{k} \bphi_k,
\]
and $\lambda_1 \geq \dots \geq \lambda_n$
then there exists a set of zero-mean random variable $Y_1(\omega), \dots Y_{n}(\omega)$
such that 
\[
\bv(\omega) = \eset{\bv(\omega)} +  \sum_{k = 1}^{n} \sqrt{\lambda_k} \bphi_k Y_k(\omega),
\]
where $\eset{Y_k(\omega)Y_l(\omega)} = \delta[l-k]$.
\label{appendix:thm1}
\end{theorem}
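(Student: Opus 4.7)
The plan is to construct the random variables $Y_k$ explicitly from the spectral decomposition of $\bC$ and then verify the three required properties (representation, zero mean, orthonormality) by direct computation. The argument is essentially bookkeeping around the spectral theorem for symmetric positive definite matrices.

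First I would set up the orthonormal eigenbasis. Since $\bC$ is symmetric positive definite, the spectral theorem guarantees that the eigenvectors $\bphi_1,\dots,\bphi_n$ can be chosen to form an orthonormal basis of $\R^n$, and the hypothesis $\bC\bphi_k = \lambda_k\bphi_k$ with $\lambda_k>0$ then lets me divide by $\sqrt{\lambda_k}$ safely. I would then introduce the centered random vector $\bw(\omega) := \bv(\omega) - \eset{\bv(\omega)}$, which lies in $L^2(\Omega;\R^n)$, has zero mean, and covariance $\bC$.

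Next I would define the coefficients. Since $\{\bphi_k\}$ is an orthonormal basis of $\R^n$, the pointwise identity
\[
\bw(\omega) \;=\; \sum_{k=1}^{n} (\bphi_k^T \bw(\omega))\,\bphi_k
\]
holds for every $\omega \in \Omega$. Setting $Y_k(\omega) := \bphi_k^T \bw(\omega)/\sqrt{\lambda_k}$, this immediately gives the claimed representation
\[
\bv(\omega) \;=\; \eset{\bv(\omega)} \;+\; \sum_{k=1}^{n}\sqrt{\lambda_k}\,\bphi_k\, Y_k(\omega).
\]
Each $Y_k$ is measurable and square integrable since it is a finite linear combination of the components of $\bv \in L^2(\Omega;\R^n)$.

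The two statistical properties then follow by direct computation. Zero mean is immediate: $\eset{Y_k} = \bphi_k^T \eset{\bw}/\sqrt{\lambda_k} = 0$. For orthonormality I would use the covariance identity and the eigenvalue equation,
\[
\eset{Y_k Y_l} \;=\; \frac{\bphi_k^T\,\eset{\bw\bw^T}\,\bphi_l}{\sqrt{\lambda_k\lambda_l}} \;=\; \frac{\bphi_k^T \bC \bphi_l}{\sqrt{\lambda_k\lambda_l}} \;=\; \frac{\lambda_l\,\bphi_k^T\bphi_l}{\sqrt{\lambda_k\lambda_l}} \;=\; \delta[l-k],
\]
using orthonormality of the eigenvectors in the last step. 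There is no serious obstacle; the only minor point worth noting is that positive definiteness (rather than merely positive semi-definiteness) is what ensures all $\lambda_k>0$, so the division by $\sqrt{\lambda_k}$ is well defined and no separate treatment of a null eigenspace is needed.
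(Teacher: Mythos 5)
Your proposal is correct and follows essentially the same route as the paper's proof: center the vector, expand in the orthonormal eigenbasis guaranteed by positive definiteness, and verify $\eset{Y_kY_l}=\delta[l-k]$ via $\bphi_k^T\bC\bphi_l=\lambda_l\bphi_k^T\bphi_l$. The only cosmetic difference is that the paper first defines unnormalized coefficients $Z_k=\bw^T\bphi_k$ and rescales at the end, whereas you build the $1/\sqrt{\lambda_k}$ normalization into $Y_k$ from the start.
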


\begin{proof} Let $\bw(\omega) = \bv(\omega) - \eset{\bv(\omega)}$, then 
$\eset{\bw(\omega)} = 0$.   
     Since $\bC$ is a positive definite matrix, then $\{\bphi_1,\dots, \bphi_n\}$
    are an orthonormal basis for $\R^n$.
    Let $P:\R^n \rightarrow \R^n$ be the orthogonal projection onto
    $\{\bphi_1,\dots, \bphi_n\}$, then
    \[
    P\bw(\omega) = \sum_{k=1}^{n} (\bw(\omega)^T \bphi_k) 
    \bphi_k
    \]
    and $\bw(\omega) = P \bw(\omega)$. 
    
    For $k = 1,\dots,n$ let $Z_k(\omega) := \bw(\omega)^T \bphi_k$ and thus $\eset{Z_k}=0$.
    Let $l,k = 1,\dots,n$, then
    \[
    \begin{split}
    \eset{Z_k(\omega) Z_l(\omega)} 
    &= \eset{ \bw(\omega)^T \bphi_k \bw(\omega)^T \bphi_l}       \\ 
    &= \eset{ \bphi_k^T \bw(\omega) \bw(\omega)^T \bphi_l}       \\ 
    &= \bphi_k^T \eset{\bw(\omega) \bw(\omega)^T} \bphi_l       \\ 
    &= \bphi_k^T \bC \bphi_l        
    = \lambda_{l} \bphi_k^T \bphi_l   
    = \lambda_{l} \delta[k-l].       \\ 
    \end{split}
    \]
    Now, for $k = 1,\dots,n$ let $Y_k(\omega) = \frac{Z_k(\omega)}{ \sqrt{\lambda_k}}$. The result
    follows.
\end{proof}


A crucial characteristic of the KL expansion is the optimality properties.  
Suppose that we form the truncated KL expansion i.e. for any $m \leq n$
\[
\bv_{m}(\omega) = \eset{\bv(\omega)} +  \sum_{k = 1}^{m} \sqrt{\lambda_k} \bphi_k Y_k(\omega).
\]

\begin{theorem}
Suppose $\bpsi_{1},\dots,\bpsi_{n}$ is an orthonormal basis of $\R^{n}$ and
let $\bQ^m$ be a projection of $\bv(\omega)$ onto 
$\bpsi_{1},\dots,\bpsi_{m}$, then 
\[
    \begin{split}
    \eset{\|\bv(\omega) -  \bv_{m}(\omega)\|^{2}} 
    &= \sum_{k=m+1}^{n} \lambda_{k} \\ 
    &\leq \eset{
    \|\bv(\omega) 
    - \bQ^m \bv(\omega) \|^{2}
    }
    \end{split}
\]
\label{appendix:thm2}
\end{theorem}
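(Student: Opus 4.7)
The proof splits naturally into establishing the equality and the inequality. For the equality I would invoke Theorem \ref{appendix:thm1} to write
\[
\bv(\omega) - \bv_m(\omega) = \sum_{k=m+1}^{n} \sqrt{\lambda_k}\,\bphi_k\, Y_k(\omega),
\]
expand the squared Euclidean norm as a double sum over indices $k,l \in \{m+1,\dots,n\}$, and take expectations. The cross terms vanish because the eigenvectors satisfy $\bphi_k^T \bphi_l = \delta[k-l]$ and the KL coefficients satisfy $\eset{Y_k Y_l} = \delta[k-l]$, leaving the diagonal sum $\sum_{k=m+1}^{n} \lambda_k$ as required.

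For the inequality I would reduce to the zero-mean case (either by assuming $\eset{\bv(\omega)}=\0$ without loss of generality, or by applying the argument to $\bv(\omega)-\eset{\bv(\omega)}$, since $\bQ^m$ is linear and the error $\bv-\bv_m$ is itself centered). Expanding in the orthonormal basis $\{\bpsi_k\}$ gives $\bv(\omega) - \bQ^m \bv(\omega) = \sum_{k=m+1}^{n} (\bv(\omega)^T\bpsi_k)\,\bpsi_k$, and orthonormality together with the identity $\eset{(\bv(\omega)^T\bpsi_k)^2} = \bpsi_k^T \bC \bpsi_k$ yields
\[
\eset{\|\bv(\omega) - \bQ^m \bv(\omega)\|^2} = \sum_{k=m+1}^{n} \bpsi_k^T \bC \bpsi_k.
\]
Since $\sum_{k=1}^{n} \bpsi_k^T \bC \bpsi_k = \trace(\bC) = \sum_{k=1}^{n}\lambda_k$ holds for every orthonormal basis, the claim is equivalent to the upper bound $\sum_{k=1}^{m} \bpsi_k^T \bC \bpsi_k \leq \sum_{k=1}^{m}\lambda_k$.

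The main obstacle, and the heart of the argument, is establishing this last bound, which is an instance of Ky Fan's maximum principle. I would prove it by writing $\bpsi_k = \sum_j a_{kj}\bphi_j$ for an orthogonal matrix $A = (a_{kj})$ and computing $\bpsi_k^T \bC \bpsi_k = \sum_j a_{kj}^2 \lambda_j$. Setting $c_j := \sum_{k=1}^{m} a_{kj}^2$, column-orthonormality of $A$ forces $c_j \in [0,1]$ and $\sum_{j=1}^{n} c_j = m$. The objective becomes $\sum_{j=1}^{n} c_j \lambda_j$, a linear function on a bounded polytope. Since $\lambda_1 \geq \cdots \geq \lambda_n$, any feasible allocation that places positive mass on some index $j'>m$ while $c_j < 1$ for some $j \leq m$ can be strictly improved (or at least not worsened) by shifting weight from $j'$ to $j$; iterating this rearrangement forces the maximum $\sum_{j=1}^m \lambda_j$, attained at $c_j=1$ for $j \leq m$ and $c_j=0$ otherwise, which corresponds exactly to the choice $\bpsi_k = \bphi_k$. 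This yields the desired inequality and completes the proof.
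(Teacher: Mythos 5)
Your proposal is correct, and the equality part together with the reduction of the inequality to bounding $\sum_{k=m+1}^{n}\bpsi_k^T\bC\bpsi_k$ from below matches the paper exactly (the paper additionally inserts one preliminary step, replacing a general projection $\bQ^m$ by the orthogonal projection $P_{\bpsi}$ via $\|\bv-\bQ^m\bv\|\geq\|\bv-P_{\bpsi}\bv\|$ a.s., which you implicitly skip by taking $\bQ^m$ orthogonal from the start). Where you genuinely diverge is the final optimization. The paper minimizes $\sum_{k=m+1}^{n}\bpsi_k^T\bC\bpsi_k$ directly by a greedy inductive argument: choose $\bpsi_n$ to minimize its own Rayleigh quotient, then $\bpsi_{n-1}$ subject to orthogonality to the previous choice, and so on, asserting at each stage that the minimizer is the corresponding eigenvector. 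You instead pass through the trace identity to the equivalent maximization $\sum_{k=1}^{m}\bpsi_k^T\bC\bpsi_k\leq\sum_{k=1}^{m}\lambda_k$ and prove it as Ky Fan's maximum principle, writing the change of basis as an orthogonal matrix, collapsing to weights $c_j\in[0,1]$ with $\sum_j c_j=m$, and maximizing the resulting linear functional over that polytope by an exchange argument. Your route is the more self-contained and, frankly, the more airtight of the two: term-by-term greedy minimization does not by itself establish that the \emph{sum} is minimized (the feasible set for each later vector depends on the earlier choices, and the paper's ``it is not hard to see'' steps are doing real work), whereas your polytope argument handles all $m$ vectors simultaneously and closes that gap. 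The cost is a slightly longer setup (the orthogonal change-of-basis matrix and the doubly-stochastic-style bookkeeping), but you get a complete proof of the extremal step rather than a sketch.
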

\vspace{-1cm}
\begin{proof}
We first have that
    \[
      \bv(\omega) -  \bv_{m}(\omega)  = \sum_{k=m+1}^{n}  \sqrt{\lambda_k} \bphi_k Y_k(\omega),
    \]
Using the orthonormality properties of $\{\bphi_1,\dots,\bphi_{n}\}$ we have that

    \[
    \begin{split}
      & \| \bv(\omega) -  \bv_{m}(\omega)\|^2 = \\ 
      &= 
      \left( \sum_{k=m+1}^{n}  \sqrt{\lambda_k} \bphi_k^{T} Y_k(\omega) \right) 
      \left( \sum_{l=m+1}^{n}  \sqrt{\lambda_l} \bphi_l  Y_l(\omega) \right) \\
      &= \sum_{k=m+1}^{n}   \sum_{l=m+1}^{n}  
      \sqrt{\lambda_k} \sqrt{\lambda_l} 
      \bphi_k^{T} \bphi_l 
      Y_k(\omega) Y_l(\omega) \\
      &= \sum_{k=m+1}^{n}  
      \lambda_k 
      Y^2_k(\omega)
      \\
    \end{split}
    \]
    From the unit variance of the random variables $Y_1(\omega),\dots,Y_n(\omega)$ we 
    have that
    \[
    \eset{\|\bv(\omega) -  \bv_{m}(\omega)\|^{2}} = \sum_{k=m+1}^{n} \lambda_{k}.
    \]

Let  $\tilde \bv = \bQ^m \bv(\omega) = \sum_{k=1}^{m} G_k(\omega) \bpsi_k$
for some set of projection coefficients $G_1(\omega),\dots,G_m(\omega)$
Let $P_{\bpsi}:\R^n \rightarrow \R^m$ be the orthogonal projection of $\R^n$ onto the basis $\{\bpsi_1,\dots,\bpsi_m\}$. Since $P_{\bpsi}$ is the orthogonal projection then a.s.
\[
    \|\bv(\omega) -  \tilde \bv (\omega)\|^{2} \geq  \|\bv(\omega) -  P_{\bpsi}\bv(\omega)\|^{2}
\]
and thus
\[
    \eset{\|\bv(\omega) -  \tilde \bv (\omega)\|^{2}} \geq  \eset{\|\bv(\omega) -  P_{\bpsi}\bv(\omega)\|^{2}}.
\]
Now,
    \[
    \begin{split}
      \| \bv(\omega) -  P_{\bpsi}\bv(\omega)\|^2  
      &= 
      \left\| \sum_{k=m+1}^{n} (\bv(\omega)^T  \bpsi_k) \bpsi_k \right\|^2 \\
      &= \sum_{k=m+1}^{n} (\bv(\omega)^T  \bpsi_k)^2
      \\
      &= \sum_{k=m+1}^{n} \bpsi_k^T \bv(\omega) 
      \bv(\omega)^T 
      \bpsi_k
    \end{split}
    \]
and thus
    \[
    \begin{split}
      \eset{\| \bv(\omega) -  P_{\bpsi}\bv(\omega)\|^2}    
     &= \sum_{k=m+1}^{n} \bpsi_k^T \eset{\bv(\omega) 
      \bv(\omega)^T }
      \bpsi_k \\
     &= \sum_{k=m+1}^{n} \bpsi_k^T \bC
      \bpsi_k
    \end{split}
    \]
We now solve for the following constrained optimization problem:    
    \[
    \argmin_{\{\bpsi_{m+1},\dots,\bpsi_n\}}
     \sum_{k=m+1}^{n} \bpsi_k^T \bC
      \bpsi_k.
    \]
We can solved for this problem using an inductive argument.    
It is not hard to see that
 \[
    \argmin_{\bpsi_n \in \R^n}
     \bpsi_n^T \bC
      \bpsi_n = \lambda_{n}
    \]
and this is achieved by letting $\bpsi_n = \bphi_n$. Now, the next
choice of vector $\bpsi_{n-1}$ has to be such that the following 
optimization problem is solved
 \[
    \argmin_{\{\bpsi_{n-1} \in \R^n|\,\, \bpsi_{n-1} \perp \spn \bphi_n \}}
     \bpsi_{n-1}^T  \bC
      \bpsi_{n-1}
    \]
    The solution to this optimization is $\bpsi_{n-1} = \bphi_{n-1}$.
In general for $k<n$ we have that
 \[
    \argmin_{\{\bpsi_k \in \R^n|\,\, \bpsi_{k} \perp \spn\{ \bphi_{k+1},\dots,\bphi_n \} \}}
     \bpsi_{k}^T  \bC
      \bpsi_{k} = \lambda_k
    \]
$\bpsi_k = \bphi_k$. The results follows.
\end{proof}

Suppose that we form the residual vector
\[
\br(\omega) = \bv(\omega) -  \bv_{m}(\omega)  = \sum_{k=m+1}^{n}  \sqrt{\lambda_k} \bphi_k Y_k(\omega),
\]
where the $i_{th}$ entry of the residual vector corresponds to a pixel in the data.  
Let $\alpha$ be the significance level then it can be show that the distribution 
of the null hypothesis $\mbox{H}_0$ satisfies a concentration bound.

\begin{rem}
It is important
to note that for this hypothesis test no assumptions are made from the distribution of the data,
which for high dimensional problems it is practically impossible to obtain. The concentration of
the bound depends on the decay of the eigenvalues $\lambda_k$ and the truncation parameter $m$.
However, it is clear that if we choose $m=n$ then the residual is exactly zero. The parameter
$m$ has to be calibrated such that most of the signal for the nominal behavior is captured 
by the basis $\{\bphi_1,\dots,\bphi_m\}$.
\end{rem}

The following
theorem shows how this bound is obtained.

\begin{theorem} 
Suppose that we form the residual vector
\[
\br(\omega) = \bv(\omega) -  \bv_{m}(\omega)  = \sum_{k=m+1}^{n}  \sqrt{\lambda_k} \bphi_k Y_k(\omega),
\]
and let $\alpha \in (0,1)$ be the significance level then 
\[
\bbP\left(|\br[i]| \geq \alpha^{-\frac{1}{2}}
\left(\sum_{k = m + 1}^{n}\lambda_k \bphi_k[i]^2 \right)^{\frac{1}{2}} 
\right) \leq \alpha.
\]
\label{appendix:thm3}
\end{theorem}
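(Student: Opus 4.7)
The plan is to apply Markov's inequality to the nonnegative random variable $\br[i](\omega)^2$, after computing its mean (which is in fact its variance, since the residual is centered). The approach is essentially Chebyshev's inequality in disguise, and the only nontrivial input is the orthonormality of the KL coefficients $Y_k$ established in Theorem~\ref{appendix:thm1}.

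First, I would extract the $i$th component of the residual vector as the scalar
\[
\br[i](\omega) = \sum_{k=m+1}^{n} \sqrt{\lambda_k}\,\bphi_k[i]\,Y_k(\omega),
\]
and note that $\eset{Y_k(\omega)} = 0$ (from Theorem~\ref{appendix:thm1}) immediately yields $\eset{\br[i](\omega)} = 0$. Next, I would square this expression and take expectations, using bilinearity together with the orthonormality relation $\eset{Y_k(\omega) Y_l(\omega)} = \delta[k-l]$ to collapse the double sum to a diagonal:
\[
\eset{\br[i](\omega)^2} = \sum_{k=m+1}^{n}\sum_{l=m+1}^{n} \sqrt{\lambda_k \lambda_l}\,\bphi_k[i]\,\bphi_l[i]\,\eset{Y_k(\omega) Y_l(\omega)} = \sum_{k=m+1}^{n} \lambda_k\,\bphi_k[i]^2.
\]

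Finally, I would apply Markov's inequality to the nonnegative random variable $\br[i](\omega)^2$ at the threshold $t := \alpha^{-1} \sum_{k=m+1}^{n} \lambda_k\,\bphi_k[i]^2$, which gives
\[
\bbP\!\left(\br[i](\omega)^2 \geq t\right) \leq \frac{\eset{\br[i](\omega)^2}}{t} = \alpha.
\]
Taking square roots on the event inside the probability then delivers the stated bound. There is no real obstacle here: once the orthonormality of the $Y_k$'s (not merely of the eigenvectors $\bphi_k$) has been established, the rest is a one-line Chebyshev argument. The only point worth emphasizing is that this derivation makes no distributional assumption on $\bv(\omega)$ beyond second-moment integrability — the concentration is driven entirely by the covariance spectrum $\{\lambda_k\}$ and the tail eigenvector weights $\bphi_k[i]^2$, which is precisely the nonparametric character the paper advertises.
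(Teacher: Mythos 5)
Your proposal is correct and follows essentially the same route as the paper: compute $\eset{\br[i]^2} = \sum_{k=m+1}^{n}\lambda_k\bphi_k[i]^2$ by collapsing the double sum via $\eset{Y_k Y_l}=\delta[k-l]$, then conclude by Chebyshev's inequality (which you carry out explicitly as Markov applied to $\br[i]^2$). No gaps.
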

\begin{proof}

Since $\eset{ Y_k(\omega) Y_l(\omega)} = \delta[l-k]$ then
\[
\begin{split}
\eset{\br[i]^2} &= \sum_{k=m+1}^{n} \sum_{l=m+1}^{n} 
\sqrt{\lambda_k} \sqrt{\lambda_l} 
\bphi_k[i] \bphi_l[i]
\eset{Y_k(\omega) Y_l(\omega)} \\
&= 
\sum_{k=m+1}^{n} \lambda_k \bphi_k[i]^2 
\end{split}
\]
The result follows from the Chebyshev inequality.
\end{proof}

\section{Bayesian spatio-temporal SAR filter}

The following provides details for the spatio-temporal Bayesian filtering used to clean the Sentinel-1 SAR data during preprocessing (See \textbf{Figure} \ref{SS:Fig1} for and example of the Baesian filter on radar data).  Let $\{Y_n\}_{n=1}^T$ for $Y_n\in \mathbb{R}^{n_1n_2}$ denote a set of $T$ flattened  $n_1\times n_2$ measurements indexed in order by time, and $\{X_n\}_{n=1}^T$ for $X_n\in \mathbb{R}^{n_1n_2}$ denote the corresponding true unknown values.  We make the standard assumption that $Y_n|X_n\sim\mathcal{N}(X_n,\sigma_1^2\textrm{I})$, i.e. that our measurements contain some amount of uncorrelated noise, and two assumptions on the spatial and temporal distributions of our true values.  For the former we want our data to be smooth in the sense that our second derivatives are not too large, so we assume that $\Delta X_n\sim \mathcal{N}(0,\sigma_2^2\textrm{I})$.  However, given that we have discrete grids/vectors of observations this is replaced by $DX_n\sim \mathcal{N}(0,\sigma_2^2 \textrm{I})$, where $D\in \mathbb{R}^{n_1n_2\times n_1 n_2}$ is the 2D discrete Laplacian matrix using a 7 point scheme and Neumann boundary conditions.  Finally, for the latter we assume that $X_{n}|X_{n-1}\sim \mathcal{N}(X_{n-1},\sigma_3^2\textrm{I})$, i.e. that values cannot vary too much with time.  Using these priors we can construct and maximize our log-likelihood function with respect to $X_n$, the result of which is used in place of $Y_n$ as our filtered data.  We have that 
$$\rho(X_n|Y_n,X_{n-1})=\frac{\rho(Y_n|X_n)\rho(X_{n-1}|X_n)\rho(X_n)}{\rho(Y_n,X_{n-1})}$$
so our log-likelihood function up to a constant is given by
$$\ln{(\rho(Y_n|X_n)\rho(X_{n-1}|X_n)\rho(X_n))}=$$$$-\frac{1}{2\sigma_1^2}(Y_n-X_n)^T(Y_n-X_n)$$$$-\frac{1}{2\sigma_3^2}(X_{n-1}-X_n)^T(X_{n-1}-X_n)-\frac{1}{2\sigma_2^2}(DX_n)^TDX_n$$  Differentiating with respect to $X_n$ and setting equal to zero gets us that 
$$X_n=[(\frac{1}{\sigma_1^2}+\frac{1}{\sigma_3^2})\textrm{I}+\frac{1}{\sigma_2^2}D^TD]^{-1}(\frac{1}{\sigma_1^2}Y_n+\frac{1}{\sigma_3^2}X_{n-1})$$
If we drop our temporal prior we can filter $X_0$ via a similar procedure with $$X_0=[\frac{1}{\sigma_1^2}\textrm{I}+\frac{1}{\sigma_2^2}D^TD]^{-1}\frac{1}{\sigma_1^2}Y_0$$
In practice all values are scaled by $\sigma_1^2$ so that only two parameters need to be tuned.   Additionally, due to large sizes the
matrix (indirect) inverses are approximated via the Preconditioned Conjugate Gradients Method using modified incomplete Cholesky factorization to compute the preconditioner factors.  

\begin{figure}[H]
\centering
{\includegraphics[width = 8.5cm]{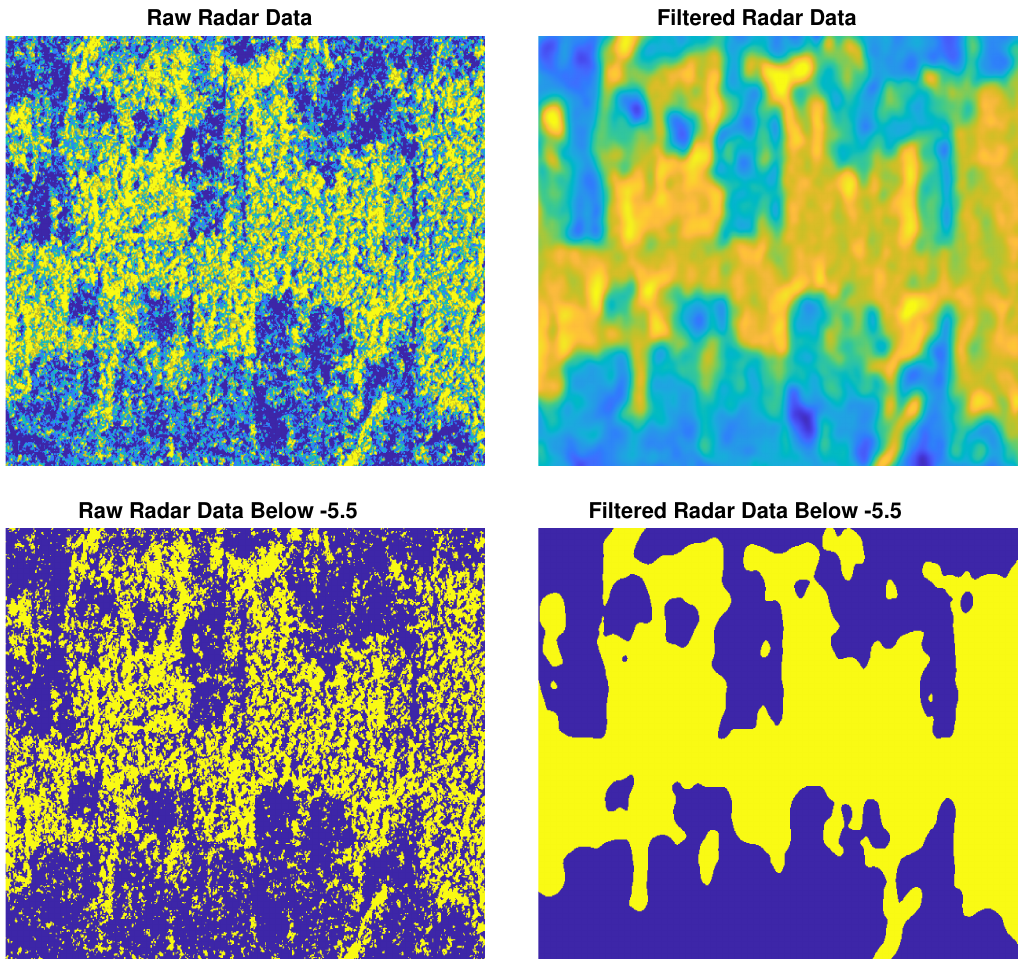}}
    \caption{Top row: Raw radar data and Filtered radar data for day 234 (December 28, 2022), data range -8 to -4 for both plots.  Bottom row:  Portion of the region below (in purple) and above (in yellow) the -5.5 threshold for both the Raw and Filtered radar data for day 234.}
  \label{SS:Fig1}
\end{figure}

\section{Variable FTC}
Prior to validation we picked the optimal threshold and FTC parameters for hybrid, radar, and optical only.  Since hybrid converges to radar only as the number of optical days goes to zero, we can linearly interpolate between their FTC values to get a reasonable value for any number of optical days between 0 and 161, with the endpoints corresponding to the radar only and hybrid data sets used for calibration.  For optical only we don't have parameters selected for the left endpoint, since there is no data to process there.  However, it seems reasonable to assume that the FTC should be proportional to the number of optical days; if we halve the data density, then we should only require half as many days of deforestation in a row to get a confirmation.  To test this, for each number of optical days we randomly selected 30 sets of days of that length, and for each found the FTC that maximized the Overall Metric.  \textbf{Figure} \ref{SS:ftc} shows the mean of those 30 values as well as a linear regression without an intercept fit to the optical data sets of length 20 to 161.  The regression doesn't include the first 19 data points because performance seriously degrades in this region, making the results much noisier, and because the minimum FTC of 1 means that a linear trend cannot continue indefinitely.   As you can see, there is a strong linear relationship here, with $R^2=0.9686$.  This indicates that it is reasonable to set the FTC roughly proportional to the number of optical days.

\begin{figure}[hbt]
\centering
  \includegraphics[scale = 0.45]{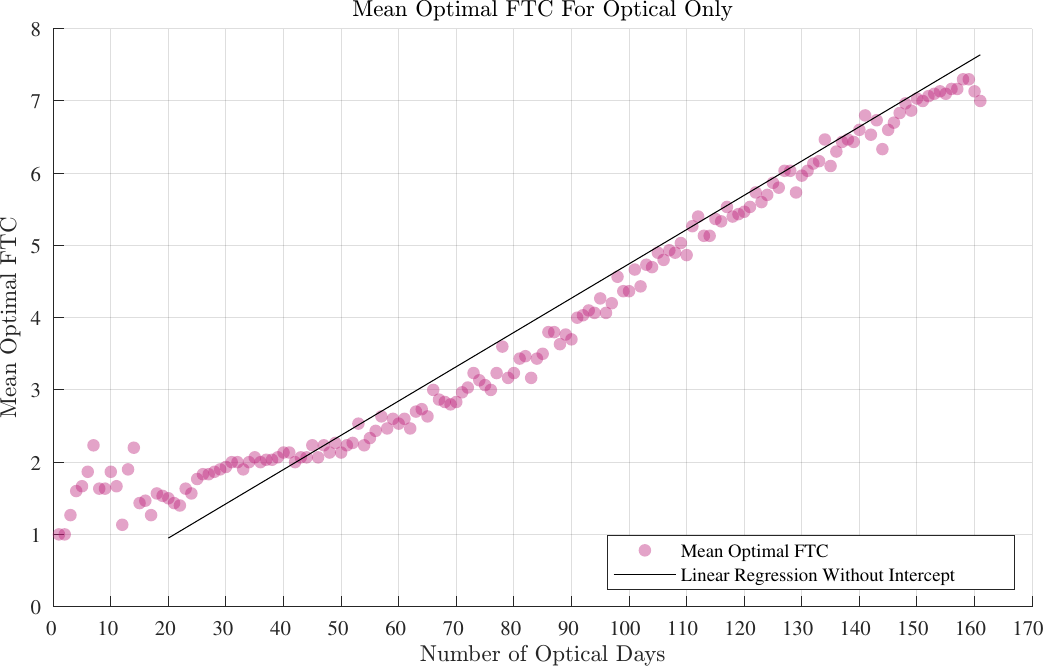}
  \caption{Mean Optimal FTC at each number of optical days, averaged over 30 samples each.}
     \label{SS:ftc}
\end{figure}

We should note that in the results we did not use these mean values or the regression line for selecting FTC values; this graph only provided confirmation of the linear relationship previously assumed.  The difference in performance between variable and fixed FTC can be seen in \textbf{Figure} \ref{SS:fixed}.

\section{Unprocessed Optical Data}
\label{unprocessed}

A natural question for this method is whether the anomaly values have better class separation than the unprocessed optical data.  To test this we hand picked new threshold and FTC parameters for the unprocessed optical data and recomputed our metrics, the results from which can be seen in \textbf{Table} \ref{results:utable}.  Since we don't need to process our optical data we added back in the data corresponding to the period used for constructing the KL expansion.  However, this didn't change any of the metrics, so our computation times are for running without that unneeded extra data. 

\begin{figure}[hbt]
\centering
  \includegraphics[scale = 0.45]{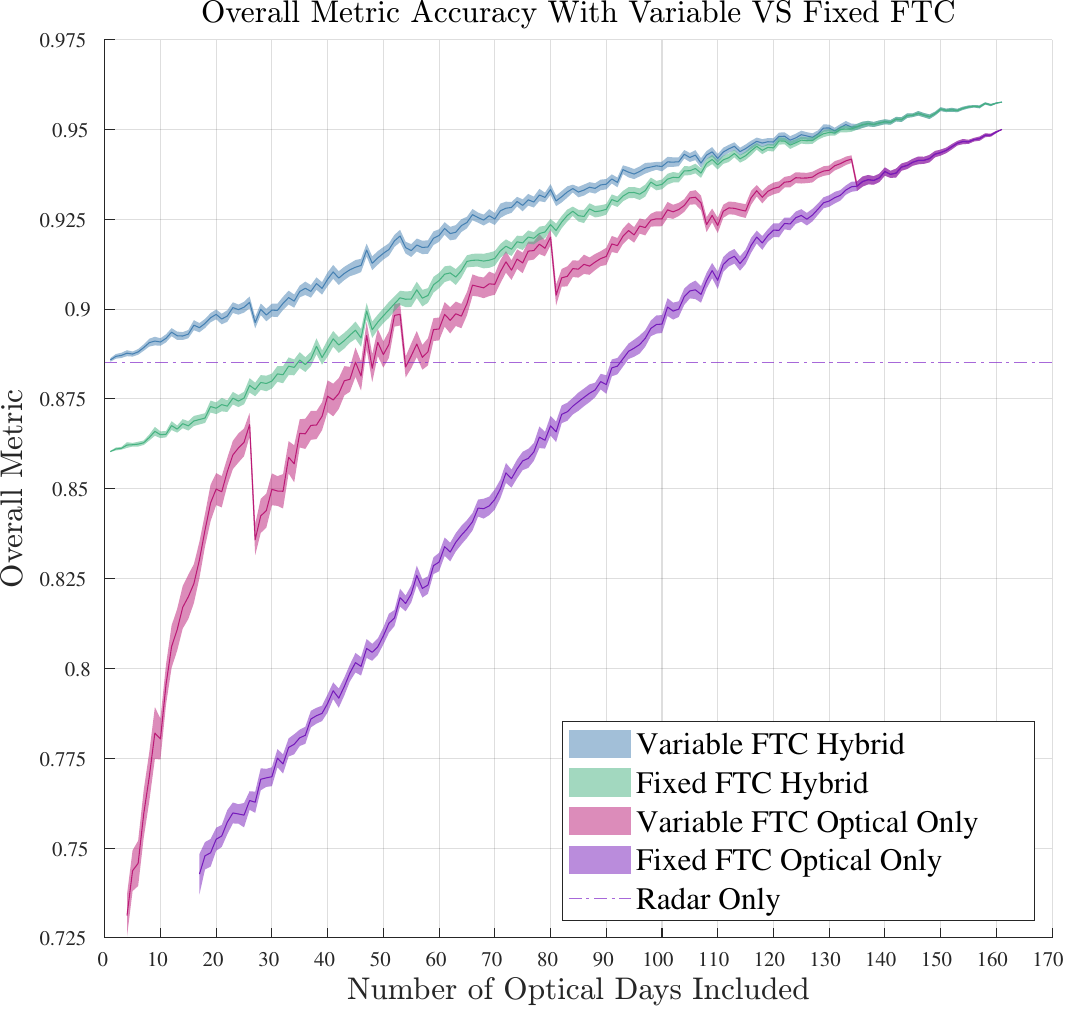}
  \caption{Comparison of Overall Metric for Variable vs Fixed FTC.}
     \label{SS:fixed}
\end{figure}

\textbf{Table} \ref{results:utable} appears to show us that the results for the unprocessed optical data are better, with comparable users accuracy and much better producers accuracy.  However, we can see in \textbf{Figure} \ref{results:hybrid_comp}---which shows the hybrid results using optical anomaly data on the top and unprocessed optical data on the bottom---that using the latter causes false positives in the marshy land at the top right.  This is because wet forest and bare ground both have low EVI values, and are therefore classified together.  The KL expansion is able to better separate these states, although some detections persist due to the radar data, which is also affected by water.  \textbf{Figure} \ref{results:valp} shows us why our metrics don't capture this improvement: our validation points are neither dense enough or concentrated enough to represent these regions.  

\begin{figure}[htb]
\centering 
\includegraphics[width=8.25cm,height=8.25cm]{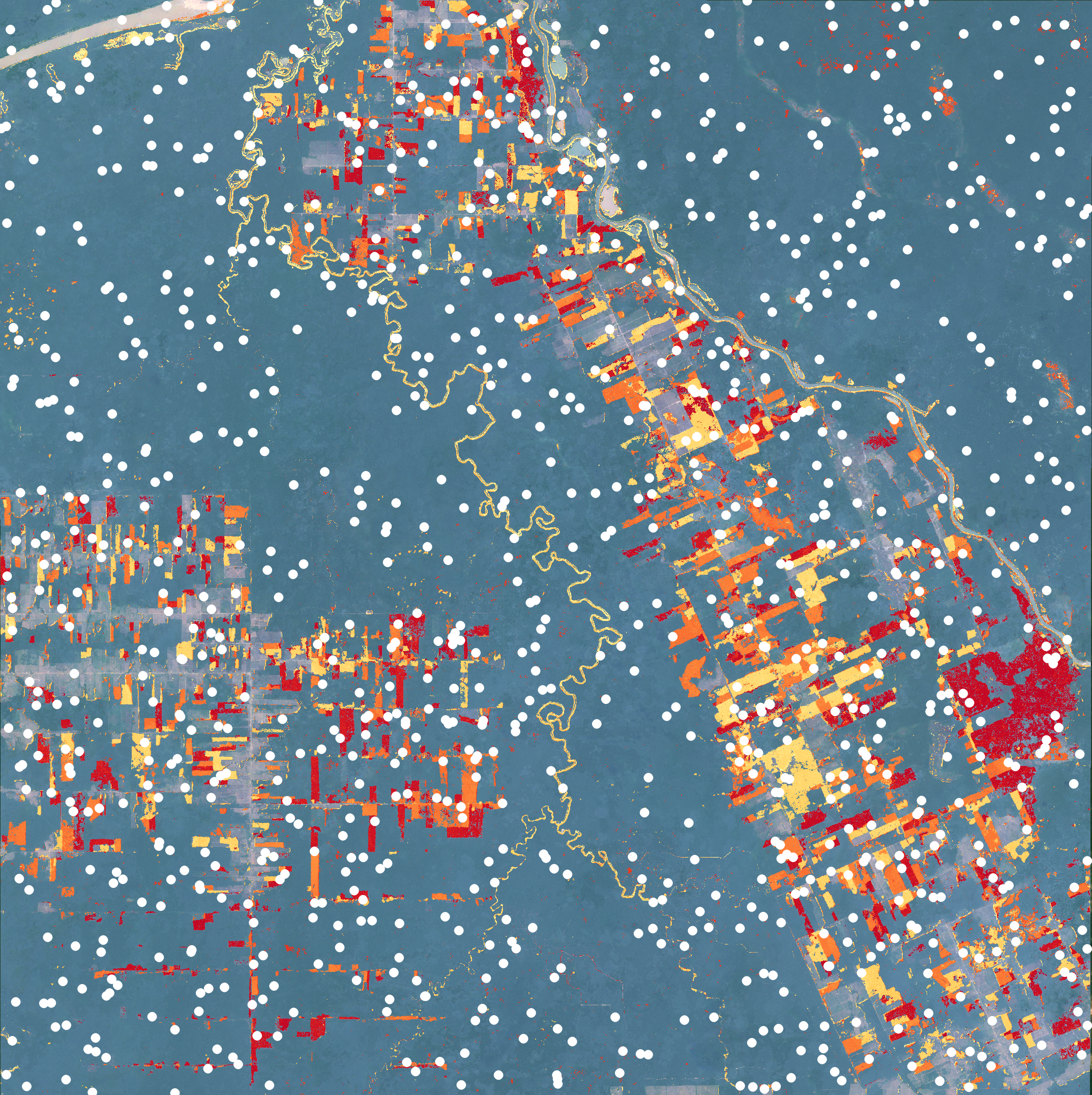}
\caption{Hybrid results using optical anomaly data overlaid with validation point locations.}
    \label{results:valp}
\end{figure}

\begin{table*}[!b]
    \caption{Algorithm accuracy results using optical anomaly data (deForest) vs unprocessed optical data (HMM)}
    \label{results:utable}
    \centering
        \begin{tabular}{>{\centering\arraybackslash}p{4.5cm} 
    c c c c c}
    \toprule
\rowcolor{olive!40}  Algorithm (Data) & Training Days & Overall Acc. & User Acc.  & Producer Acc. & Computational Time (h) \\
\midrule
\rowcolor{blue!20} deForest (Optical) & 71  & 0.931 & 0.823 & 0.718 &  13.95 \\
deForest (Optical + Radar) & 71     & 0.942 & 0.878 & 0.745 & 49.47\\
\rowcolor{blue!20} HMM  (Raw Optical)& NA   & 0.9668 & 0.8565 & 0.9105 & 9.82\\
HMM (Raw Optical + Radar)& NA   & 0.9614 & 0.8391 & 0.889 & 45.34\\
    \end{tabular}
\end{table*}

\begin{figure}[H]
\centering 
\begin{tikzpicture}[scale = 1, every node/.style={scale=1}]
\begin{scope}
\node at (0,0){\includegraphics[width=8.2cm,height=8.2cm]{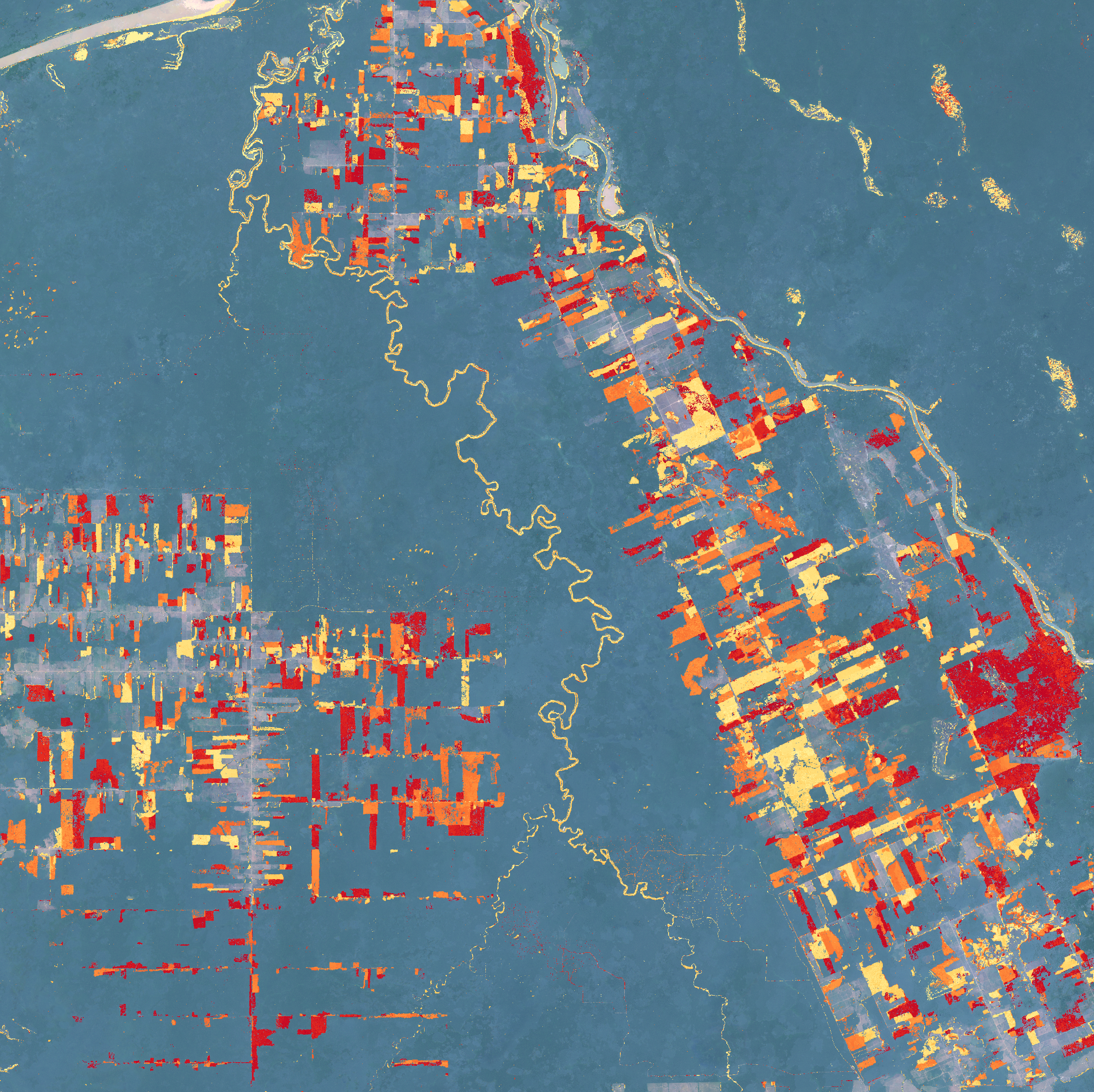}};
\node at (0,8.2){\includegraphics[width=8.2cm,height=8.2cm]{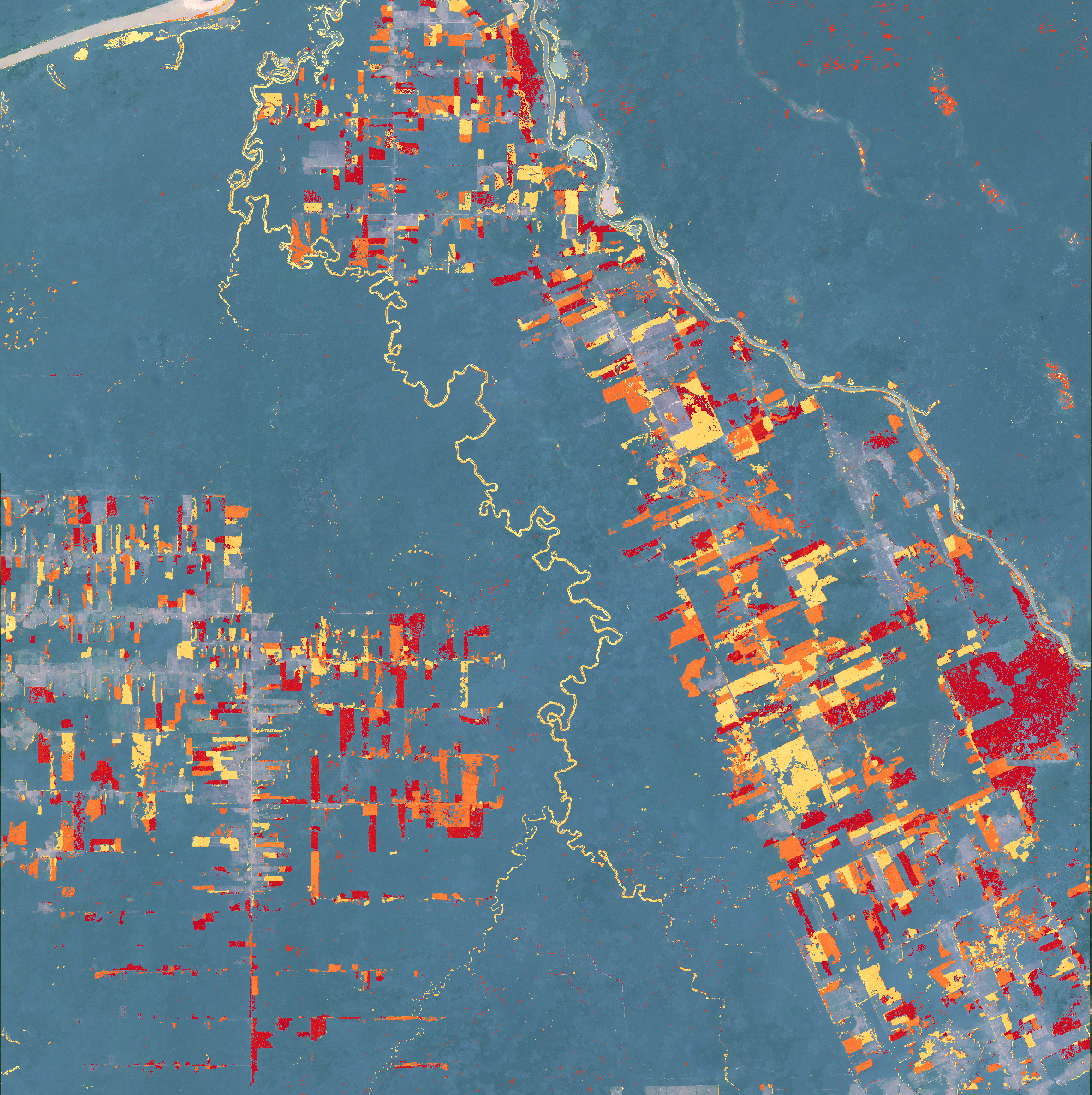}};
\draw[draw = black] (1.3,10.775) rectangle ++(1.3,1.52);
\draw[draw = blue] (1.75,9.8) rectangle ++(0.25,0.4);
\draw[draw = blue] (2.775,10.7) rectangle ++(0.275,0.2);
\draw[draw = blue] (2.8,11) rectangle ++(0.4,0.86);
\draw[draw = blue] (3.225,10.65) rectangle ++(0.36,0.36);
\draw[draw = blue] (3.825,10.4) rectangle ++(0.25,0.25);
\draw[draw = blue] (3.65,9.175) rectangle ++(0.4,0.49);

\draw[draw = black] (1.3,2.575) rectangle ++(1.3,1.52);
\draw[draw = blue] (1.75,1.6) rectangle ++(0.25,0.4);
\draw[draw = blue] (2.775,2.5) rectangle ++(0.275,0.2);
\draw[draw = blue] (2.8,2.8) rectangle ++(0.4,0.86);
\draw[draw = blue] (3.225,2.45) rectangle ++(0.36,0.36);
\draw[draw = blue] (3.825,2.2) rectangle ++(0.25,0.25);
\draw[draw = blue] (3.65,0.975) rectangle ++(0.4,0.49);
\end{scope}
\end{tikzpicture}
\caption{Top: Hybrid results using optical anomaly data.  Bottom: Hybrid results using unprocessed optical data.  The anomaly data separates wet forest and deforested land better than the unprocessed optical data, since both have low EVI values.  This can be seen in the marshy regions at the top right, where most of the detections in the black rectangle are removed and the detections in the blue regions are reduced when using the optical anomaly data.}
    \label{results:hybrid_comp}
\end{figure}

Part of the reason for this is that the stratified sampling used to pick the validation points was based on the results using the optical anomaly data, so those marshy regions were far less likely to be selected.  While it would be possible to resample the validation points with those regions in mind, from a more conceptual point of view we can see that the anomaly data is still preferable.  Although the unprocessed optical data has good class separation between dry bare ground and dry forest, and the regions of false positives are relatively small for this example, using the unprocessed data would result in significantly worse performance for regions with heavy rainfall/flooding or mostly marshy land, in which case the anomaly data would clearly be the superior choice.

\section{Missing Data}
\label{Supp:missing}
The descriptions of the KL expansion in \textbf{Section} 2 and \textbf{Section} \ref{discrete} assume that there is no missing data in our observations, which is not the case.  In fact, about two thirds of the data is removed by the cloud mask.  This presents a problem both for computing our projections and approximating the covariance matrix.  

For the former the covariance matrix is approximated for a given time slice using only the pixels where there is data in that time slice, meaning that if the time slice has all but three pixels covered by cloud then the training data will be restricted to those three pixels, and the covariance matrix approximation will be 3 by 3.  The projection is then applied to the vector containing just those three pixels.  This means that the KL expansion must be computed for each time slice.

For the latter we start by restricting the training data to the pixels where there is data in the time slice being projected, as just described.  This almost certainly doesn't restrict the training data to pixels where training data isn't missing, so the contributions of those pixels must be removed.  First, the time slice means for the training data are calculated using just the pixels with data.  Second, pixels with missing data are set to zero.  Third, the means are subtracted from the data, and the data is multiplied by its transpose.  This procedure means that for the covariance matrix element corresponding to the mean of the products of pixels $m$ and $n$ across the time slices, if at least one of pixels $m$ or $n$ has missing data for a given time slice, then that time slice is effectively left out.  Normally the product of the data with its transpose would be divided by the number of time slices in the training data to get the mean, but instead each element is divided by the number of time slices not left out for that element due to missing data.

The problem with this method is that different elements in the covariance matrix approximation can be the mean from very different numbers of time slices, making the approximation error variance not uniform.  Instead, we consider a number of ways of filling in this missing data, and compare the performance against the baseline just described.  Since it would be prohibitively time consuming to pick new threshold values and the FTC by hand, and we want to isolate the improvement to the anomaly values, we optimize the overall metric for the optical only results, checking all combinations of the optical threshold and FTC, from 0.2 to 1 by 0.05 and 2 to 10 by 1 respectively.        

The first result is for Cube Mean, which replaces missing data with the mean of the non-missing values in a cube centered at the missing data.  The cube is truncated so as to not extend out of the training data, and if there are no non-missing values in the cube then the missing data is replaced with zero.  The best performance increase is 0.63 percent for a cube with side length 5.

\begin{figure}[htb]
\centering 
  \includegraphics[scale = 0.43]{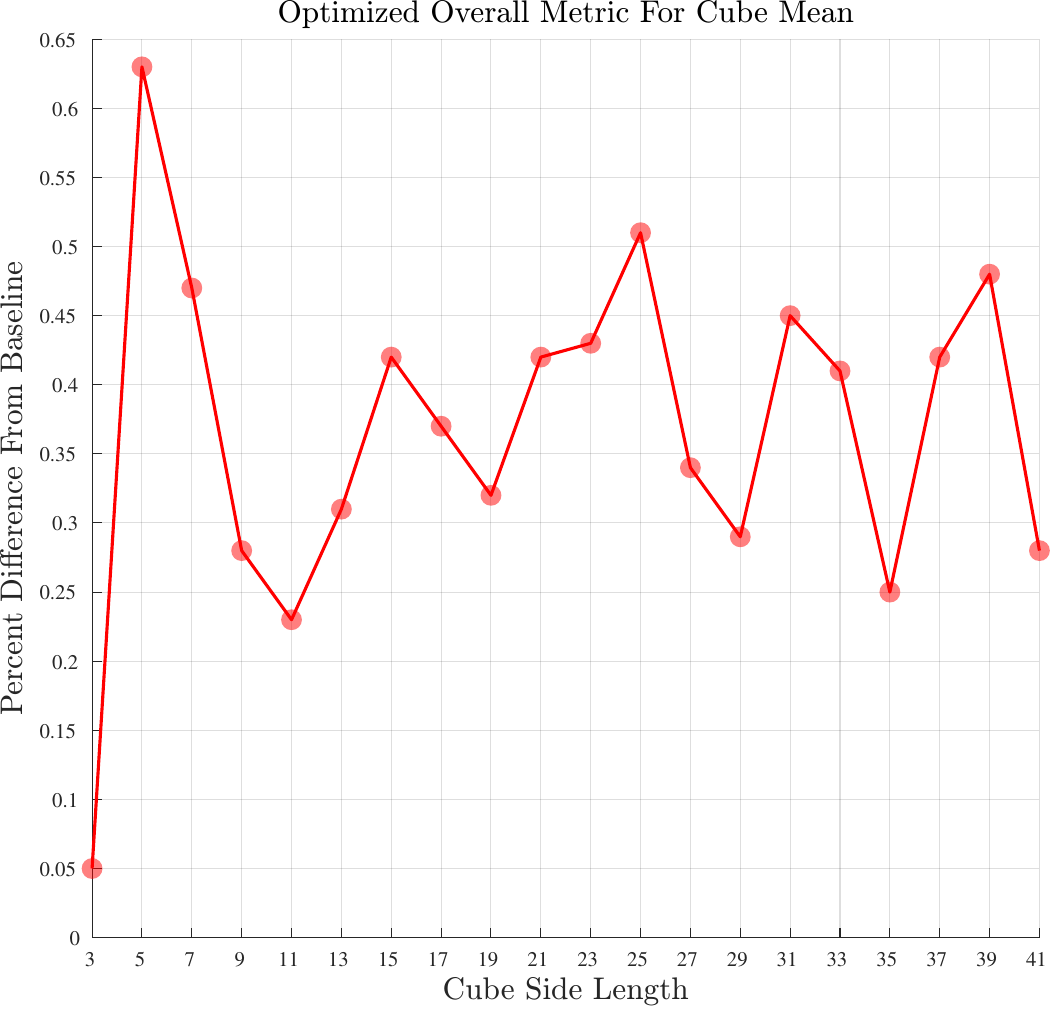}
    \caption{Percent difference in optimized overall metric for optical only results using Cube Mean, which replaces missing data with the mean of the values in a cube centered at the missing data.  If there are no values in the cube then the missing data is replaced with zero.}
    \label{results:cube}
\end{figure}

We also tried a number of nearest neighbor methods, where the missing data is replaced using either the mean of some number of neighbors or some other value, such as with zero for the cube mean.  We also include the results from filling all missing data using these values.  In the same order as the legend we have that:
\begin{itemize}

\item The baseline, Fill NaN, leaves missing data as missing. 
\item Fill 0 replaces all missing data with zero.  
\item Fill Global Mean replaces all missing data with the mean of all non-missing data.  
\item Fill Slice Mean replaces missing data with the mean of all non-missing data in its time slice. 
\item Fill Cube5 Mean replaces missing data with the mean in a cube with side length 5 centered at the missing data.  This is the optimal choice from \textbf{Figure} \ref{results:cube}.
\item Time replaces missing data with the mean of the k nearest neighbors in time, or 0 if there are no neighbors.
\item Space Fill 0 replaces missing data with the mean of k nearest neighbors in a cube of side length 3, or 0 if there are no neighbors.  Since there are large regions of missing data covered by clouds, it is likely that a square in space would need to be very large to contain any non-missing data, however it is also likely that a pixel covered by cloud won't be in the next time slice, so instead of just using a square in space we also extend up and down in time by one.
\item Space Fill NaN is Space Fill 0, except it leaves missing data as missing instead of replacing with 0.
\item Space Fill Time 5-Nearest is Space Fill 0 with the mean of the 5 nearest neighbors in time instead of 0.
\item Space Fill Global Mean is Space Fill 0 with the global mean instead of 0.
\item Space Fill Slice Mean is Space Fill 0 with the time slice mean for a given pixel instead of 0.
\end{itemize}


\begin{figure}[H]
\centering 
  \includegraphics[scale = 0.43]{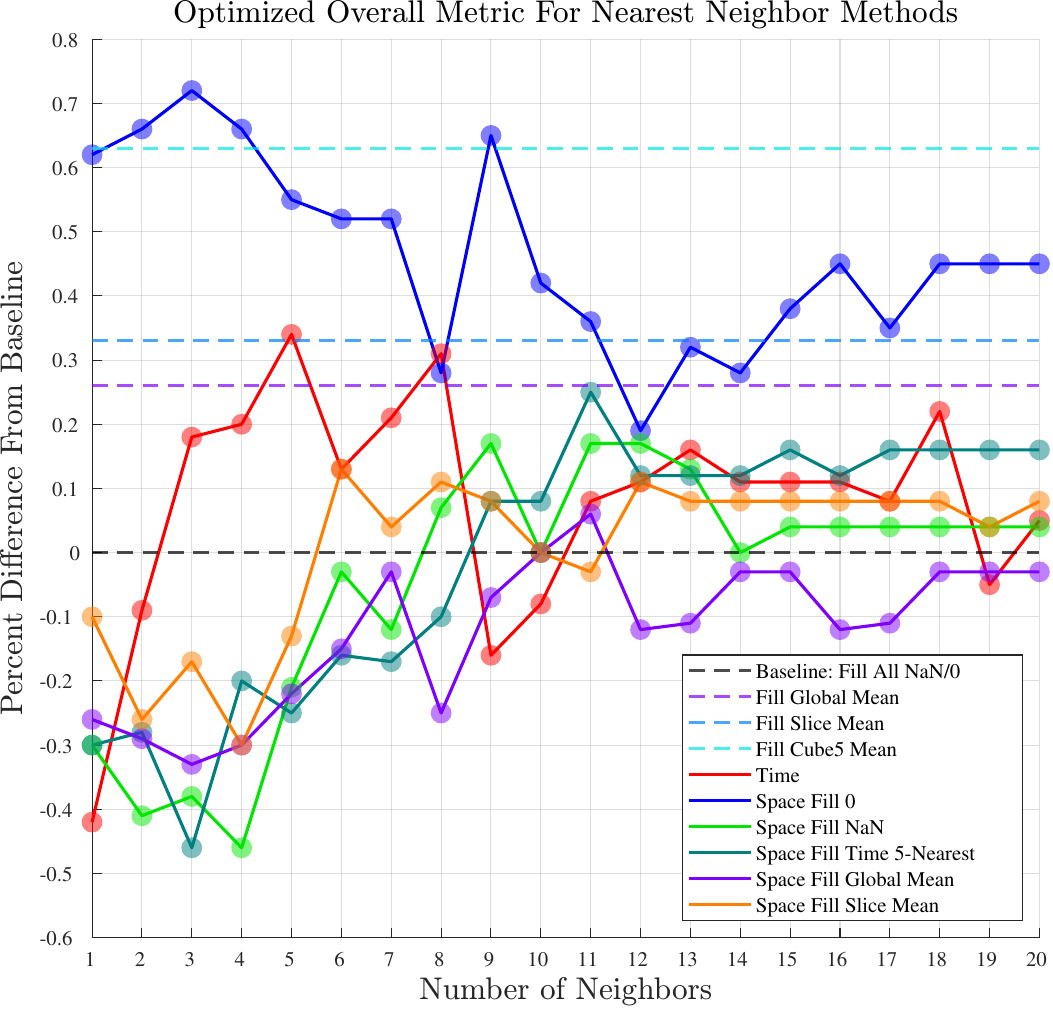}
    \caption{Percent difference in optimized overall metric for optical only results using various nearest neighbor methods, which are described above.}
    \label{results:nn}
\end{figure}

\begin{figure}[H]
\centering 
  \includegraphics[scale = 0.43]{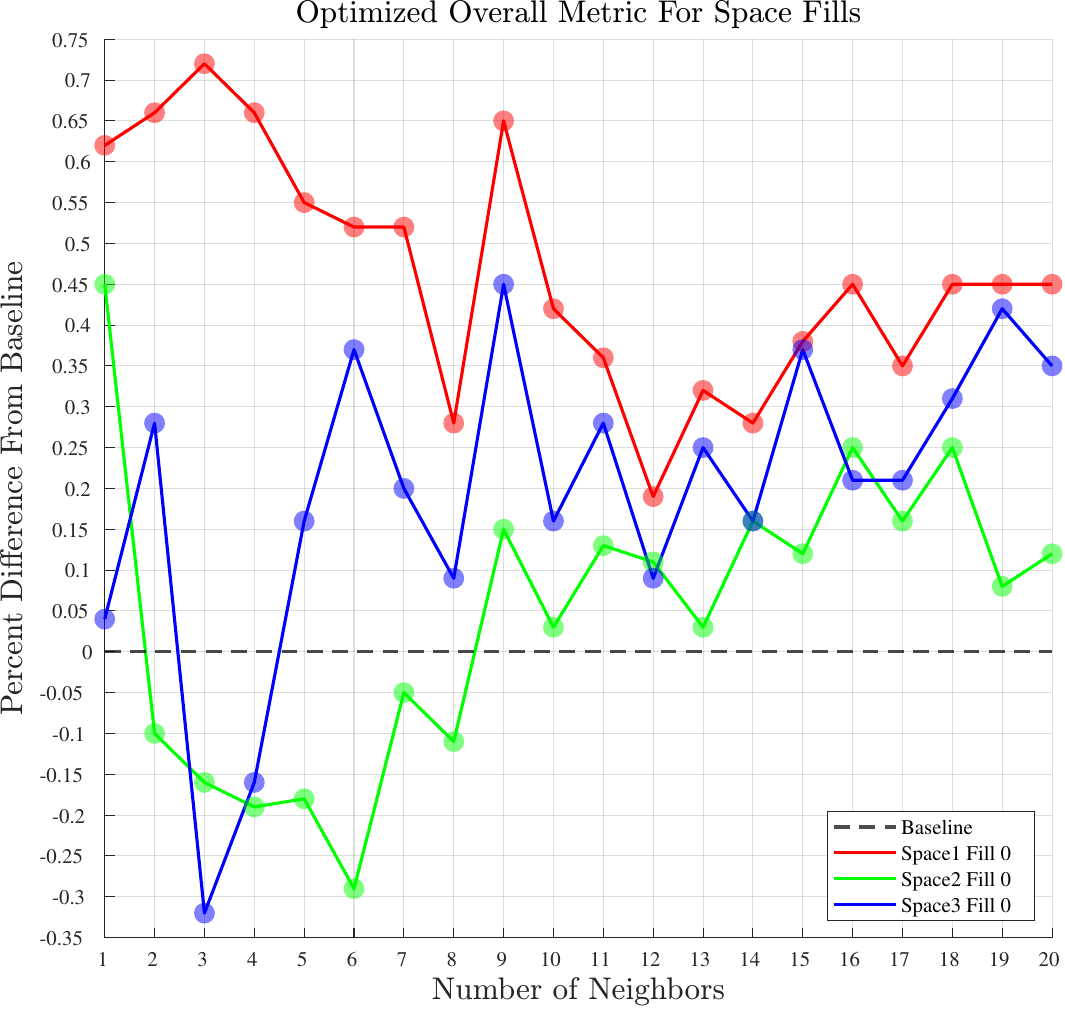}
    \caption{Percent difference in optimized overall metric for optical only results using Space1 Fill 0, Space2 Fill 0, and Space3 Fill 0. SpaceN Fill 0 replaces missing data with the mean of the k nearest neighbors in a rectangular prism centered at the missing data that extends out in space by N in both directions and up and down in time by 1, or replaces with zero if there are no neighbors in this region.}
    \label{results:space}
\end{figure}
The best result here, as can be seen in \textbf{Figure} \ref{results:nn}, is for Space Fill 0 with 3 neighbors, which gives a performance increase of 0.72 percent.  Strangely, the combination of Space Fill with Global Mean and Slice Mean underperform Space Fill 0 even though Fill Global Mean and Fill Slice Mean both overperform Fill 0.

Given that Space Fill 0 appears to be the best choice when only extending out by 1 in space, we also consider the results with larger regions in space, which can be seen in \textbf{Figure} \ref{results:space}.  SpaceN extends out by N in space to cover a square with side length $2N+1$, and still extends up and down in time by 1.  However, these larger regions do not improve performance, with the overall best choice still being Space1 Fill 0 using 3 neighbors.

\section{Additional Accuracy results}
\label{Supp:additional}

We present additional accuracy measures that are useful:

\begin{itemize}
    \item Balanced accuracy: This measure is useful in evaluating the classification performance for unbalanced datasets. Similarly to the overall metric approach, in \textbf{Figure} \ref{results:BA} the hybrid approach is significantly more robust, in particular, when the number of optical samples becomes low.  Furthermore, the accuracy variance is significantly lower for the hybrid approach, making it a more reliable classifier.
    \item F1-score: This measure is used to balance precision and recall. From \textbf{Figure} \ref{results:F1} under this measure, the hybrid method is significantly better than optical only.
    \item Users and Producers (Stable and Deforest) accuracies: \textbf{Figures} \ref{results:ps} through \ref{results:cd} show these four metrics, where the hybrid method is superior or comparable.
\end{itemize}
\newpage
\begin{figure*}[!htb]
\centering 
  \includegraphics[scale = 0.42]{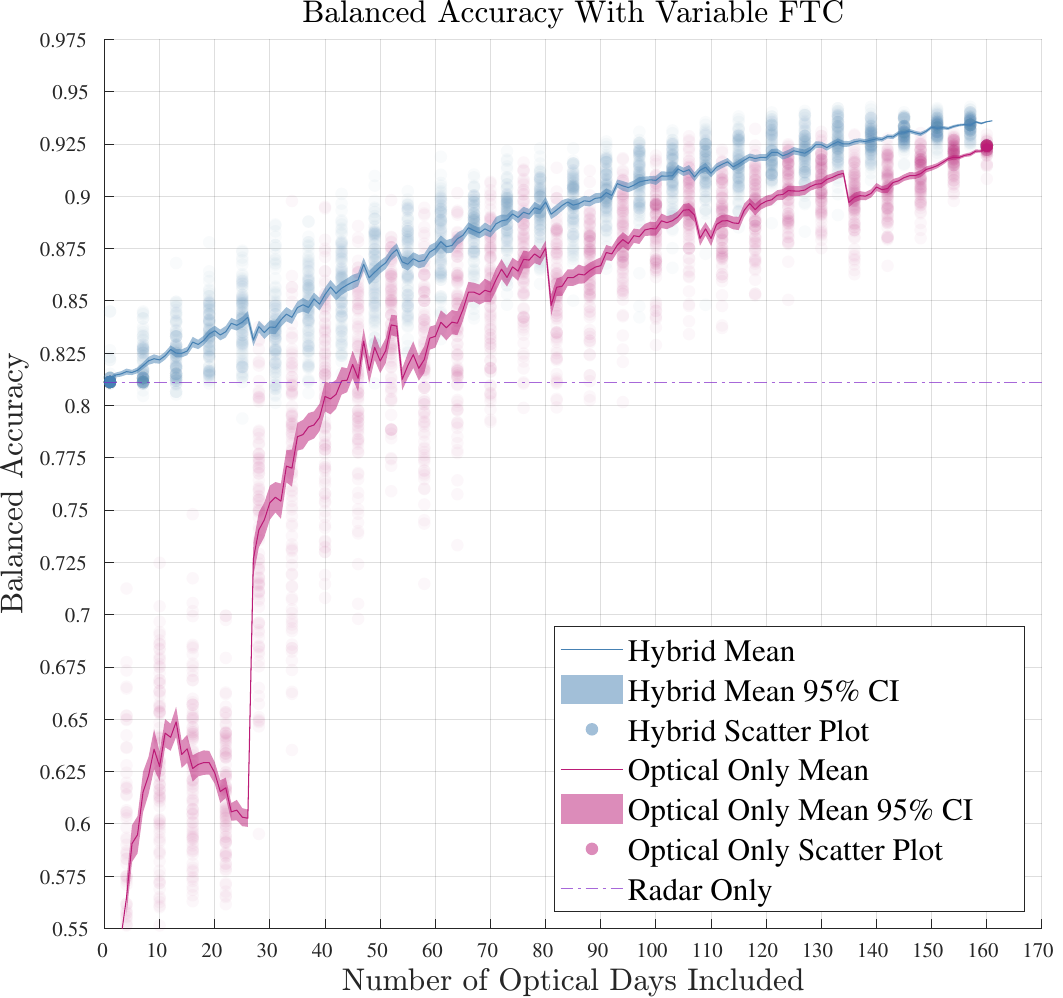}
  \includegraphics[scale = 0.42]{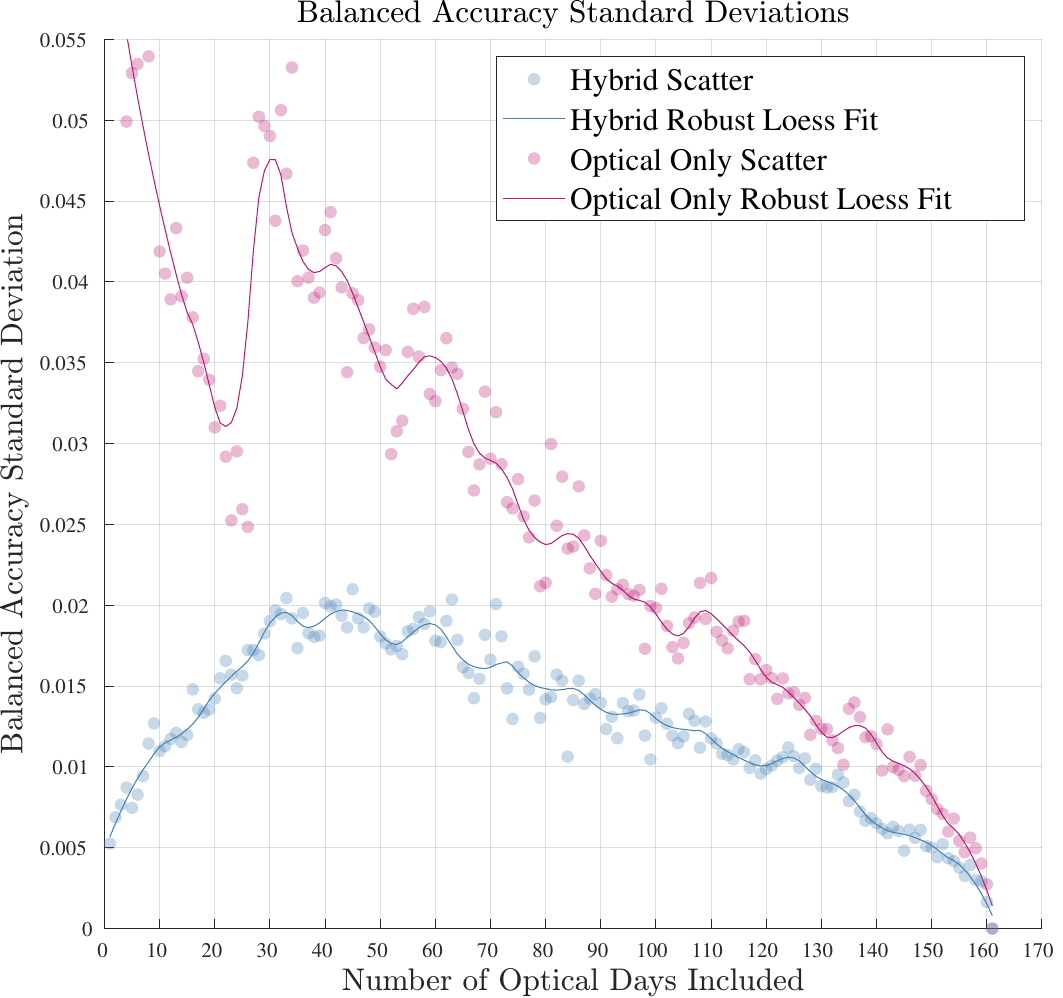}
    \caption{Balanced accuracy and standard deviations using variable FTC for hybrid and optical only.}
    \label{results:BA}
\end{figure*}

\begin{figure*}[!htb]
\centering 
  \includegraphics[scale = 0.42]{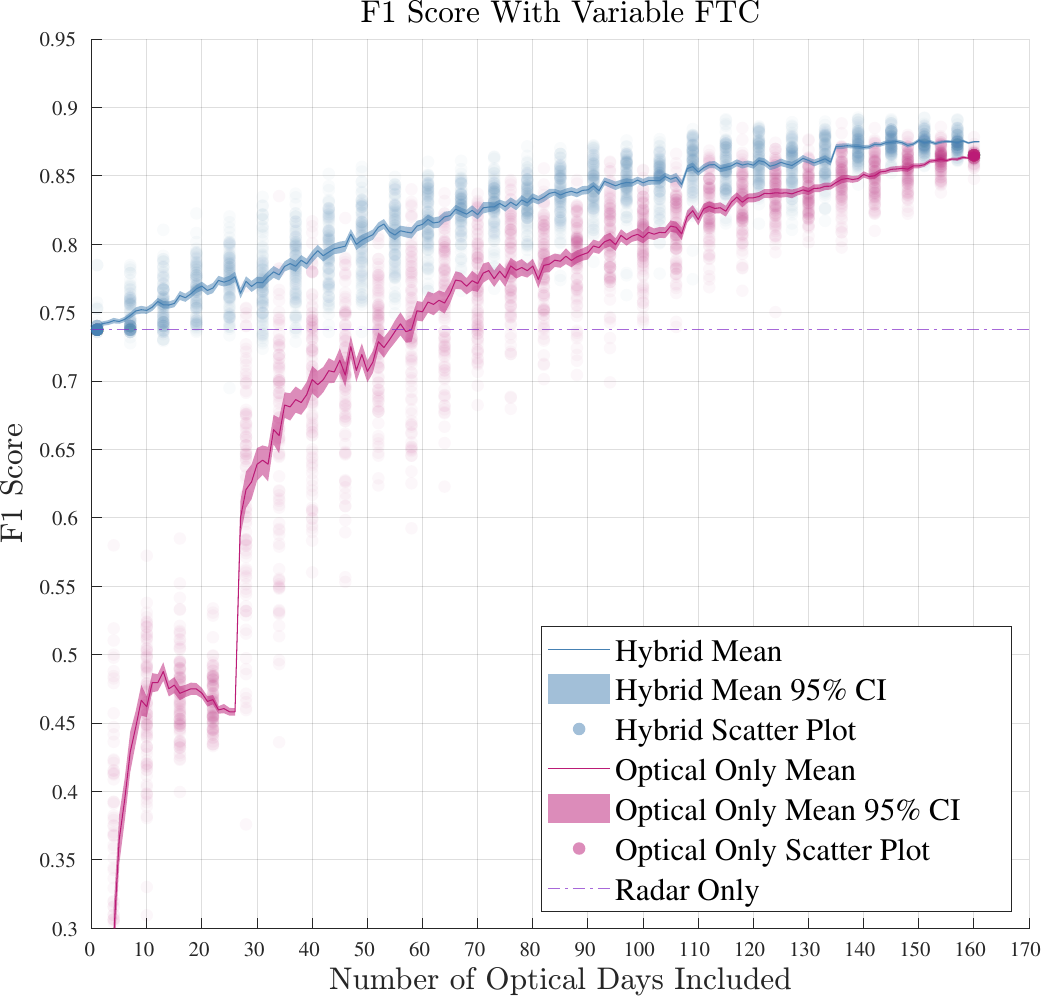}
  \includegraphics[scale = 0.42]{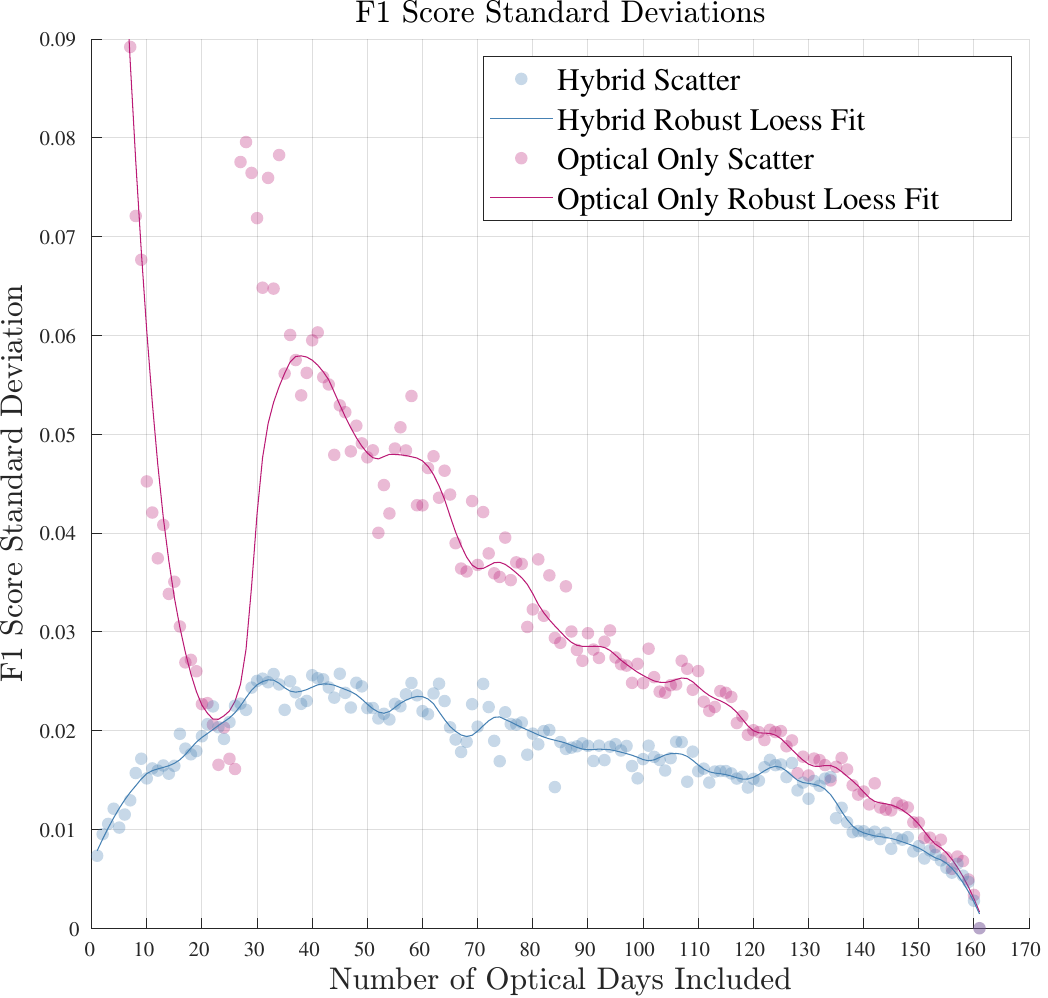}
    \caption{F1 Score accuracy and standard deviations using variable FTC for hybrid and optical only.}
    \label{results:F1}
\end{figure*}

\begin{figure*}[!htb]
\centering 
  \includegraphics[scale = 0.42]{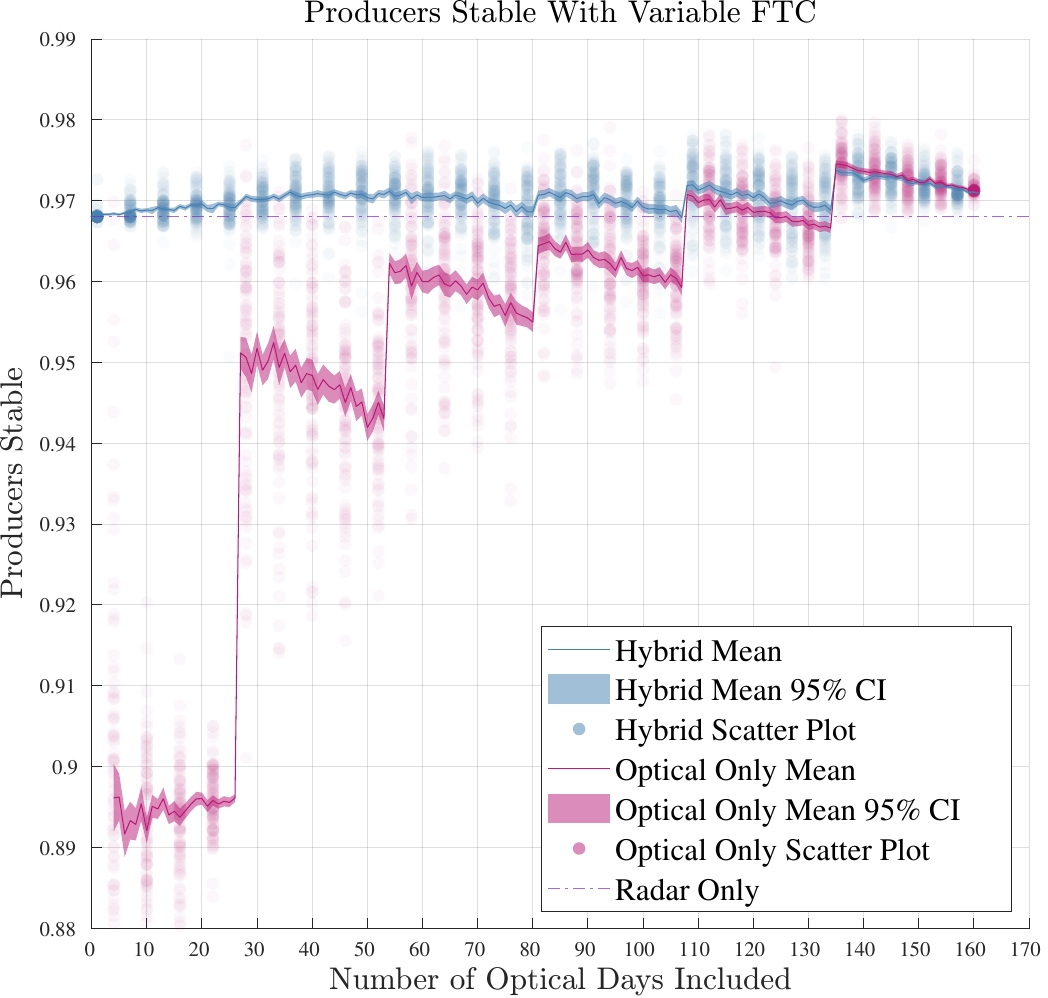}
     \includegraphics[scale = 0.42]{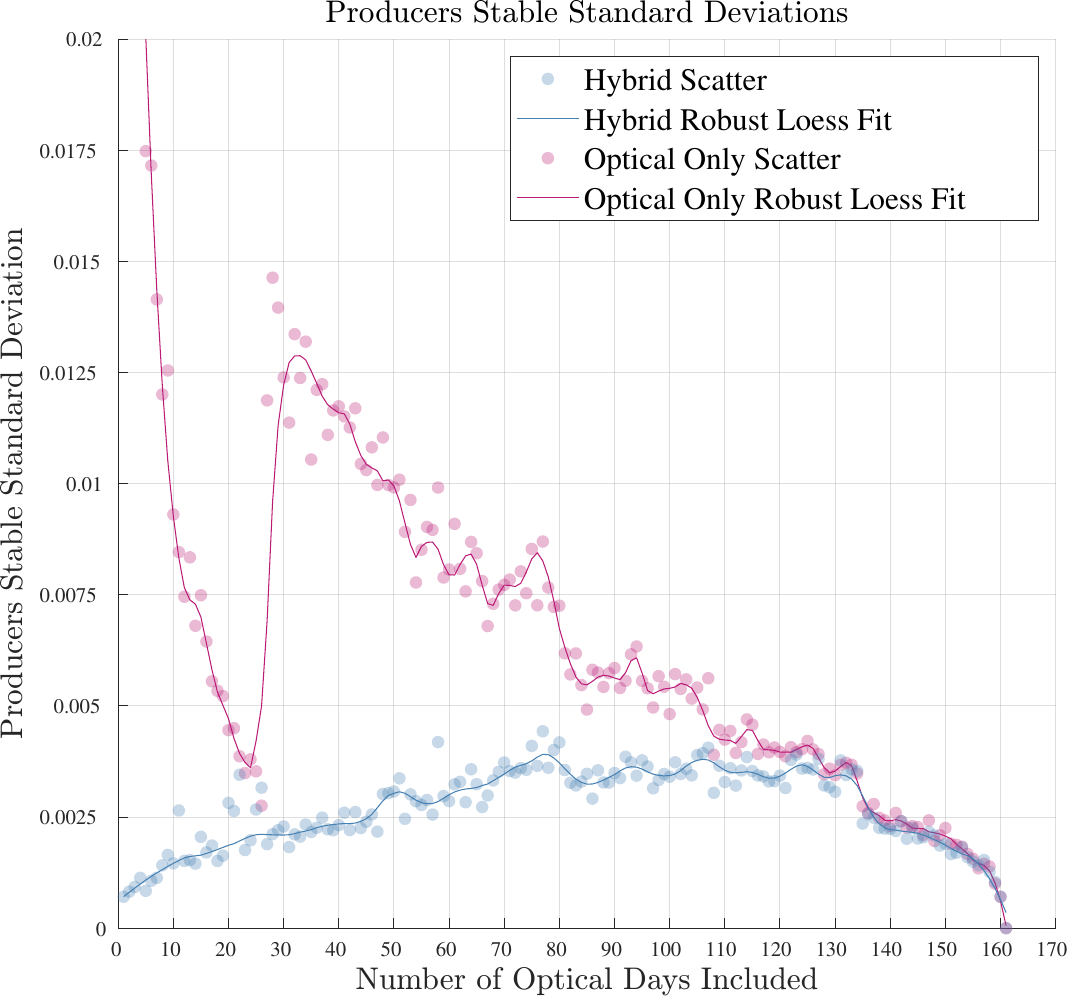}
    \caption{Producers Stable and standard deviations using variable FTC for hybrid and optical only.}
    \label{results:ps}
\end{figure*}

\begin{figure*}[!htb]
\centering 
  \includegraphics[scale = 0.42]{figures/validation/Producers_Deforest.pdf}
     \includegraphics[scale = 0.42]{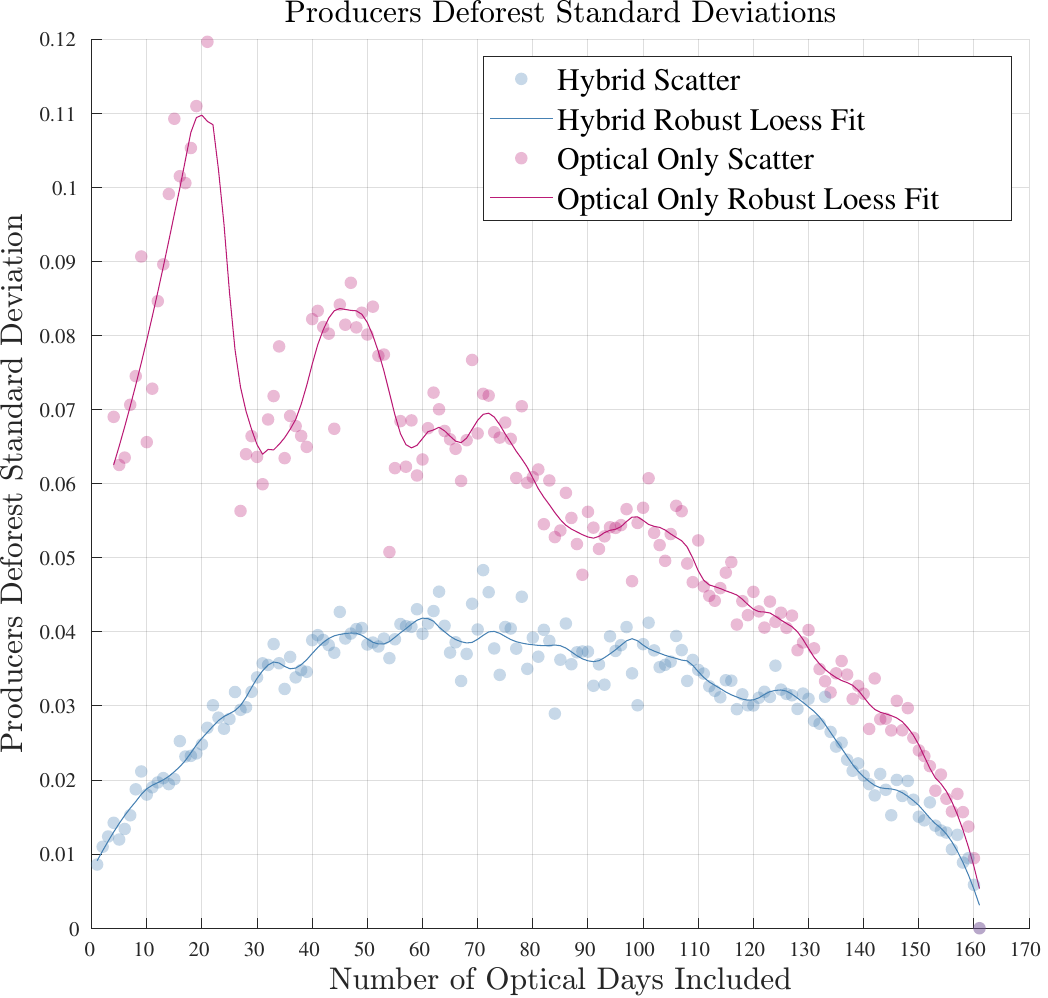}
    \caption{Producers Deforest and standard deviations using variable FTC for hybrid and optical only.}
    \label{results:pd}
\end{figure*}

\begin{figure*}[!htb]
\centering 
  \includegraphics[scale = 0.42]{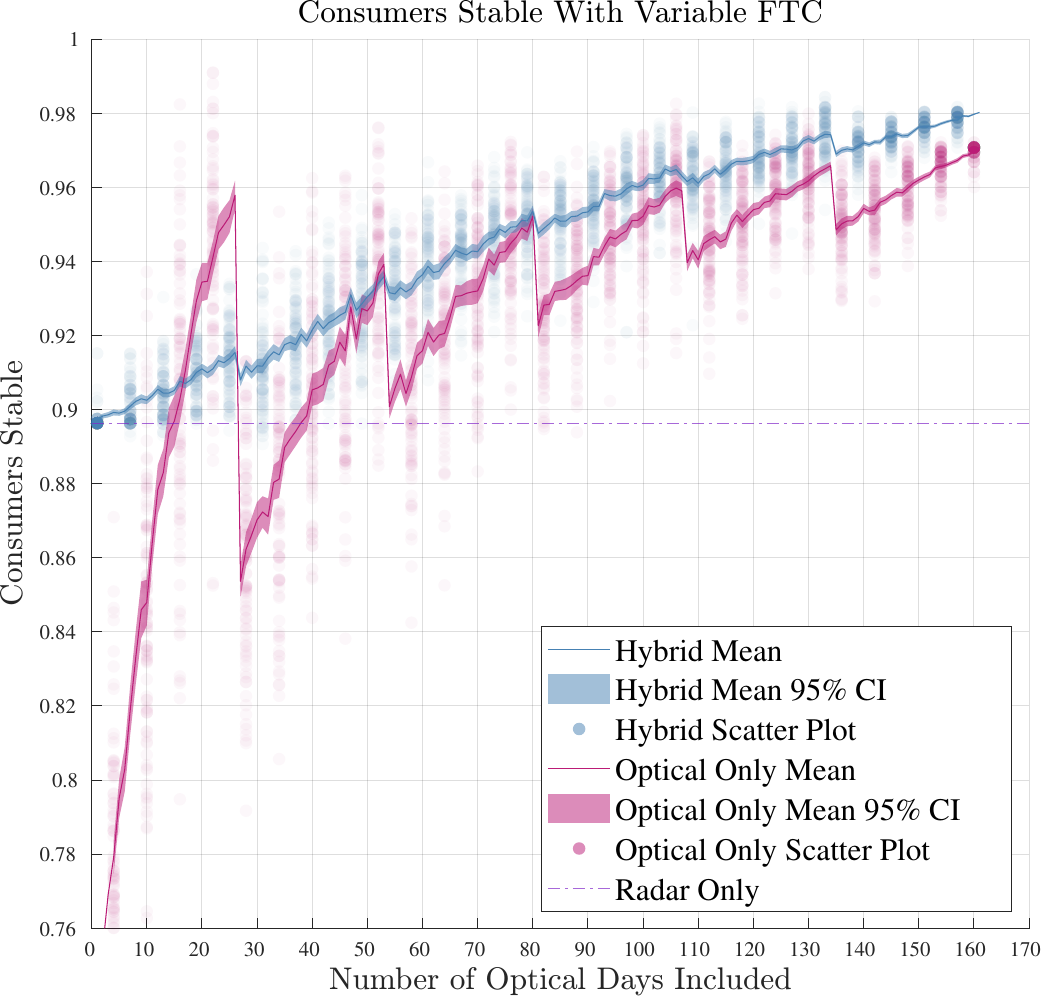}
     \includegraphics[scale = 0.42]{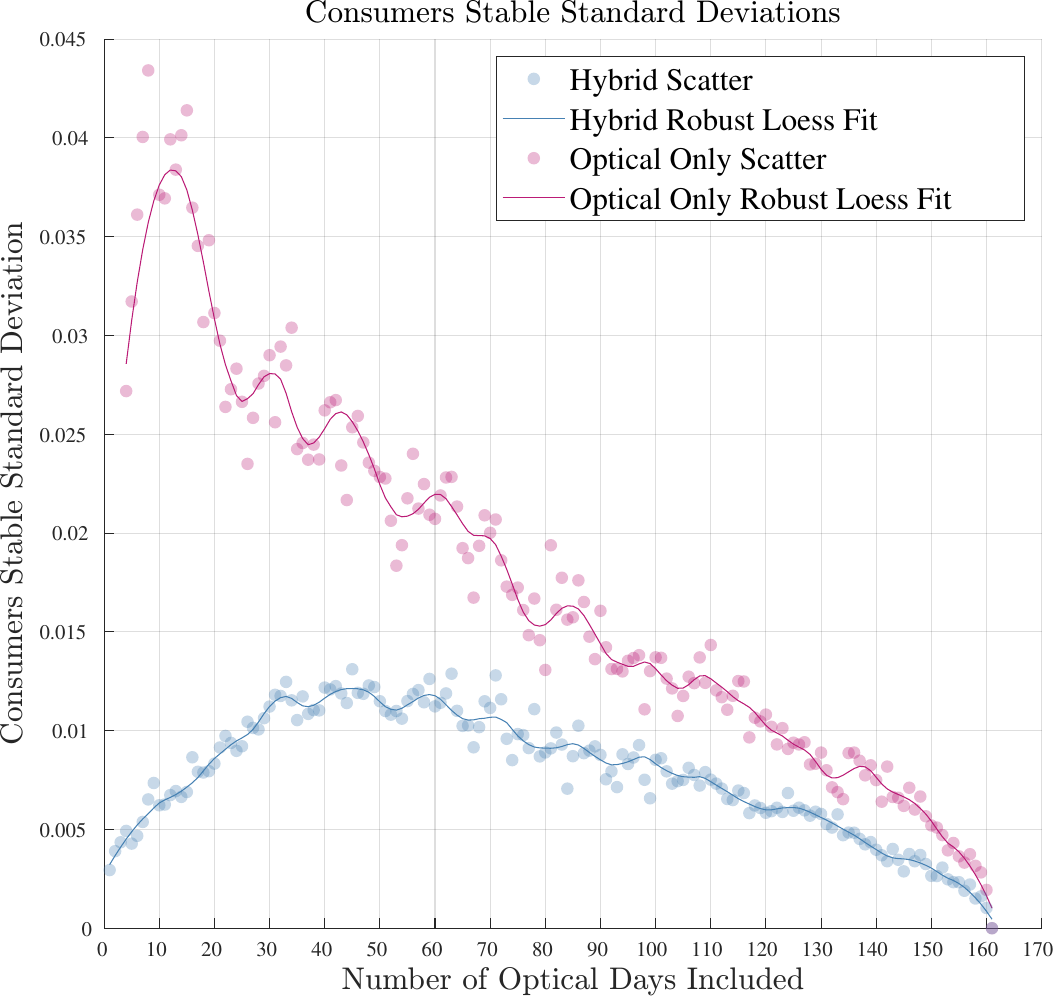}
    \caption{Consumers Stable and standard deviations using variable FTC for hybrid and optical only.}
    \label{results:cs}
\end{figure*}

\begin{figure*}[!htb]
\centering 
  \includegraphics[scale = 0.42]{figures/validation/Consumers_Deforest.pdf}
     \includegraphics[scale = 0.42]{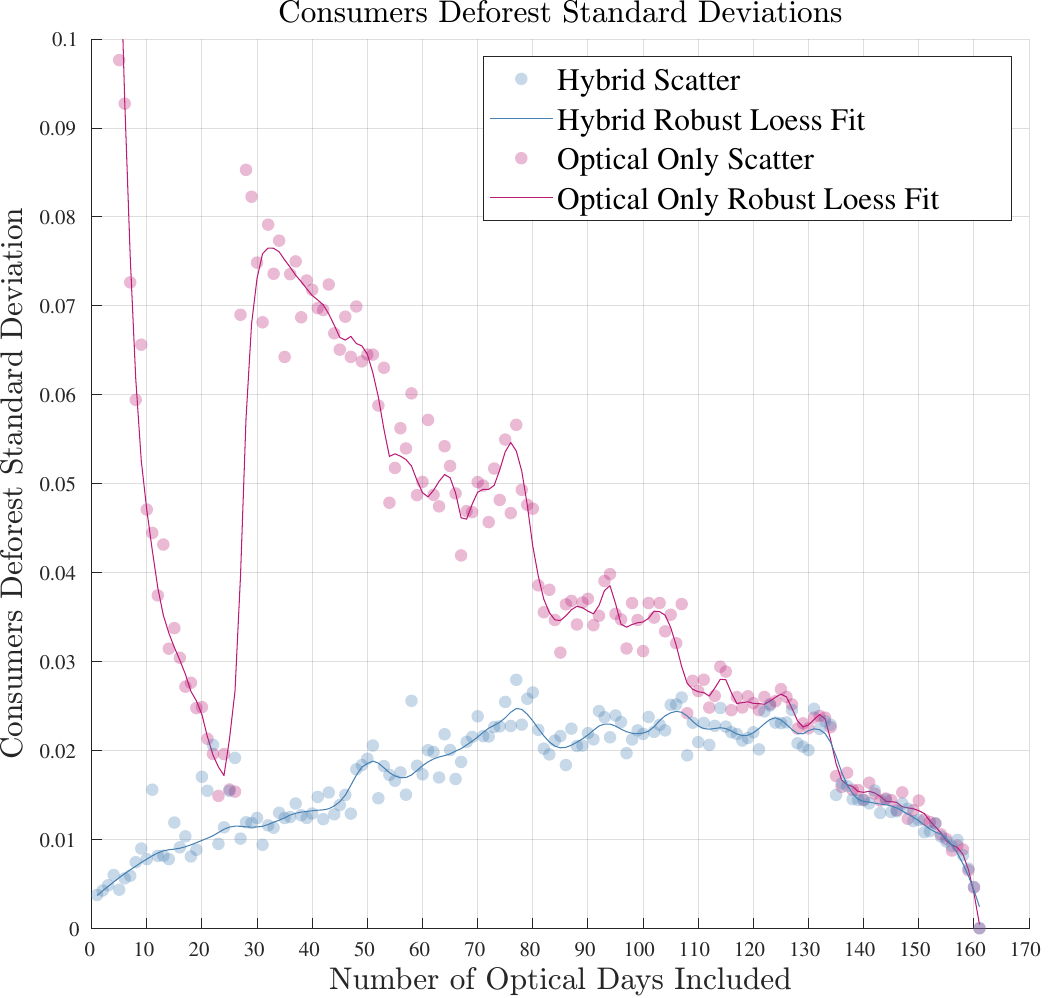}
    \caption{Consumers Deforest and standard deviations using variable FTC for hybrid and optical only.}
    \label{results:cd}
\end{figure*}


\end{document}